\setlist[enumerate]{leftmargin=2em}
\setlist[itemize]{leftmargin=2em}
\def\tv{\tilde{v}}
\def\tv{{\textsc{tv}}}
\DeclareFontFamily{OMX}{MnSymbolE}{}
\DeclareSymbolFont{MnLargeSymbols}{OMX}{MnSymbolE}{m}{n}
\DeclareFontShape{OMX}{MnSymbolE}{m}{n}{
    <-6>  MnSymbolE5
   <6-7>  MnSymbolE6
   <7-8>  MnSymbolE7
   <8-9>  MnSymbolE8
   <9-10> MnSymbolE9
  <10-12> MnSymbolE10
  <12->   MnSymbolE12
}{}
\DeclareFontShape{OMX}{MnSymbolE}{b}{n}{
    <-6>  MnSymbolE-Bold5
   <6-7>  MnSymbolE-Bold6
   <7-8>  MnSymbolE-Bold7
   <8-9>  MnSymbolE-Bold8
   <9-10> MnSymbolE-Bold9
  <10-12> MnSymbolE-Bold10
  <12->   MnSymbolE-Bold12
}{}
\let\llangle\@undefined
\let\rrangle\@undefined
\DeclareMathDelimiter{\llangle}{\mathopen}{MnLargeSymbols}{'164}{MnLargeSymbols}{'164}
\DeclareMathDelimiter{\rrangle}{\mathclose}{MnLargeSymbols}{'171}{MnLargeSymbols}{'171}
\newenvironment{proofof}[1]{\begin{proof}[Proof of {#1}]}{\end{proof}}
\def\tell{\widetilde\ell}
\def\Li{\textup{Li}}
\renewcommand{\tv}{\textsc{tv}}
\renewcommand{\Pr}{\textsc{Pr}}
\newcommand{\Ex}{\textsc{Ex}}
\def\wref{w_{\scriptstyle{\textup{ref}}}}
\def\uref{u_{\scriptstyle{\textup{ref}}}}
\def\tref{t_{\scriptstyle{\textup{ref}}}}
\def\td{G}
\def\cE{\mathcal{E}}
\def\hcR{\widehat{\cR}}
\def\yhat{{\hat y}}
\def\sgn{\textup{sgn}}
\title{\textbf{Stochastic linear optimization never overfits\\{}with quadratically-bounded losses on general data}}
\author{Matus Telgarsky \href{mailto:mjt@illinois.edu}{\texttt{<mjt@illinois.edu>}}}
\date{}
\begin{document}

\maketitle

\begin{abstract}This work provides test error bounds for iterative fixed point methods on linear
  predictors~---~specifically, stochastic and batch mirror descent (MD),
  and stochastic temporal difference learning (TD)~---~with
  two core contributions:
  (a)
  a single proof technique which gives high probability guarantees despite the absence of projections,
  regularization, or any equivalents, even when optima have large or infinite norm,
  for quadratically-bounded losses (e.g., providing unified treatment of squared and logistic losses);
  (b) locally-adapted rates which depend not on global problem structure (such as condition numbers
  and maximum margins),
  but rather on properties of low norm predictors which may suffer some small excess test error.
  The proof technique is an elementary and versatile coupling argument,
  and is demonstrated here in the following settings:
  stochastic MD under realizability;
  stochastic MD for general Markov data;
  batch MD for general IID data;
  stochastic MD on heavy-tailed data (still without projections);
  stochastic TD on Markov chains (all prior stochastic TD bounds are in
  expectation).
\end{abstract}

\section{Introduction}

This work studies iterative fixed point methods resembling (stochastic)
gradient descent, specifically
\begin{align}
  \textup{gradient descent (GD)},
  &&
  w_{i+1}
  &:= w_i - \eta g_{i+1},
  \label{eq:gd:1}
  \\
  \textup{mirror descent (MD)},
  &&
  w_{i+1}
  &:= \argmin\cbr{  \ip{\eta g_{i+1}}{w} + D_\psi(w,w_i) \ : \ w\in S},
  \label{eq:md:1}
  \\
  \textup{temporal difference learning (TD)},
  &&
  w_{i+1}
  &:= w_i - \eta \td_{i+1}(w_i),
  \label{eq:td:1}
\end{align}
where $g_{i+1}$ is stochastic or batch gradient,
$\td_{i+1}$ is a superficially similar
affine mapping related to the Bellman error in reinforcement learning,
and $D_\psi$ is a Bregman divergence.  (Details will come in \Cref{sec:notation}.)

The goal here is to control the excess risk of these procedures with high probability,
meaning to ensure that $\frac 1 t \sum_{i<t} \cR(w_i) - \cR(\wref)$ is small,
where the risk $\cR$ in the simplest setting is $\cR(w) = \Ex_{x,y} \ell(y, w^\T x)$
with $\ell$ a \emph{quadratically-bounded loss}, which roughly speaking means $\|\partial_w \ell(y,w^\T x)\|_*$ 
can be related to $\|w - \wref\|$ (cf. \Cref{sec:notation}), and $\wref$ is some good (but not necessarily optimal)
\emph{reference solution}.  So far, this is not too outlandish, however the focus in this
work is (a) a single proof technique for all methods and settings without
projections (e.g., $S = \R^d$ in MD in \cref{eq:md:1}),
constraints, or regularization, despite the possibility of all minimizers being at infinity
or simply too large,
as is often the case in practice,
and
(b) rates which depend not on global structure, but rather on the behavior of reasonably good but low norm
predictors.  In more detail,
these two high-level contributions and their relationship with prior work is as follows.

\begin{figure}[t]
 \begin{tcolorbox}[enhanced jigsaw, empty, sharp corners, colback=white,borderline north = {1pt}{0pt}{black},borderline south = {1pt}{0pt}{black},left=0pt,right=0pt,boxsep=0pt,rightrule=0pt,leftrule=0pt]
  \centering
  \begin{subfigure}[t]{0.48\textwidth}
    \centering
    \includegraphics[width = \textwidth]{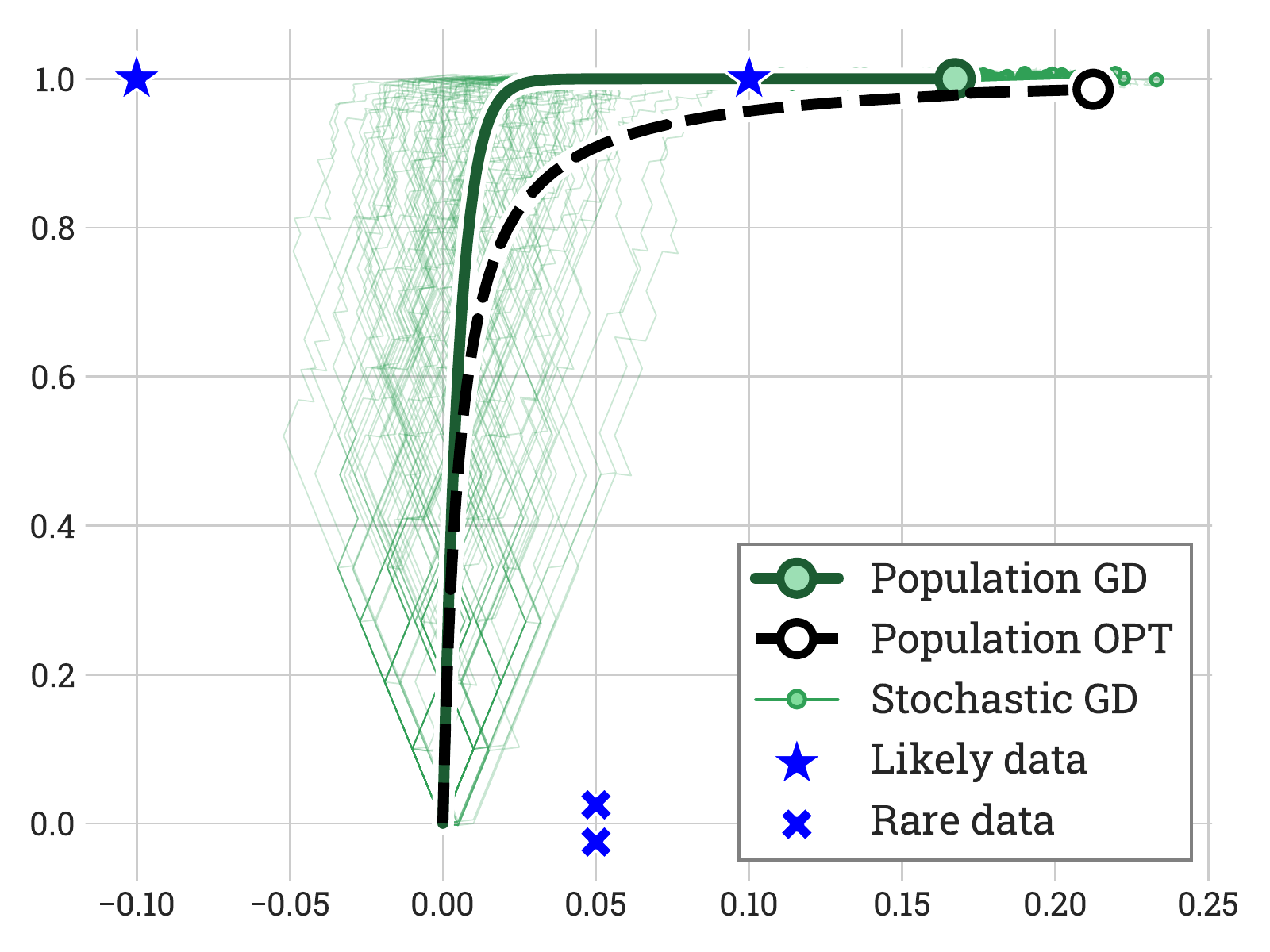}
    \caption{Squared loss with step size $\eta = 0.1$.}
    \label{fig:lsq}
  \end{subfigure}
  \hfill
  \begin{subfigure}[t]{0.48\textwidth}
    \centering
    \includegraphics[width = \textwidth]{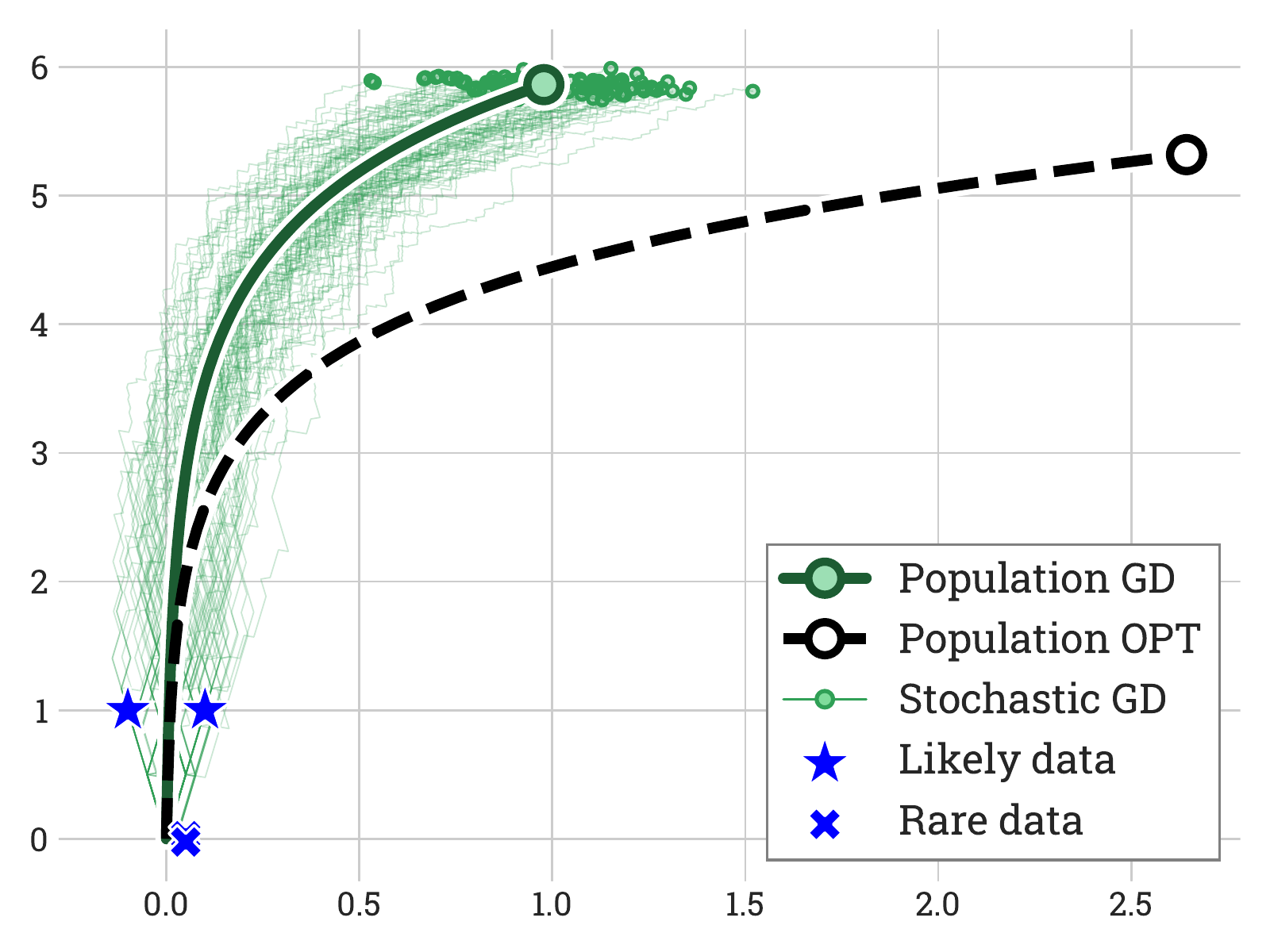}
    \caption{Logistic loss with step size $\eta = 1$.}
    \label{fig:log}
  \end{subfigure}
\caption{
    100 parallel runs of 400 stochastic GD iterations on the same data distribution in $\R^2$
    using the squared loss (cf. \Cref{fig:lsq}) and the logistic loss
    (cf. \Cref{fig:log}).
    The distribution consists of two ``likely'' upper data points (sampled with probability $90\%$),
    and two ``rare'' lower data points (sampled with probability $10\%$),
    all with common label $+1$.
    Three types of trajectory are depicted: ``population OPT'' is the curve of optimal constrained solutions
    $\{ \argmin_{\|w\|\leq B} \cR(w) : B \geq 0\}$,
    ``population GD'' is GD applied to the population risk $\cR$,
    and ``stochastic GD'' uses a fresh sample for each update.
    In all cases, the methods procrastinate convergence towards their asymptotic destinations,
    respectively the minimum norm and maximum margin solutions,
    and instead spend a good deal of time heading upwards, specifically towards low norm, low risk solutions.
    Analyzing this early trend is a goal of the present work, which is achieved through appropriate
    choices of $\wref$, as detailed in the illustrative examples in \Cref{sec:realizable,sec:markov}.
  }
  \label{fig:intro}
\end{tcolorbox}
\end{figure}

\begin{enumerate}[font=\bfseries]
  \item
    \textbf{Single coupling-based proof technique.}
    The core contribution is a single proof technique which can handle MD (which generalizes GD) and TD,
    obtaining excess risk rates with high probability without projections, constraints, regularization, or equivalents.
    Despite the extensive history and development of these methods throughout machine learning and optimization
    \citep{robbins_monro,nemirovsky_yudin,bottou2010large,kingma2014adam},
    prior work either requires projections, regularization, and constraints
    \citep{RSS12,pmlr-v99-harvey19a},
    or makes noise and comparator assumptions which effectively necessitate bounded iterates
    \citep{li2020high},
    or it provides bounds only in expectation
    \citep{hardt2016train},
    or is tailored to specific data and loss settings, for instance exponentially-tailed losses
    and linearly separable data
    \citep{nati_logistic,min_norm,shamir2021gradient},
    to mention a few.
    By contrast, the present work not only handles all such cases, it does so with an elementary and unified coupling-based
    proof technique for any \emph{quadratically-bounded loss}, or more generally fixed point mappings with
    quadratic growth,
    such as the TD update, which has no prior high probability analysis (even with projections).
    This lack of projections is relevant in contemporary usage, since deep learning
    typically has minimal or nonexistent regularization
    \citep{nati_implicit_gen,rethinking}.

  \item
    \textbf{Locally-adapted rates.}
    Even if there is some natural constraint or regularization in effect within the optimization procedure,
    the optimal solution may be unsatisfactory: e.g., it may simply be very large, and competing with it
    could require a large number of samples.
    On the other hand, the present proofs and rates only rely upon properties of reasonable reference solutions,
    which may fail to be optimal, but instead have much lower norm.

    As an illustration, consider \Cref{fig:intro}, which runs stochastic GD
    on the same (discrete) data distribution with either
    the squared loss $(y,\yhat) \mapsto (y-\yhat)^2/2$
    (cf. \Cref{fig:lsq}),
    or the logistic loss $(y,\yhat) \mapsto \ln(1+\exp(-y\yhat))$
    (cf. \Cref{fig:log}).
    Many trajectories of stochastic GD are plotted along with
    GD run directly on the population risk $\cR$ (labeled ``population GD''),
    as well as the curve of optimal constrained solutions $\{ \argmin_{\|w\|\leq B} \cR(w) : B \geq 0\}$ 
    (labeled ``population OPT'').
    All trajectories take a long time to rotate towards their asymptotic targets (respectively
    the minimum norm and max margin solutions); their early behavior is better characterized by rather 
    different low norm
    but higher risk comparators.
\end{enumerate}

In detail, the concrete contributions and organization of this work are as follows.

\begin{enumerate}[font=\bfseries]
  \item
    \textbf{\Cref{fact:md:realizable}: stochastic MD with realizable IID data.}
    This first guarantee is for stochastic MD (mirror descent with stochastic gradients, as
    detailed in \Cref{sec:notation}) on IID \emph{realizable} data;
    as discussed in \Cref{sec:notation} and \Cref{sec:realizable},
    realizability is encoded as the existence of $\wref$
    with \emph{population} risk roughly $\cO(1/t)$.
    \Cref{sec:realizable} will provide the guarantee,
    discuss the realizability condition with a few illustrative examples,
    and outline
    the general coupling-based proof technique used throughout this work.
    This realizable setting
    not only generalizes margin-based analyses for exponentially-tailed losses
    on linearly-separable data \citep{min_norm,shamir2021gradient}, it extends them to control
    the convex risk and not just misclassification risk, to settings with only approximate
    linear separability, and lastly
    handles realizable \emph{regression} settings, where $1/t$ high probability rates seem missing
    in the literature (even with projection).

  \item
    \textbf{\Cref{fact:md:general} and \Cref{fact:td}:
    stochastic MD and stochastic TD on Markovian data.}
    Dropping the realizability assumption, \Cref{sec:markov} analyzes stochastic MD
    with not just IID but Markovian data, and uses the same proof technique to handle
    the standard TD approximate fixed point method used extensively in reinforcement learning
    \citep{sutton_td}.
    For MD, the closest prior work used projections \citep{duchi2012ergodic}.
    For TD, there appear to be no prior high probability bounds, even with projections;
    the closest prior projection-free analysis is in expectation only \citep{mjt_ac},
    and most prior works make use of not only projections, but also full rank
    and mixing assumptions \citep{russo_td}.
    For MD, the squared loss is considered as an illustrative example, where stochastic GD
    is shown to adapt to local structure in the strong sense of competing with
    singular value thresholding.

  \item
    \textbf{\Cref{fact:md:heavy}, \Cref{fact:md:batch}, and \Cref{fact:mf:batch}:
    heavy-tailed and batch data.}
    As a brief auxiliary investigation to demonstrate the proof technique, rates are given in \Cref{sec:other}
    for
    MD on batch and
    heavy-tailed data.  Once again, prior work either requires projections,
    or violates one of the other goals (e.g., being custom-tailored to exponential-tailed losses
    \citep{zhang_yu_boosting,mjt_margins,nati_logistic}).
    The batch analysis includes not just discrete-time MD, but also a \emph{continuous-time
    mirror flow (MF)} analysis.

\end{enumerate}

Rounding out the organization, this introduction concludes with notation and setting in \Cref{sec:notation},
and the work itself concludes with further
related work and open problems in \Cref{sec:open}.

\subsection{Notation}
\label{sec:notation}

\paragraph{Loss functions.}
In order to handle regression and classification settings simultaneously,
each loss $\ell:\R\times\R\to\R_{\geq 0}$ will have an auxiliary scalar function $\tell$ with exactly one of two forms:
either $\ell$ is a \emph{classification (margin) loss} $\ell(y,\yhat) = \tell(\sgn(y)\yhat)$,
where $\sgn(y) := 2\1[y \geq 0] - 1 \in \{-1,+1\}$,
or $\ell$ is a \emph{regression (distance) loss} $\ell(y,\yhat) = \tell(y-\yhat)$.
Subgradients of $\ell$ will always be in the second argument,
and always exist since $\tell$ is always convex in this work;
for convenience, $\ell'$ will denote some fixed selection from $\partial \ell$.
The core loss property, \emph{quadratic boundedness},
is defined as follows.

\begin{assumption}\label{ass:qb}
  A loss $\ell$ is \emph{$(C_1,C_2)$-quadratically-bounded} (for nonnegative $C_1,C_2$)
  if
  \[
    |\ell'(y,\yhat)| \leq C_1 + C_2 \del{ |y| + |\yhat| },
    \qquad \forall y,\yhat.
    \qedhere
  \]
\end{assumption}

This property is quite pessimistic in the sense that stronger variants can be satisfied for all standard losses.
Even so, its worst-case nature demonstrates the utility of the core proof technique (which doesn't explode even
for this formulation), and captures standard losses via the following lemma.
(Throughout this work, $\|\partial f(w)\| := \sup \{ \|g\| : g \in \partial f(w)\}$.)

\begin{lemma}\label[lemma]{fact:qb:lip_or_smooth}
  If $\ell$ is $\alpha$-Lipschitz (i.e., $\sup_{z} |\partial \tilde \ell(z)| \leq \alpha$),
  then $\ell$ is $(\alpha, 0)$-quadratically-bounded.
  If $\ell$ is $\beta$-smooth (i.e., $| \tell'(z) - \tell'(\hat z)| \leq \beta |z-\hat z|$ $\forall z,\hat z$),
  then $\ell$ is $(|\partial\tell(0)|, \beta)$-quadratically-bounded.
\end{lemma}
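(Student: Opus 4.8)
The plan is to verify the two claims directly from the definition of quadratic boundedness (Assumption \ref{ass:qb}), using only the scalar function $\tell$ and the chain rule relating $\ell'$ to $\tell'$. First, recall that in both the classification and regression cases, $\ell'(y,\yhat)$ equals $\pm\tell'(z)$ for the appropriate argument $z$: in the classification case $z = \sgn(y)\yhat$ and $\ell'(y,\yhat) = \sgn(y)\tell'(\sgn(y)\yhat)$, and in the regression case $z = y - \yhat$ and $\ell'(y,\yhat) = -\tell'(y-\yhat)$. In either case $|\ell'(y,\yhat)| = |\tell'(z)|$ with $|z| \leq |y| + |\yhat|$, since $|\sgn(y)\yhat| = |\yhat|$ and $|y-\yhat| \leq |y| + |\yhat|$. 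This reduction is the only place the two loss forms need to be treated at all, and it is immediate.

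For the Lipschitz case, $\alpha$-Lipschitzness of $\tell$ means $|\tell'(z)| \leq \alpha$ for all $z$ (any subgradient of an $\alpha$-Lipschitz convex function is bounded by $\alpha$ in norm), so $|\ell'(y,\yhat)| \leq \alpha = \alpha + 0\cdot(|y|+|\yhat|)$, giving $(\alpha,0)$-quadratic-boundedness. For the smooth case, $\beta$-smoothness gives $|\tell'(z) - \tell'(0)| \leq \beta|z|$, hence $|\tell'(z)| \leq |\tell'(0)| + \beta|z| \leq |\partial\tell(0)| + \beta(|y| + |\yhat|)$ after substituting the bound $|z| \leq |y|+|\yhat|$; this is exactly $(|\partial\tell(0)|,\beta)$-quadratic-boundedness. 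One should note that $\beta$-smoothness as stated is a condition on the (single-valued) derivative $\tell'$, so $\partial\tell(0) = \{\tell'(0)\}$ is a singleton and $|\partial\tell(0)| = |\tell'(0)|$, making the substitution of $\ell'$ by the fixed selection harmless.

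There is essentially no obstacle here — the lemma is a routine unwinding of definitions — but the one point requiring a little care is bookkeeping around the two loss conventions and the sign factors, making sure that the triangle inequality $|y - \yhat| \leq |y| + |\yhat|$ (rather than an equality) is what is invoked in the regression case, and that the fixed subgradient selection $\ell'$ inherits the pointwise bound from $\tell'$ in the Lipschitz case (where $\partial\tell$ may be set-valued). I would present it as a two-line argument per case after stating the reduction $|\ell'(y,\yhat)| = |\tell'(z)|$, $|z|\leq|y|+|\yhat|$.
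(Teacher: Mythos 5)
Your proof is correct and follows essentially the same route as the paper's: unwind the chain rule to reduce to a bound on $|\tell'(z)|$ with $|z|\leq |y|+|\yhat|$, then apply either the Lipschitz bound or smoothness relative to $\tell'(0)$. The only cosmetic difference is that you unify the classification and regression cases via the single bound $|z|\leq|y|+|\yhat|$ where the paper handles them separately (getting the slightly tighter $\beta|\yhat|$ in the classification case), but both yield the stated conclusion.
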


A second crucial property is \emph{self-boundedness}, which is used in the realizable rates of \Cref{sec:realizable}.

\begin{definition}\label[definition]{defn:self-bounding}
  A loss function $\ell$ is \emph{$\rho$-self-bounding}
  if $\tell$ satisfies $\tell'(z)^2 \leq 2\rho\tell(z)$
  for all $z\in\R$.
\end{definition}

Notably, the two primary losses in machine learning, the logistic and squared losses, are both
self-bounding and quadratically-bounded.

\begin{lemma}\label[lemma]{fact:self-bounding}
  The squared loss $\ell(y,\yhat):= \frac 1 2 (y - \yhat)^2$ is $1$-smooth,
  $1$-self-bounding, and $(0,1)$-quadratically-bounded,
  whereas the
  logistic loss $\ell(y, \yhat) :=  \ln(1+\exp(-y\yhat))$ is $(1/4)$-smooth, $1$-Lipschitz,
  $(1/2)$-self-bounding, and $(1,0)$-quadratically-bounded.
\end{lemma}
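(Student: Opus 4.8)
The plan is to verify each property by a short one-dimensional computation on the auxiliary function $\tell$, obtaining the two quadratic-boundedness claims as immediate consequences of \Cref{fact:qb:lip_or_smooth}.

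First I would handle the squared loss, which is a regression loss with $\tell(z) = z^2/2$, so that $\tell'(z) = z$ and $\tell''(z) = 1$. The constant second derivative gives $1$-smoothness directly, and $1$-self-boundedness is the identity $\tell'(z)^2 = z^2 = 2\tell(z)$ (i.e. $\rho = 1$ in \Cref{defn:self-bounding}). Since $\tell$ is $1$-smooth with $\tell'(0) = 0$, the smoothness branch of \Cref{fact:qb:lip_or_smooth} yields $(|\partial\tell(0)|, 1) = (0,1)$-quadratic-boundedness.

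Next I would treat the logistic loss, which fits the classification template $\ell(y,\yhat) = \tell(\sgn(y)\yhat)$ with $\tell(z) = \ln(1+e^{-z})$, consistent with the displayed formula $\ln(1+\exp(-y\yhat))$ on the label set $\{-1,+1\}$; this sign bookkeeping is the one place deserving an explicit sentence. Differentiating, $\tell'(z) = -e^{-z}/(1+e^{-z}) = -1/(1+e^{z}) \in [-1,0]$, so $\tell$ is $1$-Lipschitz, and the Lipschitz branch of \Cref{fact:qb:lip_or_smooth} gives $(1,0)$-quadratic-boundedness (this is the route that yields the stated constants, rather than the weaker pair coming from smoothness). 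Differentiating once more, $\tell''(z) = e^{z}/(1+e^{z})^2$, which is maximized at $z=0$ with value $1/4$, giving $(1/4)$-smoothness. For $(1/2)$-self-boundedness I must show $\tell'(z)^2 \le \tell(z)$: since $|\tell'(z)| = 1/(1+e^{z}) \le 1$ we get $\tell'(z)^2 \le 1/(1+e^{z}) = e^{-z}/(1+e^{-z})$, and then the elementary inequality $x/(1+x) \le \ln(1+x)$ (for $x \ge 0$) applied with $x = e^{-z}$ finishes the chain, since its right-hand side is exactly $\tell(z)$.

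I do not expect a genuine obstacle: every step is a routine calculus check, and the only mildly nontrivial ingredient is the scalar bound $x/(1+x) \le \ln(1+x)$, which itself follows from the observation that both sides vanish at $x = 0$ while the derivative $1/(1+x)^2$ of the left side is dominated by the derivative $1/(1+x)$ of the right side for $x \ge 0$. The main care points are purely cosmetic: correctly reducing each loss to its $\tell$ representation and choosing the Lipschitz (rather than smoothness) route for the logistic quadratic-boundedness constant.
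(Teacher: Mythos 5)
Your proposal is correct and follows essentially the same route as the paper: the same one-dimensional calculus checks on $\tell$, the same appeal to \Cref{fact:qb:lip_or_smooth} (smoothness branch for the squared loss, Lipschitz branch for the logistic loss to get the sharper $(1,0)$ constants). The only substantive difference is that you prove the $(1/2)$-self-bounding property of the logistic loss directly, via $\tell'(z)^2 \le e^{-z}/(1+e^{-z}) \le \ln(1+e^{-z})$ using the elementary inequality $x/(1+x)\le\ln(1+x)$, whereas the paper simply cites prior work for this fact; your version is self-contained and the inequality chain checks out.
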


A few remarks on self-bounding are in order.  Firstly, the definition has appeared before
\citep{tong_sgd}, however in a generalized form and with a calculation for the logistic loss
which implies it is $0$-self-bounding under the present definition; that the logistic loss
is $1$-self-bounding was first observed in \citep{mjt_margins},
and is crucial for obtaining $1/t$ rates under realizability.
Secondly, it may seem that self-bounding is simply a reformulation of smoothness,
but firstly it is satisfied for certain nonsmooth losses (in the sense of bounded second derivatives),
such as the exponential loss, and secondly replacing self-boundedness with smoothness breaks
the current proofs.

\paragraph{Probabilities, expectations, Markov chains, and risks.}
When data arrives IID, then $\Ex$ and $\Pr$ will respectively denote expectations
and probabilities.
Correspondingly, the risk $\cR(w)$ is defined by $\Ex_{x,y} = \ell(y, w^\T x)$.
Whenever data $((x_i,y_i))_{i=1}^t$ and a loss $\ell$ are available,
define $\ell_i(v) := \ell(y_i, x_i^\T v)$, though $\ell_{x,y}(v) := \ell(y, x^\T v)$ is also used,
whereby $\cR(w) = \Ex_{x,y} \ell_{x,y}(w)$.
Lastly, define \emph{excess risk}
\[
  \cE(w) := \cR(w) - \inf_{v\in\dom(\psi)} \cR(v),
  \qquad
  \textup{where }
  \dom(\psi) = \cbr{ u : \psi(u) < \infty };
\]
the concept and notation $\dom(\psi)$ may seem surprising, but note that all MD iterates
must satisfy $\psi(w_i) < \infty$ by definition of MD in \cref{eq:md:1}; this point will be
revisited below in the expanded discussion of MD.

With Markov chains and stochastic processes, $\Ex_{\leq i}$ will
be used to condition on $\cF_{\leq i}$,
the $\sigma$-algebra of all information up through time $i$.
It will not be necessary for the stationary processes here to be exactly Markovian
(or possess a precise stationary distribution);
instead, inspired by the \emph{Ergodic Mirror Descent} analysis
by \citet{duchi2012ergodic},
the stationarity assumption here will be approximate.

\begin{definition}\label[definition]{defn:witness}
  Let $(x_i)_{i\geq 0}$ be samples from a stochastic process,
  and let $P_i^t$ denote the conditional distribution of $x_t$
  conditioned on time $i < t$.
  For any $\epsilon\geq 0$, a triple $(\pi, \tau, \eps)$ is an \emph{approximate stationarity witness}
  if
  \[
    \sup_{t \in \Z_{\geq 0}}
    \sup_{\cF_{\leq t}}
    \textsc{tv}(P_t^{t+\tau}, \pi) \leq \eps.
  \]
  (For a similar condition in prior work, see \citep[Assumption C]{duchi2012ergodic}.)
\end{definition}

Note that if $(x_i)_{i\geq 0}$ are IID, then we can choose $(\pi,\tau,\eps) = (P_1^1, 1, 0)$,
and the corresponding stochastic MD bounds in \Cref{sec:markov} exhibit no degradation in the IID case.
For a broad variety of Markov chains,
for any $\eps>0$ we can establish
$\tau = \cO(\ln(1/\eps))$,
with hidden constants uniform in $\eps$
\citep{meyn2012markov}.
We will always bake in $\eps = 1/\sqrt{t}$, which
suggests $\tau = \cO(\ln(t))$.

For risk minimization over Markov data as in \Cref{fact:md:general},
the risk will refer to $\cR(w) := \Ex_{x,y\sim\pi} \ell_{x,y}(w)$,
where $\pi$ is an approximate stationary distribution provided by \Cref{defn:witness}.

\paragraph{Mirror descent.}
Mirror descent is a powerful generalization of gradient descent, which operates as follows.
Given a differentiable $1$-strongly-convex \emph{mirror map} $\psi$ and the corresponding
Bregman divergence $D_\psi(w,v) := \psi(w) - \sbr{\psi(v) + \ip{\nabla \psi(v)}{w-v}}$,
and a sequence of objective functions $(f_i)_{i\leq t}$, mirror descent chooses a new iterate
$w_{i+1}$ from the old iterate $w_i$ and a step size $\eta\geq 0$ and subgradient $g_{i+1}\in\partial f_{i+1}(w_i)$ and a closed convex constraint set $S$ (with $S=\R^d$ allowed)
via \cref{eq:td:1}, repeated here verbatim for convenience as
\[
  w_{i+1} := \argmin_{w\in S} \del{ \ip{\eta g_{i+1}}{w} + D_\psi(w,w_i)}.
\]
Returning to the point above that $\psi(w_{i+1}) < \infty$, note that $w_i$ is always feasible
(with finite value) for the preceding minimization problem, whereas a point $w$ with $\psi(w)=\infty$
would also have $\ip{g_{i+1}}{w} + D_\psi(w,w_i) = \infty$, and thus would never be selected.
(We assume $\psi(w_0) < \infty$ throughout.)
In the present work, typically $f_{i+1} = \ell_{i+1}$, but in \Cref{sec:md:batch} it will be
the full batch empirical risk.
As before, we will use the notation $\|\partial f_{i+1}(w)\|_* = \sup\{\|g\|_* : g\in \partial f_{i+1}(w)\}$,
and the particular choice of subgradient will never matter.
The vector space for iterates will be $\R^d$ mainly for sake of presentation,
however none of the bounds have any dependence on dimension,
and a future version may simply use a separable Hilbert space.
Norms without any subscript are simply general norms (i.e., not necessarily Euclidean).
Batch and stochastic gradient descent can be written as mirror descent via
$\Psi(v) := \|v\|^2_2/2$; for more information on mirror descent, there are many
excellent texts \citep{duchi2012ergodic,bubeck,nemirovsky_yudin}.

TD is only used in \Cref{fact:td}, and its presentation is deferred to \Cref{sec:td}.
Similarly, the mirror flow is only used in \Cref{fact:mf:batch}, and presented in
\Cref{sec:md:batch}.

\paragraph{The comparator $\wref$.}
The bounds will rely not on global minimizers, but rather on merely good comparators $\wref$.
These comparators will either be in a more general non-realizable case
(used in \Cref{fact:md:general}, \Cref{fact:md:heavy}, \Cref{fact:md:batch},
\Cref{fact:mf:batch}),
discussed momentarily, or a realizable case (used in \Cref{fact:md:realizable}),
discussed at the end of this section.  Further detailed discussion will come as illustrative
examples in \Cref{sec:realizable} and \Cref{sec:markov}.

The general assumption is that $\wref$ and $t$ satisfy $\cE(\wref) \leq \frac 1 {\sqrt t} D_\psi(\wref,w_0)$.
To make sense of the tradeoff between $t$ and $\wref$,
consider two examples of obtaining one from the other:
first consider fixing $\wref$ and identifying permissible choices of $t$,
and secondly fix $t$ and pick a single meaningful comparator.

Proceeding with the first construction, let $\wref$ be given,
and define a maximum permissible time $\tref$ as
\[
  \tref := \begin{cases}
    \infty,
    & \textup{if }\cE(\wref) = 0,
    \\
    \sbr{\frac {D_\psi(\wref,w_0)}{\cE(\wref)}}^2,
    &
    \textup{otherwise}. \end{cases}
\]
By construction, we always have $\cE(\wref) \leq D_\psi(\wref,w_0)/\sqrt{t}$
for all $t\in [0,\tref]$ (since the right hand side is decreasing in $t$).
As a quick sanity check, if $\cE(\wref) = 0$, then $\tref = \infty$, and thus
we can use global minimizers (when they exist) as comparators for all $t$.
Otherwise, when $\cE(\wref) > 0$,
we can vaguely interpret $\tref$ as saying that large comparators allow
larger choices of $t$, but for a nicer characterization of the tradeoff,
it seems more instructive
to fix $t$ and choose a good $\wref$, as follows.

Utilizing a slightly more flexible definition, let $\lambda > 0$ be arbitrary, and define
an optimal regularized solution $\uref(\lambda)$ as
\[
  \uref(\lambda) := \argmin\cbr{ \cR(u) + \frac \lambda 2 D_\psi(u,w_0) : u\in \R^d }.
\]
The following \namecref{fact:uref} shows that we can \emph{always} use this comparator.

\begin{proposition}\label{fact:uref}
  Given any $\lambda > 0$,
  then $\displaystyle \cE(\uref(\lambda)) \leq \lambda D_\psi(\uref(\lambda), w_0)$.
\end{proposition}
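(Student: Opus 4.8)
The plan is to read everything off from the first-order optimality of $\uref(\lambda)$ in the strongly convex regularized problem it solves. Abbreviate $u := \uref(\lambda)$ and $F(v) := \cR(v) + \frac{\lambda}{2} D_\psi(v,w_0)$. Since $\psi$ is $1$-strongly convex, $D_\psi(\cdot,w_0)$ is convex and differentiable with gradient $\nabla\psi(\cdot) - \nabla\psi(w_0)$, so $u \in \argmin F$ forces a subgradient $g \in \partial\cR(u)$ with $g = -\frac{\lambda}{2}\del{\nabla\psi(u) - \nabla\psi(w_0)}$. Substituting this into the convexity inequality $\cR(v) \ge \cR(u) + \ip{g}{v-u}$ and rewriting the inner product through the three-point identity $\ip{\nabla\psi(u) - \nabla\psi(w_0)}{v-u} = D_\psi(v,w_0) - D_\psi(v,u) - D_\psi(u,w_0)$ gives, for every $v \in \dom(\psi)$,
\[
  \cR(u) - \cR(v) \;\le\; \frac{\lambda}{2}\del{D_\psi(v,w_0) - D_\psi(v,u) - D_\psi(u,w_0)} .
\]
Equivalently this is $F(u) + \frac{\lambda}{2} D_\psi(v,u) \le F(v)$, the relative-strong-convexity inequality for the minimizer $u$; it also follows by combining $u \in \argmin F$ with the three-point identity for the $D_\psi(\cdot,w_0)$ term, so no differentiability of $\cR$ is needed.

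The next step is to drop the nonnegative term $\frac{\lambda}{2} D_\psi(v,u)$ and let $v$ range over a sequence with $\cR(v) \downarrow \inf_{v\in\dom(\psi)}\cR(v)$, which in the limit gives $\cE(u) \le \frac{\lambda}{2}\del{D_\psi(v,w_0) - D_\psi(u,w_0)}$. This already reduces the claim $\cE(u) \le \lambda D_\psi(u,w_0)$ to exhibiting an approximate $\cR$-minimizer $v$ with $D_\psi(v,w_0) \le 3 D_\psi(u,w_0)$ (or, keeping the discarded term, with $D_\psi(v,w_0) - D_\psi(v,u) \le 3 D_\psi(u,w_0)$). A convenient way to package this is via the value function $V(\mu) := \inf_{v\in\dom(\psi)}\del{\cR(v) + \frac{\mu}{2} D_\psi(v,w_0)}$, which is concave and nondecreasing with $V(0) = \inf_{v\in\dom(\psi)}\cR(v)$ and one-sided derivative $V'(\mu) = \frac{1}{2} D_\psi(\uref(\mu),w_0)$ (envelope theorem; strong convexity of $\psi$ makes $\uref(\mu)$ unique for $\mu>0$), so that $\mu\mapsto D_\psi(\uref(\mu),w_0)$ is nonincreasing and
\[
  \cE(u) \;=\; \cR(u) - V(0) \;=\; \del{V(\lambda) - \frac{\lambda}{2} D_\psi(u,w_0)} - V(0) \;=\; \int_0^\lambda \frac{1}{2}\del{D_\psi(\uref(\mu),w_0) - D_\psi(u,w_0)}\,d\mu .
\]

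The crux, and the step I expect to be the main obstacle, is turning either reduction into the stated constant: one must control how far an $\cR$-minimizer (equivalently $\uref(\mu)$ as $\mu\downarrow 0$) can lie from $w_0$ relative to $D_\psi(\uref(\lambda),w_0)$. For the value-function route this means bounding the integrand above using monotonicity, i.e.\ comparing $D_\psi(\uref(0^+),w_0)$ with $D_\psi(u,w_0)$; for the hands-on route one bounds $D_\psi(v^\star,w_0)$ for an $\cR$-minimizer $v^\star$ using monotonicity of $\partial\cR$ (which gives $\ip{\nabla\psi(u) - \nabla\psi(w_0)}{v^\star - u} \ge 0$) together with convexity of $D_\psi(\cdot,w_0)$ and the three-point identity, passing to a minimizing sequence when $\inf\cR$ is not attained. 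This comparison of divergences — rather than anything in the optimality calculation — is where the argument does its real work and where any quantitative slack appears.
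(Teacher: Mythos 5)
Your calculations are all correct as far as they go~---~the prox three-point inequality, the reduction $\cE(u) \le \frac{\lambda}{2}\left(D_\psi(v,w_0) - D_\psi(u,w_0)\right)$ for approximate minimizers $v$ of $\cR$, and the value-function identity $\cE(u) = \int_0^\lambda \frac12\left(D_\psi(\uref(\mu),w_0) - D_\psi(u,w_0)\right)\dif\mu$~---~but the step you flag as the crux is not merely the hard part: it cannot be closed, because the statement as literally written is false. Take $d=1$, $\psi(w)=w^2/2$, $w_0=0$, $\lambda=1$, and $\cR(w)=\max(0,10-w)$ (realized by the shifted hinge $\tell(z)=\max(0,10-z)$ with $x=y=1$ almost surely; this is convex, $1$-Lipschitz, hence $(1,0)$-quadratically-bounded). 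Then $\inf\cR=0$, the regularized objective $\max(0,10-w)+w^2/4$ is minimized at $u=\uref(1)=2$, so $\cE(u)=8$ while $\lambda D_\psi(u,w_0)=2$. Your own identity diagnoses exactly where things break: the integrand is controlled by $D_\psi(\uref(\mu),w_0)$ for $\mu<\lambda$, which tracks how far the \emph{unregularized} minimizers sit from $w_0$ (here $D_\psi(\uref(\mu),0)$ grows to $50$ as $\mu\downarrow 0$), and no comparison of the form $D_\psi(v^\star,w_0)\le c\,D_\psi(\uref(\lambda),w_0)$ is available. So the obstacle you anticipated is a genuine obstruction, not a technicality.

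For comparison, the paper's proof takes a completely different route and~---~read carefully~---~does not reach the displayed inequality either. It first shows by a case analysis on the sublevel set $\{v : \cE(v)\le\lambda/2\}$ that \emph{some} point $\wref$ satisfies $\cE(\wref)\le\frac{\lambda}{2}D_\psi(\wref,w_0)$ (either the set is bounded and contains an exact minimizer, or it contains a point with $D_\psi(\cdot,w_0)>1$), and then transfers via optimality of $\uref$ in the regularized objective to get
\[
  \cE(\uref) \le \cE(\uref) + \tfrac{\lambda}{2}D_\psi(\uref,w_0) \le \cE(\wref) + \tfrac{\lambda}{2}D_\psi(\wref,w_0) \le \lambda D_\psi(\wref,w_0),
\]
i.e., a bound with the auxiliary $\wref$, not $\uref(\lambda)$, in the divergence on the right~---~consistent with the counterexample above, and presumably the ``fairly hideous weakening'' alluded to after the statement. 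The honest conclusions available are therefore (a) a valid comparator satisfying the condition $\cE(\wref)\le\lambda D_\psi(\wref,w_0)$ always exists, and (b) $\uref(\lambda)$ has regularized objective value no larger than that of any such comparator; neither your first-order route nor the paper's existence argument yields the proposition with $D_\psi(\uref(\lambda),w_0)$ on the right-hand side.
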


In particular, given $t$, we can always use $\wref := \uref(1/\sqrt{t})$, whereby
\Cref{fact:uref} implies
$\cE(\wref)\leq D_\psi(\wref)/\sqrt{t}$.
This is perhaps much more
interpretable than the opaque comparator condition itself, however it is weaker: e.g., returning
to \Cref{fig:intro}, the general condition allows comparison to all given curves,
and not just a single curve of regularized solutions
(namely $\{\uref(\lambda) : \lambda \geq 0\}$).
More notably, the proof of \Cref{fact:uref} uses a fairly hideous weakening of the inequality.

Lastly, to discuss the realizable condition $\cR(\wref) \leq D_\psi(\wref, w_0)/t$,
suppose for simplicity
that $\cR(w) = \cE(w)$ for all $w$ (that is, $\inf_{v\in\dom(\psi)}\cR(v) = 0$).
Then the condition becomes $\cE(\wref) \leq D_\psi(\wref, w_0)/t$, and we may
once again apply \Cref{fact:uref}: given $t$, we can always choose $\wref:=\uref(1/t)$.

\section{Realizable case, illustrative examples, and proof scheme}
\label{sec:realizable}

The first bound is for self- and quadratically-bounded losses,
and requires \emph{realizability}: as discussed at the end of \Cref{sec:notation},
this corresponds to the existence of $\wref$ with $\cR(\wref) \leq D_\psi(\wref,w_0)/t$,
which will be discussed momentarily for the logistic loss.

\begin{theorem}\label[theorem]{fact:md:realizable}
  Suppose $\ell$ is convex, $(C_1,C_2)$-quadratically-bounded, and $\rho$-self-bounding.
  Let $t$ be given, and suppose
  $((x_i,y_i))_{i\leq t}$ are IID samples with
  $\max\{\|x_i\|_*,|y_i|\}\leq 1$ almost surely.
  Let reference solution $\wref$ and initial point $w_0$ be given,
  and suppose $\wref$ satisfies $\cR(\wref) \leq \rho D_\psi(\wref,w_0) / t$,
  and let $C_4$ be given so that $\max_{j<t}|\ell_{j+1}(\wref)| \leq C_4$ almost surely.
  Then with probability at least $1-2t\delta$, every $i\leq t$ satisfies
  \begin{align*}
    \frac {8}{3i\eta} D_\psi(\wref, w_{i})
    + \frac 1 i \sum_{j<i} \cR(w_j)
    \leq
\frac {2 B^2}{i \eta}
    + \frac {4}{\eta} \cR(\wref),
  \end{align*}
where $B := \sqrt{1 + C_1 + C_2(1+   \|\wref\|) + C_4}
  \max\cbr{ 1, 4\sqrt{ D_\psi(\wref,w_0)},\sqrt{\frac{64C_4}{\rho} \ln\frac 1 \delta}}$
  and $\eta \leq \frac{1}{2\rho}$.
\end{theorem}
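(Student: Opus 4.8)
The plan is to run the standard mirror-descent one-step inequality, but to control the stochastic gradient term through a coupling with the *deterministic* mirror descent on $\cR$ rather than through a martingale/uniform-convergence bound, which is how the "no projections" miracle happens. First I would write the exact MD progress bound: for any $u$ and any $j$,
\[
  D_\psi(u, w_{j+1}) \leq D_\psi(u, w_j) + \eta \ip{g_{j+1}}{u - w_j} + \tfrac{\eta^2}{2}\|g_{j+1}\|_*^2,
\]
using $1$-strong convexity of $\psi$. Take $u = \wref$. The inner-product term is $\eta(\ell_{j+1}(\wref) - \ell_{j+1}(w_j))$ by convexity of $\ell_{j+1}$, minus the excess we want to keep, so after rearranging and summing over $j < i$ we get a bound of the shape $\sum_{j<i}\ell_{j+1}(w_j) \leq \text{(telescoped }D_\psi\text{)} + \eta \sum_{j<i}\ell_{j+1}(\wref) + \tfrac{\eta^2}{2}\sum_{j<i}\|g_{j+1}\|_*^2$. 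Two things must now be handled: (i) replace the empirical sums $\sum\ell_{j+1}(w_j)$ and $\sum\ell_{j+1}(\wref)$ by their population versions $\sum\cR(w_j)$ and $t\cR(\wref)$, and (ii) bound $\|g_{j+1}\|_*^2$.

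For (ii), quadratic boundedness gives $\|g_{j+1}\|_* = |\ell'(y_{j+1}, x_{j+1}^\T w_j)|\,\|x_{j+1}\|_* \leq C_1 + C_2(|y_{j+1}| + |x_{j+1}^\T w_j|) \leq C_1 + C_2(1 + \|w_j\|)$ under the almost-sure bound $\max\{\|x_i\|_*,|y_i|\}\le 1$; self-boundedness sharpens this to something controllable by $\rho\,\ell_{j+1}(w_j)$, which is the key to the $1/t$-type rate — this is why $\eta \leq 1/(2\rho)$ appears, so that the $\tfrac{\eta^2}{2}\rho$ factor can be absorbed into the $\ell_{j+1}(w_j)$ term on the left (the classic "smoothness/self-bounding eats half the loss" trick, cf. $8/3$ vs $4$ constants). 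The subtlety is that $\|w_j\|$ itself must be controlled, which is where $B$ enters: one runs an \emph{induction on $i$}, assuming $D_\psi(\wref, w_j)$ stays below roughly $B^2$ for all $j < i$, which bounds $\|w_j - \wref\|$ and hence $\|w_j\|$, closing the self-bounding estimate, and then shows the final inequality for $i$ re-establishes $D_\psi(\wref, w_i) \lesssim B^2$ so the induction proceeds.

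For (i), the empirical-to-population step, I would \emph{not} use a uniform law but instead a one-sided concentration along the actual iterate sequence. Because $w_j$ is $\cF_{\leq j}$-measurable and $(x_{j+1},y_{j+1})$ is fresh, $\ell_{j+1}(w_j) - \cR(w_j)$ is a martingale difference; the terms $\ell_{j+1}(w_j)$ are nonnegative and, by self-boundedness plus the iterate bound from the induction, have bounded conditional variance controlled by their conditional mean. A Freedman/Bernstein-type inequality for nonnegative martingale differences (the "one-sided" version that only needs a variance-vs-mean bound) then gives, with probability $1-t\delta$, that $\sum_{j<i}\cR(w_j) \leq (1+c\eta\rho)\sum_{j<i}\ell_{j+1}(w_j) + (\text{lower-order, } \propto \ln\tfrac1\delta)$ for all $i$ simultaneously — again the slack gets folded into the $\ell_{j+1}(w_j)$ term, and the additive $\ln(1/\delta)$ term is exactly what forces the $\sqrt{(64 C_4/\rho)\ln(1/\delta)}$ factor inside $B$. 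Symmetrically, $\sum_{j<i}\ell_{j+1}(\wref) \leq t\cR(\wref) + O(\sqrt{C_4 t\cR(\wref)\ln(1/\delta)} + C_4\ln(1/\delta))$ by Bernstein for bounded i.i.d.\ terms (bounded by $C_4$), and the realizability hypothesis $t\cR(\wref) \leq \rho D_\psi(\wref,w_0)$ converts $\eta\cdot t\cR(\wref)$ into an $O(D_\psi(\wref,w_0))$ term (absorbed into $B^2/\eta$) plus the explicit $\tfrac{4}{\eta}\cR(\wref)$ slack. The union bound over the two events (each failing with probability $\le t\delta$) gives the stated $1-2t\delta$.

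The main obstacle I anticipate is making the induction airtight: the bound on $\|w_j\|$ needed to control both $\|g_{j+1}\|_*^2$ and the martingale variance comes \emph{from} the conclusion at step $j$, so one must be careful that the concentration event is a single event holding for \emph{all} $i\le t$ at once (not re-drawn per step), and that the constants chosen in $B$ leave enough room for the inductive step to reproduce $D_\psi(\wref, w_i)\lesssim B^2$ with the \emph{same} constant. The bookkeeping of which $O(\cdot)$ slacks get absorbed on the left (via $\eta\le 1/(2\rho)$ and the self-bounding inequality) versus which become the explicit right-hand-side terms $2B^2/(i\eta)$ and $4\cR(\wref)/\eta$ is the delicate part; everything else is the standard MD telescoping plus Bernstein.
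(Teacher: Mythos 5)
Your skeleton is right in most respects: the deterministic MD inequality with the $D_\psi(\wref,w_i)$ term retained, self-boundedness absorbing the squared-gradient term (whence $\eta\le 1/(2\rho)$), realizability converting $\eta t\,\cR(\wref)$ into an $O(D_\psi(\wref,w_0))$ term, two separate concentration inequalities, and a union bound giving $1-2t\delta$. But there is a genuine gap at the central step, and it is exactly the step you flag as "the main obstacle": you propose to apply a Freedman/Bernstein inequality directly to the martingale differences $\ell_{j+1}(w_j)-\cR(w_j)$ along the \emph{unconstrained} iterates, using "the iterate bound from the induction" to control the increments and conditional variances. That is circular in a way that careful bookkeeping cannot repair. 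Freedman's inequality requires \emph{almost-sure} bounds on the increments (or conditional variances); the unconstrained $w_j$ admit no such a priori bound, and the bound you want to use is only available on the good event that the concentration inequality itself is supposed to produce. Conditioning on that event destroys the martingale structure, so the inequality cannot be invoked as you describe.

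The paper's resolution --- and the actual content of the theorem --- is a coupling/truncation device: define \emph{projected} iterates $(v_i)_{i\le t}$ driven by the same samples, confined by construction to $S=\{v:\|v-\wref\|\le B_w\}$, so that the increments $\ell_{j+1}(\wref)-\tfrac12\ell_{j+1}(v_j)$ are bounded \emph{almost surely} (via \Cref{fact:quad_bound}) and \Cref{fact:freedman:convenient} applies unconditionally to them. Only then does the induction enter: on the good event, the deterministic MD bound plus the concentration on $\sum_{j<i}\ell_{j+1}(\wref)$ shows $D_\psi(\wref,w_i)\le 3B_w^2/8$, so the projection is never invoked and $w_i=v_i$, transferring the conclusions back to the unconstrained sequence. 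An equivalent fix for your outline would be to stop the martingale at the first exit time from $S$ and show the stopping time is never triggered, but some such device must be supplied; as written, your concentration step does not go through. (Minor additional point: your opening sentence describes the mechanism as coupling with deterministic MD on $\cR$, which is neither what the paper does nor what the rest of your own outline uses.)
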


All bounds in this work will have roughly the form of \Cref{fact:md:realizable},
which can be summarized as follows.
As stated in the introduction, the bound has a regret-style average risk on the left hand side, and
the risk of the comparator $\wref$ in the right hand side.
Unusual elements are the control for all times $i<t$,
the left hand side term $D_\psi(\wref, w_{i})$, and the coefficient exceeding one on $\cR(\wref)$.
The control for all times $i<t$ is an artifact of the proof scheme,
and will be discussed below.  The left hand side term $D_\psi(\wref,w_i)$ is crucial to the operation of the proof;
it is an \emph{implicit bias} which prevents iterates from growing too large.
This term is dropped in all standard presentations of mirror descent
\citep{bubeck,duchi2012ergodic,nemirovsky_yudin}, but was exploited in the original perceptron convergence proof
\citep{novikoff}.
The large coefficient on $\cR(\wref)$ is a consequence of realizability, and can not be removed with the current proof scheme.

\subsection{Illustrative example: the logistic loss and approximate separability}
\label{sec:homotopy}

\begin{figure}[t]
 \begin{tcolorbox}[enhanced jigsaw, empty, sharp corners, colback=white,borderline north = {1pt}{0pt}{black},borderline south = {1pt}{0pt}{black},left=0pt,right=0pt,boxsep=0pt,rightrule=0pt,leftrule=0pt]
   \centering
\begin{subfigure}[t]{0.48\textwidth}
    \centering
    \includegraphics[width = \textwidth]{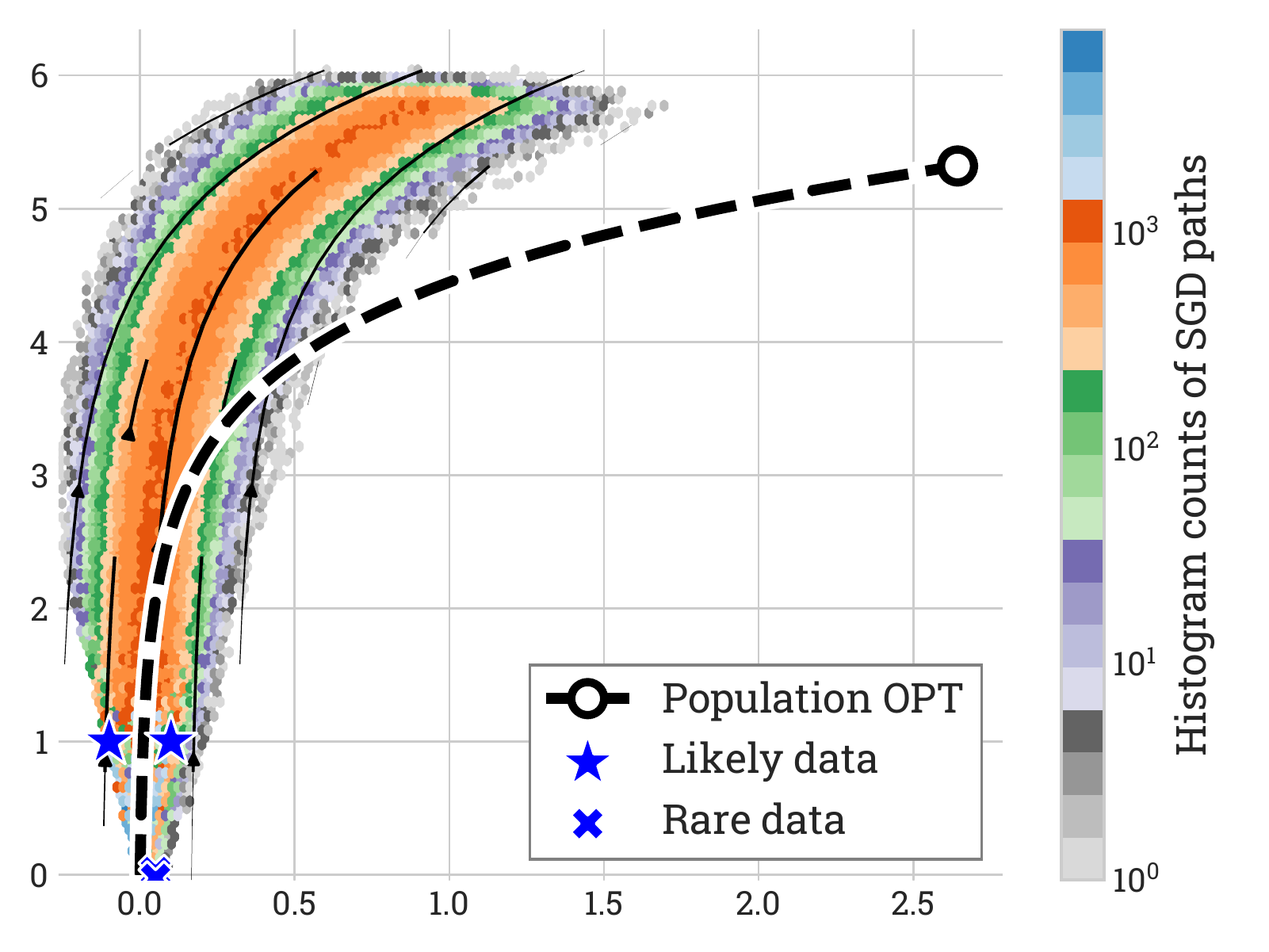}
    \caption{Another visualization of \Cref{fig:log}.}
    \label{fig:log:hexbin}
  \end{subfigure}
  \hfill
  \begin{subfigure}[t]{0.48\textwidth}
    \centering
    \includegraphics[width = \textwidth]{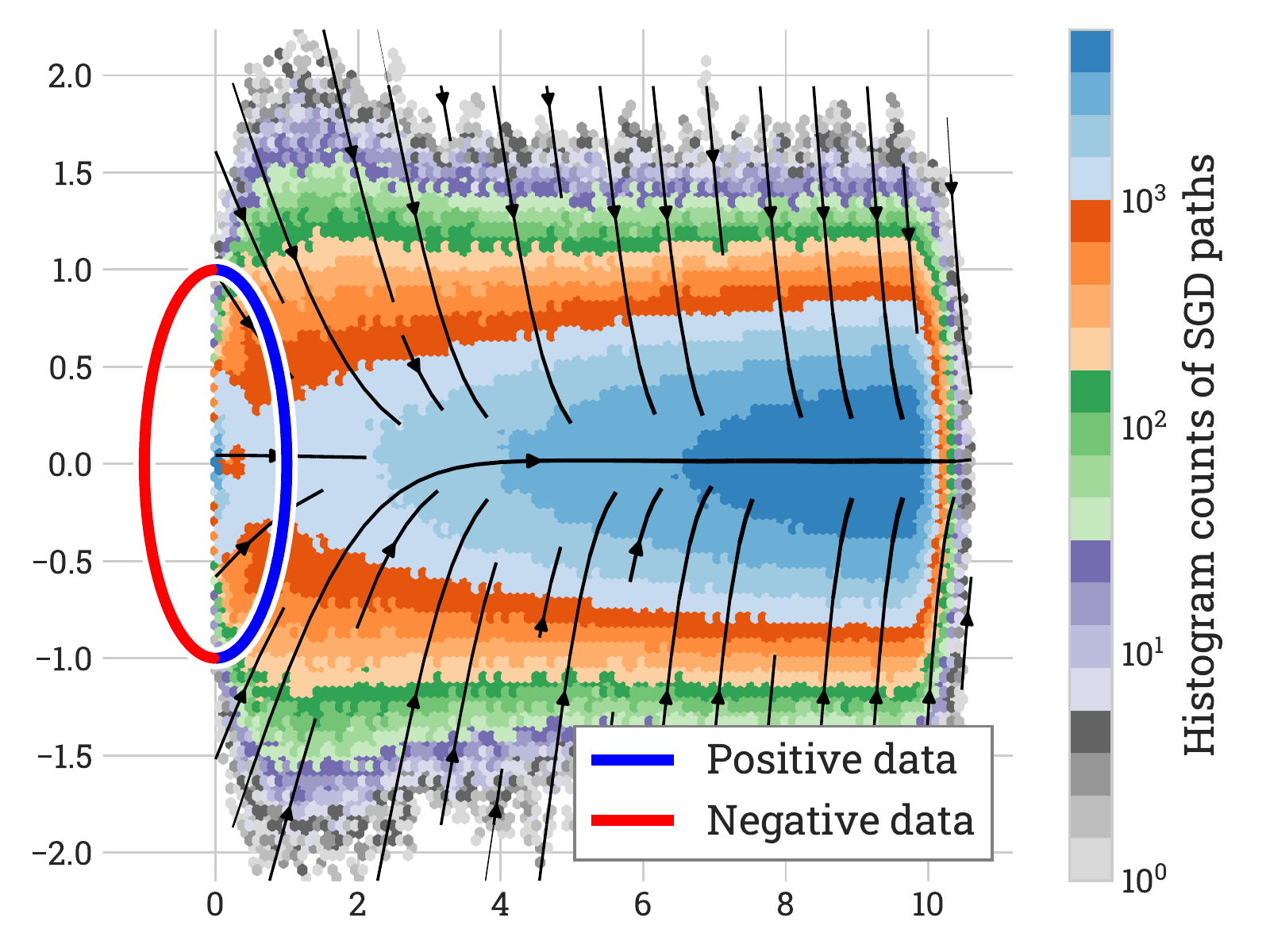}
    \caption{Spherical data from \Cref{fact:margin:zero}.}
    \label{fig:circle:log}
  \end{subfigure}
  \caption{
    These plots show SGD on the logistic loss, but rather than depicting many trajectories as a filigree
    (as in \Cref{fig:intro}),
    they are histogrammed into hexagonal bins, with additional black lines showing the
    gradient vector field.
    The color scheme gradations are logarithmic, and seem to exhibit an exponential concentration around the GD path,
    a phenomenon not established in this work.
    \Cref{fig:log:hexbin} shows this visualization technique on the same data from
    \Cref{fig:intro}, whereas \Cref{fig:circle:log} shows the spherical $0$-margin data from
    \Cref{fact:margin:zero}.
}
  \label{fig:intro:hexbin}
\end{tcolorbox}
\end{figure}

To investigate \Cref{fact:md:realizable} more closely, consider the logistic loss $\ell(y,\yhat) = \ln(1+\exp(-y\yhat))$
on classification data.
In typical implicit regularization works, the logistic loss is treated as having an
exponential tail, and inducing convergence to maximum margin directions
\citep{zhang_yu_boosting,mjt_margins,nati_logistic,shamir2021gradient}.
As in \Cref{fig:log}, it can take a while for the maximum margin asymptotics to kick in;
for example, the risk rate for SGD in \citep[Theorem 1.1]{min_norm} is $\cO(\ln(t) / (t\gamma^2))$, where
the population margin $\gamma$ can be taken to mean $\Pr[\bar u ^\T x y \geq \gamma] = 1$ for some unit vector $\bar u$.
Returning to \Cref{fig:log}, the global margin $\gamma$ is very small, but the initial dynamics are governed by
the much larger margin of the likely points oriented vertically.
As a first step towards formalizing this and connecting back to \Cref{fact:md:realizable},
consider the following \emph{approximate realizability/separability} characterization,
and its consequences on the choice of $\wref$.

\begin{proposition}\label[proposition]{fact:md:margin}
  Let $t$ be given.
  Suppose $\ell(y,\yhat) = \ln(1+\exp(-\sgn(y)\yhat))$ is the logistic loss,
  the data satisfies $\|x\|_2\leq 1$ and $y\in\{\pm 1\}$ almost surely,
  and that there exists a unit vector $u_t\in\R^d$ and a scalar $\gamma_t > 0$
  so that
  $\Pr[u_t^\T xy \geq \gamma_t] \geq 1-1/t$.
  Then the reference solution $\wref := u_t\ln(t) / \gamma_t$ satisfies
  $\cR(\wref) \leq \del{2 + \ln(t)/\gamma_t}/t$.
\end{proposition}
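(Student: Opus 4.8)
The plan is a direct computation: expand $\cR(\wref)$ for the stated $\wref = u_t\ln(t)/\gamma_t$, and split the expectation over the high-probability event on which $u_t^\T xy$ is well separated and its low-probability complement. First I would note that $y\in\{\pm 1\}$ forces $\sgn(y) = y$, so
\[
  \sgn(y)\,\wref^\T x \;=\; \frac{\ln t}{\gamma_t}\,(u_t^\T x)\,y,
  \qquad\textup{hence}\qquad
  \cR(\wref) = \Ex_{x,y}\ln\del{1 + \exp\del{-\tfrac{\ln t}{\gamma_t}\,(u_t^\T x)\,y}}.
\]
Write $A := \cbr{(u_t^\T x)\,y \ge \gamma_t}$ for the ``good'' event, so that $\Pr[A] \ge 1 - 1/t$ by hypothesis.

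On $A$ the exponent inside the logistic loss is at least $\ln t$ in magnitude with the favorable (positive) sign, so the integrand is at most $\ln(1 + 1/t) \le 1/t$. On $A^c$ I would invoke Cauchy--Schwarz with $\|x\|_2 \le 1$ and $\|u_t\|_2 = 1$ to get $|(u_t^\T x)\,y| \le 1$, so the magnitude of the exponent is at most $\ln(t)/\gamma_t$; combined with the elementary bound $\ln(1 + e^a) \le 1 + \max\{0,a\}$, the integrand is at most $1 + \ln(t)/\gamma_t$ there. Putting the two pieces together,
\[
  \cR(\wref)
  \;\le\; \Pr[A]\cdot\frac 1 t + \Pr[A^c]\cdot\del{1 + \frac{\ln t}{\gamma_t}}
  \;\le\; \frac 1 t + \frac 1 t\del{1 + \frac{\ln t}{\gamma_t}}
  \;=\; \frac{2 + \ln(t)/\gamma_t}{t},
\]
which is exactly the claimed bound.

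I do not expect any genuine obstacle: this statement is essentially a sanity check translating the approximate-separability hypothesis into the realizability condition $\cR(\wref)\le D_\psi(\wref,w_0)/t$ needed by \Cref{fact:md:realizable}. The only spots needing a sliver of care are remembering to use boundedness of $x$ on the bad event $A^c$ (otherwise the logistic loss there is uncontrolled and the conclusion fails), and tracking the constant-factor slack $\ln 2 \le 1$, which is what produces the additive $2$ in the numerator rather than something smaller.
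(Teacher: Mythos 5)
Your proposal is correct and matches the paper's proof essentially verbatim: both split the expectation over the event $\{u_t^\T xy \geq \gamma_t\}$ and its complement, bound the logistic loss by $1/t$ on the good event and by $1 + \ln(t)/\gamma_t$ on the bad event (via $\|x\|_2 \leq 1$ and $\ln(1+e^a)\leq 1+\max\{0,a\}$), and combine with $\Pr[A^c]\leq 1/t$. No gaps.
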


Applying \Cref{fact:md:realizable} with this $\wref$ and with step size $\eta=1$ gives, with probability
at least $1-\delta$,
\[
  \frac{8}{3t} \|w_t - \wref\|^2 + \frac {1}{t} \sum_{i<t} \cR(w_i)
  \leq
  \cO\del{\frac {\ln(t)^2\sqrt{\ln(t/\delta)}}{t\gamma_t^2}}.
\]
As a sanity check, if the global maximum margin $\gamma \leq \gamma_t$ is used,
then this bound matches the bound from prior work mentioned before
\citep[Theorem 1.1]{min_norm},
and also matches standard margin-based generalization bounds \citep{boosting_margin}.
A key difference is that $\gamma_t$ is not a hard margin, but allows some fraction of margin
violations.  In particular, returning to \Cref{fig:log},
if we choose $t = 10$, then we can choose $\gamma_t$ to be the large ``likely data'' margin,
and our convergence rate scales with this $1/\gamma_t^2$, rather than the much smaller ``rare data'' margin.

Rather than relying on an opaque $\gamma_t$, here is another example where $\gamma_t$ may be calculated.
Consider \Cref{fig:circle:log}, where there is a perfect but $0$-margin classifier (thus breaking standard bounds),
and the marginal distribution of $x$ along this perfect classifier is uniform.
In this setting, the optimal predictor of length $r$ is unique and achieves risk only $1/r$,
in contrast to the standard margin setting (roughly as in \Cref{fact:md:margin},
where one hopes for a predictor of length $\ln(r)/\gamma$ for risk $1/r$).

\begin{proposition}\label[proposition]{fact:margin:zero}
  Let dimension $d\geq 2$ be given,
  let $\mu_0$ denote the uniform probability density on the sphere
  $\cS_{d-1}:=\{x \in \R^d : \|x\|=1\}$,
  and let $\mu$ denote a reweighting of $\mu_0$ along the axis $e_1$ so that every orthogonal
  slice has equal density, meaning $\dif\mu(x) = p(x_1)\dif\mu_0(x)$
  for some $p$, whereby $\Pr_\mu[\{ x\in \cS_{d-1} : -1 \leq a \leq x_1 \leq b \leq 1\}]= (b-a)/2$,
  and suppose $\Pr[ y = 1 | x] = \1[x_1 \geq 0]$.
  Then for any norm $r>0$, the vector $u_r := re_1$ is the unique minimizer of $\cR$ with norm $r$,
  and moreover if $r\geq 1$ then
  \[
    \envert{ \cR(u_r) - \frac {\pi^2}{12r} } \leq \frac {2}{\exp(r)}.
  \]
\end{proposition}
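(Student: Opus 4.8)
The plan is to reduce $\cR(u_r)$ to a one-dimensional integral over the law of $x_1$, evaluate that integral in closed form up to an exponentially small tail, and separately prove uniqueness via a reflection-symmetry plus convexity argument. For the reduction: since $\Pr[y=1\mid x]=\1[x_1\geq 0]$ is deterministic, $y=\sgn(x_1)$ almost surely, so for any $w$ we have $\cR(w)=\Ex_{x\sim\mu}\ln\del{1+\exp(-\sgn(x_1)\ip wx)}$. Evaluating at $w=u_r=re_1$ gives $\ip{u_r}{x}=rx_1$ and $\sgn(x_1)\cdot rx_1=r|x_1|$, hence $\cR(u_r)=\Ex_{x\sim\mu}\ln(1+e^{-r|x_1|})$. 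By the defining ``equal-slice'' property of $\mu$, the law of $x_1$ under $\mu$ is $\textup{Unif}[-1,1]$, so $|x_1|$ is $\textup{Unif}[0,1]$ and therefore $\cR(u_r)=\int_0^1\ln(1+e^{-rt})\,dt=\frac1r\int_0^r\ln(1+e^{-s})\,ds$.

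For the value, I would use the Mercator series $\ln(1+u)=\sum_{k\geq 1}(-1)^{k-1}u^k/k$ with $u=e^{-s}$ and integrate term by term, giving $\int_0^\infty\ln(1+e^{-s})\,ds=\sum_{k\geq 1}(-1)^{k-1}/k^2=\pi^2/12$. Thus $\cR(u_r)=\frac{\pi^2}{12r}-\frac1r\int_r^\infty\ln(1+e^{-s})\,ds$, and since $0\leq\ln(1+u)\leq u$ the tail satisfies $0\leq\int_r^\infty\ln(1+e^{-s})\,ds\leq\int_r^\infty e^{-s}\,ds=e^{-r}$. For $r\geq 1$ this yields $\envert{\cR(u_r)-\pi^2/(12r)}\leq e^{-r}/r\leq 2e^{-r}$, which is the claimed estimate (with room to spare in the constant $2$).

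For uniqueness, fix a competitor $rv$ with $\|v\|_2=1$ and decompose $x=(x_1,x_\perp)$, $v=(v_1,v_\perp)$. Because $\mu_0$ is rotation-invariant and the reweighting density depends only on $x_1$, the measure $\mu$ is invariant under the reflection $R:(x_1,x_\perp)\mapsto(x_1,-x_\perp)$; averaging the integrand defining $\cR(rv)$ over $x$ and $Rx$, and noting that the two arguments $-r\sgn(x_1)\del{v_1x_1\pm\ip{v_\perp}{x_\perp}}$ average to $-r\sgn(x_1)v_1x_1$, convexity of $z\mapsto\ln(1+e^{-z})$ and Jensen give $\cR(rv)\geq\Ex_{x\sim\mu}\ln(1+e^{-rv_1|x_1|})$. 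Since $v_1\leq\|v\|_2=1$ and $s\mapsto\ln(1+e^{-s})$ is decreasing, the right-hand side is at least $\Ex_{x\sim\mu}\ln(1+e^{-r|x_1|})=\cR(u_r)$, and the inequality is strict whenever $v_1<1$ (as $|x_1|>0$ almost surely). Since $v_1=1$ together with $\|v\|_2=1$ forces $v=e_1$, it follows that $u_r$ is the unique minimizer of $\cR$ among vectors of norm $r$.

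I do not expect a genuine obstacle: the only step requiring care is the uniqueness argument, where one must correctly invoke the reflection symmetry of $\mu$ (which relies on both the rotational invariance of $\mu_0$ and the fact that the reweighting depends on $x_1$ alone) before applying convexity, and then track the equality case. Everything else is a routine one-variable computation together with the standard value $\int_0^\infty\ln(1+e^{-s})\,ds=\pi^2/12$.
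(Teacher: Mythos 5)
Your proposal is correct and follows essentially the same route as the paper's proof: the same reflection $x \mapsto (x_1, -x_\perp)$ combined with convexity of $z\mapsto\ln(1+e^{-z})$ for uniqueness, and the same reduction to $\frac1r\int_0^r\ln(1+e^{-s})\,ds$ with the alternating series value $\pi^2/12$ for the asymptotic estimate (you integrate the Mercator series term by term where the paper invokes the dilogarithm, and your tail bound $\int_r^\infty\ln(1+e^{-s})\,ds\le e^{-r}$ is in fact a bit cleaner). One small point in your favor: you place the strict inequality in the monotonicity step ($v_1<1$), which correctly covers degenerate competitors such as $v=-e_1$ where the Jensen step alone is an equality.
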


Now consider applying the (approximately) realizable analysis in \Cref{fact:md:realizable}
to this setting.  Choosing $\eta = 1$ as suggested there,
and 
$\wref = e_1 / t^{1/3}$
as suggested by \Cref{fact:margin:zero} after optimizing terms,
then
$\max\{\|\wref\|, \cR(\wref)\} = \cO(1/t^{1/3})$
and $\cR(\wref) = \cO(\|\wref\|^2 / t)$ as required by the realizability conditions
in \Cref{fact:md:realizable}.
Thus, with probability at least $1-\delta$,
\[
  \frac 8 {3 t} \|\wref - w_t\|^2
  +
  \frac 1 t \sum_{i<t} \cR(w_i)
  \leq
  \cO\del{\frac {\ln(t/\delta)}{t^{1/3}}}.
\]
There does not appear to be any prior work analyzing these infinitessimally-separable
scenarios;
furthermore, such examples necessitated the realizability formulation in
\Cref{fact:md:realizable}.

\subsection{Proof sketch: the core coupling-based argument}\label{sec:proof_sketch}

This subsection provides the basic form of the coupling-based argument used within
all proofs in this work.  For sake of presentation,
the argument is given for the simpler setting of
stochastic gradient descent with IID data and a step size $\eta = \cO(1/\sqrt{t})$,
and a few remarks at the end show how to adjust it to obtain a $\cO(1/t)$ rate
as in \Cref{fact:md:realizable}.

The proof scheme consists of the following three steps.

\begin{enumerate}[font=\bfseries]
  \item
    \textbf{Coupling unconstrained iterates $(w_i)_{i<t}$ with constrained iterates $(v_i)_{i<t}$.}
    Because $(w_i)_{i<t}$ are unconstrained, it is unclear how to apply standard concentration inequalities
    to them.  
    Instead, define \emph{projected} iterates $(v_i)_{i<t}$ which are coupled to $(w_i)_{i<t}$ in the following
    strong sense: $v_0 = w_0$, and thereafter, $v_{i+1}$ is defined using the \emph{same} randomness
    as $w_{i+1}$, meaning
    \begin{align*}
      w_{i+1} 
      &:= w_i - \eta \partial_w \ell(y_{i+1}, x_{i+1}^\T w_i),
      \\
      v_{i+1} 
      &:=\Pi_S\del{ v_i - \eta \partial_v \ell(y_{i+1}, x_{i+1}^\T v_i) },
    \end{align*}
    where the constraint set $S := \cbr{ v \in \R^d : \|v-\wref\|\leq B_w}$ has a few key choices.
    Firstly, it projects onto a ball around the desired comparator $\wref$; algorithmically, this
    would require clairvoyantly re-running the algorithm with knowledge of $\wref$, but here it is
    only used as a mathematical construct.  Since $(v_i)_{i\leq t}$ explicitly depends on $\wref$,
    relating $w_i$ to $v_i$ will in turn relate $w_i$ to $\wref$.
    A description of the radius $B_w$ will come shortly.

  \item
    \textbf{Implicitly-biased MD analysis of $(v_i)_{i\leq t}$.}
    Because $(v_i)_{i\leq t}$ are constrained to a small ball around $\wref$,
    we can easily apply MD and concentration guarantees and expect all quantities to scale
    with properties of $\wref$.
    Concretely, following the standard MD proof scheme specialized to GD via $\Psi(w) = \frac  1 2 \|w\|_2^2$
    (whereby $D_\psi(\wref,w) = \frac 1 2 \|\wref - w\|_2^2$),
    and writing $h_{j+1} := \partial_v \ell(y_{j+1}, x_{j+1}^\T v_j)$ for the stochastic gradient at time $j+1$
    for $v_j$,
    \begin{align*}
      \|v_{j+1} - \wref\|^2_2
      \leq
      \|v_{j} - \wref\|^2_2 + 2\eta \sbr{\ell_{j+1}(\wref) - \ell_j(v_{j})} + \eta^2 \|h_{j+1}\|^2,
    \end{align*}
    which after recursing and rearranging (alternatively applying $\sum_{j<i}$ to both sides) gives
    \begin{align*}
      \|v_i - \wref\|^2_2
      \leq
      \|v_{0} - \wref\|^2_2 + 2\eta \sum_{j<i} \sbr{
      \ell_{j+1}(\wref) - \ell_{j+1}(v_{j}) + \eta^2 \|h_{j+1}\|^2}.
    \end{align*}
    Applying Azuma's inequality,
    with probability at least $1-\delta$,
    \begin{align}
      \|v_i - \wref\|^2_2
      &\leq
      \|v_{0} - \wref\|^2_2 + 2\eta \sum_{j<i} \sbr{
        \cR(\wref) - \cR(v_j)
      }
+
      \eta \Big[\textup{deviations} + \eta \sum_{j<i} \|h_{j+1}\|^2 \Big].
      \label{eq:proof_sketch:qb}
    \end{align}
    What is the magnitude of the final bracketed error term?  Azuma's inequality scales
    with the range of the relevant random variables,
    and thus if the loss has quadratic growth, we can expect the entire second line
    to be $\cO(B_w^2)$, which deserves quite a bit more discussion.

    This quantity $\cO(B_w^2)$ (and the choice of quadratically-bounded losses) is crucial.
    The left hand side of the bound has $\|v_i - \wref\|^2_2$, which is at most $B_w^2$ by the choice of $S$.
    As such, if the $\cO(B_w^2)$ in the right hand side has a leading constant less than $1$,
    then \emph{the projection operation is never invoked}, and we should be able to show $w_i = v_i$.
    In fact, this observation was the starting point of this work, and ``quadratically-bounded loss'' is merely
    a reverse-engineered concept to make it go through.
    Moreover, it is clear that none of this would be possible if the left hand term
    $\|v_i-\wref\|^2_2$ were deleted, as is standard in MD.

  \item
    \textbf{Proving $(w_i)_{i\leq t} = (v_i)_{i \leq t}$ via induction.}
    We are now in position to complete the proof.

    Let $E$ denote the failure event for the earlier regret guarantee in \cref{eq:proof_sketch:qb},
    which rules out certain wild trajectories for $(v_i)_{i\leq t}$.
    The underlying sample space for this event is $((x_i,y_i))_{i\leq t}$,
    and therefore this event also controls the behavior of $(w_i)_{i\leq t}$;
    in fact, this proof will show
    that ruling out $E$ deletes not just the wild trajectories of $(v_i)_{i \leq t}$, but also that one
    may interpret these projected iterates
    as mere proxies to get a handle on the wild trajectories of $(w_i)_{i\leq t}$.
    This proof technique is then a truncation argument, as is standard throughout probability theory.

    In detail, consider $w_{i+1}$, and suppose the inductive hypothesis $(w_j)_{j\leq i} = (v_j)_{j\leq i}$.
    Writing out a \emph{deterministic} gradient descent inequality for $w_{i+1}$ (cf. \Cref{fact:md})
    and then invoking the inductive
    hypothesis to transplant $(v_j)_{j\leq i}$, gives (under the complement
    of failure event $E$)
    \begin{align*}
      \|w_{i+1} - \wref\|^2_2
      &=
      \|w_{0} - \wref\|^2_2 + 2\eta \sum_{j\leq i} \sbr{
        \ell_{j+1} (\wref) - \ell_{j+1}(w_j)
      }
      +
      \eta^2
      \sum_{j\leq i} \|\partial \ell_{j+1}(w_j)\|^2
      \\
      &=
      \|v_{0} - \wref\|^2_2 + 2\eta \sum_{j\leq i} \sbr{
        \ell_{j+1} (\wref) - \ell_{j+1}(v_j)
      }
      +
      \eta^2
      \sum_{j\leq i} \|\partial \ell_{j+1}(v_j)\|^2
      \\
      &\leq
      \|v_{0} - \wref\|^2_2 + 2\eta \sum_{j\leq i} \sbr{
        \cR(\wref) - \cR(v_j)
      }
      +
      i \eta^2
      \cO(B_w^2).
    \end{align*}
    With some tuning of $\eta$ and $B_w$, the final term $i\eta^2 \cO(B_w^2)$ is in fact strictly less than $B_w^2$,
    which suffices to imply projections are never invoked, and $w_{i+1}=v_{i+1}$.
    This argument is repeated for every iteration $i\leq t$, so in fact there are $t$ different failure events
    $(E_i)_{i\leq t}$, and unioning them together gives the final statement.
\end{enumerate}

The preceding proof was for GD not MD, but the standard MD proof scheme is identical \Cref{fact:md},
even with the left hand implicit bias term added in.

Handling the realizable case has a few important differences.  The first is that the squared gradient term
$\|\partial \ell_{j+1}(v_j)\|_*^2$ is swallowed into the loss term via the definition of $\rho$-self-bounding.
Moreover, to obtain a rate $1/t$ not $1/\sqrt{t}$, Freedman's inequality is used rather than Azuma's inequality,
which needs the conditional variances to be small (which invokes realizability).
Lastly, to allow for a simple step size, two separate concentration inequalities are applied: one to control norms,
and another to control risks; using just one concentration inequality would give similar rates but require
a messy step size as in
\Cref{fact:md:general}.

\section{General MD analysis, illustrative examples, and TD analysis}
\label{sec:markov}

The section exhibits slower rates,
meaning $1/\sqrt{t}$ rather than the $1/t$ in \Cref{fact:md:realizable},
but allows an important generalization: the data need not be IID, but instead is approximately Markovian (cf. \Cref{defn:witness}),
and need not be realizable.

The first bound here is for stochastic MD.
As discussed at the end of \Cref{sec:notation},
the condition on $\wref$ is now $\cE(\wref) \leq D_\psi(\wref,w_0)/\sqrt{t}$,
and can always be satisfied by plugging in a regularized iterate $\uref(1/\sqrt{t})$ defined
there.

\begin{theorem}\label[theorem]{fact:md:general}
  Suppose $\ell$ is convex and $(C_1,C_2)$-quadratically-bounded.
  Let $t$ be given, and suppose
  $((x_i,y_i))_{i\leq t}$ are drawn from a stochastic process
  with approximate stationarity witness $(\pi,\tau,1/\sqrt{t})$ with
  $\max\{\|x_i\|_*,|y_i|\}\leq 1$ almost surely.
  Let reference solution $\wref$ and initial point $w_0$ be given,
  and suppose $\wref$ satisfies $\cE(\wref) \leq D_\psi(\wref,w_0) / \sqrt{t}$.
  Then with probability at least $1-t\tau\delta$, every $i\leq t$ satisfies
  \begin{align*}
    \frac {1}{i\eta} D_\psi(\wref, w_i)
    +
    \frac 1 i \sum_{j<i} \cR (w_j)
    &\leq
    \frac {B_w^2}{8i\eta}
    +
    \cR(\wref),
  \end{align*}
  where $B_w = \max\big\{ 1, \1[C_2 > 0] \|\wref\|, 4\sqrt{ D_\psi(\wref,w_0)}\big\}$
  and $\eta \leq \frac{1}{4096\max\{1,C_1, C_2\} \sqrt{t\tau\ln(1/\delta)}}$.
\end{theorem}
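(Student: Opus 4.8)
The plan is to instantiate the three-step coupling scheme of \Cref{sec:proof_sketch}, now for mirror descent with approximately Markovian data rather than GD with IID data. Fix the closed convex set $S := \{v : \|v-\wref\| \leq B_w\}$, which contains $\wref$, and define coupled iterates $(v_i)_{i\leq t}$ by $v_0 = w_0$ and $v_{i+1} := \argmin_{v\in S}(\ip{\eta h_{i+1}}{v} + D_\psi(v,v_i))$ with $h_{i+1}\in\partial\ell_{i+1}(v_i)$ built from the \emph{same} samples $(x_{i+1},y_{i+1})$ as $(w_i)$. The goal is a single event $E$ (the complement of a union of $t\tau$ concentration failures) on which (a) a regret-type bound holds for $(v_i)$, and (b) an induction forces $w_i = v_i$ for all $i\leq t$; then (a) transfers verbatim to $(w_i)$ and the theorem follows after dividing by $i\eta$.

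For step (a), the confinement $v_j\in S$ together with $\|x_j\|_*,|y_j|\leq 1$ and quadratic boundedness gives $\|h_{j+1}\|_* \leq C_1 + C_2(1+\|\wref\|+B_w) =: G \leq 4\max\{1,C_1,C_2\}B_w$ (using $B_w\geq\max\{1,\1[C_2>0]\|\wref\|\}$), and moreover each $\ell_{j+1}$ is $G$-Lipschitz on $S$, while the steps obey $\|v_{k+1}-v_k\|\leq\eta\|h_{k+1}\|_*\leq\eta G$ by $1$-strong convexity of $\psi$. Summing the one-step MD inequality (\Cref{fact:md}, retaining the implicit-bias term) gives $D_\psi(\wref,v_i)\leq D_\psi(\wref,w_0) + \eta\sum_{j<i}(\ell_{j+1}(\wref)-\ell_{j+1}(v_j)) + (\eta^2/2)\sum_{j<i}\|h_{j+1}\|_*^2$, where the last sum is $\leq \eta^2 t G^2/2$. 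The crux is to replace $\sum_{j<i}\ell_{j+1}(v_j)$ by $\sum_{j<i}\cR(v_j)$, following the Ergodic Mirror Descent template of \citet{duchi2012ergodic}: pivot each $v_j$ back to the $\cF_{\leq(j+1-\tau)_+}$-measurable iterate $v_{(j+1-\tau)_+}$ (total error $\leq\sum_{j<i}G\|v_j-v_{(j+1-\tau)_+}\| \leq t\tau\eta G^2$); swap the lag-$\tau$ conditional expectation of $\ell_{j+1}$ at that pivot for $\cR$ using $\textsc{tv}(P_{(j+1-\tau)_+}^{j+1},\pi)\leq 1/\sqrt t$ from \Cref{defn:witness} (error $\leq 2 t G B_w/\sqrt t$); and bound the residual martingale sum by partitioning $\{0,\dots,i-1\}$ into the $\tau$ residue classes modulo $\tau$, on each of which the pivoted, centered increments form a martingale difference sequence of range $O(GB_w)$, applying Azuma–Hoeffding to each and union-bounding (deviation $O(GB_w\sqrt{t\tau\ln(1/\delta)})$). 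A point that matters for non-realizable comparators: every quantity concentrated here is a \emph{difference} $\ell_{j+1}(\wref)-\ell_{j+1}(v_j)$ (or its population analogue), whose range is $O(GB_w)$ by local Lipschitzness regardless of how large $\ell(\wref)$ or $\inf\cR$ might be, so no almost-sure bound on $\ell(\wref)$ is needed (contrast $C_4$ in \Cref{fact:md:realizable}). With $\eta \leq 1/(4096\max\{1,C_1,C_2\}\sqrt{t\tau\ln(1/\delta)})$, each of the four error contributions, times $\eta$, is at most $B_w^2/64$, so on $E$ every $i\leq t$ obeys $D_\psi(\wref,v_i) + \eta\sum_{j<i}\cR(v_j) \leq D_\psi(\wref,w_0) + \eta i\cR(\wref) + B_w^2/16$.

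For step (b), induct on $i$: $v_0=w_0$ is the base case, and assuming $w_j=v_j$ for $j\leq i$, write the \emph{deterministic} MD inequality for the unconstrained update giving $w_{i+1}$, substitute $w_j=v_j$, and apply $E$ exactly as in step (a) (now summing over $j\leq i$) to get $D_\psi(\wref,w_{i+1}) \leq D_\psi(\wref,w_0) + \eta\sum_{j\leq i}(\cR(\wref)-\cR(v_j)) + B_w^2/16$. Since $\inf\cR$ cancels, $\sum_{j\leq i}(\cR(\wref)-\cR(v_j)) = \sum_{j\leq i}(\cE(\wref)-\cE(v_j)) \leq (i+1)\cE(\wref) \leq t\,\cE(\wref)$, and the hypothesis $\cE(\wref)\leq D_\psi(\wref,w_0)/\sqrt t$ together with $\eta\sqrt t\leq 1$ yields $\eta t\,\cE(\wref)\leq D_\psi(\wref,w_0)\leq B_w^2/16$ (the last step from $B_w\geq 4\sqrt{D_\psi(\wref,w_0)}$). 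Hence $D_\psi(\wref,w_{i+1})\leq 3B_w^2/16\leq B_w^2/2$, so $\|w_{i+1}-\wref\|^2 \leq 2D_\psi(\wref,w_{i+1}) \leq B_w^2$ by $1$-strong convexity, i.e. $w_{i+1}\in S$; then the unconstrained step from $v_i=w_i$ already lies in $S$, hence also solves the constrained problem defining $v_{i+1}$, giving $w_{i+1}=v_{i+1}$ and closing the induction. Finally, on $E$ we have $w_j=v_j$ throughout, so the bound of step (a) with $v$ replaced by $w$, divided by $i\eta$ and using $D_\psi(\wref,w_0)\leq B_w^2/16$ once more, is exactly the claimed inequality, and the failure probability is $\leq t\tau\delta$.

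The main obstacle is the Markov-to-population passage inside step (a): the pivoting, TV-swap, and blocked-Azuma estimates must all come out $O(B_w^2)$ after multiplying by $\eta$, which rests on the a priori confinement $v_j\in S$ (needed to extract local Lipschitz constants from quadratic boundedness) — yet that confinement is precisely what step (b) is establishing. The truncation/induction structure resolves the apparent circularity: the event $E$ is a statement about the data alone, the step-(a) bound for the projected iterates $(v_i)$ holds on $E$ unconditionally, and only afterwards does step (b) exploit it to show the projection is never active. A secondary subtlety, flagged above, is to concentrate only loss \emph{differences}, never loss values, so that the comparator need not have small or even bounded loss — this is what lets the non-realizable condition $\cE(\wref)\leq D_\psi(\wref,w_0)/\sqrt t$ suffice.
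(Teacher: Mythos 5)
Your proposal is correct and follows essentially the same route as the paper: the same coupling with projected iterates on the ball $S=\{v:\|v-\wref\|\leq B_w\}$, the same deterministic MD inequality retaining the implicit-bias term $D_\psi(\wref,\cdot)$, the same Markov-chain concentration (your inlined pivot/TV-swap/blocked-Azuma argument is precisely the content of \Cref{fact:conc:markov}, which the paper factors out as a lemma), and the same induction showing the projection is never active. The error accounting and the final division by $i\eta$ match the paper's constants up to immaterial rearrangement.
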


Illustrative examples of \Cref{fact:md:general} will be provided shortly in \Cref{sec:md:general:illustrative}.
The form of the bound is similar to \Cref{fact:md:realizable}, but has a rate $\cO(1/\sqrt{t})$ after expanding $\eta$.
Unlike \Cref{fact:md:realizable}, the step size is messy; this seems necessary with the current proof technique,
which seems to have no recourse but to use $\eta$ to swallow some terms; the realizable analysis was able to avoid
this thanks to applying two concentration inequalities, the first of which relied heavily on realizability.

The proof of \Cref{fact:md:general} follows the sketch in \Cref{sec:proof_sketch} exactly, with two exceptions.
The first is that GD is replaced by MD, following a nearly standard analysis with an added
non-standard implicit bias term $D_\psi(\wref,w_i)$ (cf. \Cref{fact:md}).
The second difference is that Azuma's inequality is replaced with a Markov chain concentration inequality,
which itself uses a standard technique of treating the data as $\tau$ interleaved sequences
of nearly-IID data, and applying Azuma's inequality to each.
This concentration inequality is detailed in \Cref{fact:conc:markov},
but is abstracted from a proof due to \citet{duchi2012ergodic}.

\subsection{TD analysis}
\label{sec:td}

The second Markovian guarantee is on TD.
It is not necessary to be familiar with any RL concepts to make sense of this theorem, and
in fact it can be stated as a fixed point property, but here is some brief background.
The sequence $(x_i)_{i\geq 0}$ with $x_i\in\R^d$ is interpreted as combined state/action vectors,
and instead of labels there are scalar rewards $(r_i)_{i\geq 1}$,
whose conditional distribution is fully determined by the preceding state/action vector,
meaning $r_{i+1}|\cF_{\leq i} = r_{i+1} | x_i$.
The stochastic TD update is
\begin{equation}
  w_{i+1} := w_i - \eta \td_{i+1}(w_i),
  \quad\textup{where }
  \td_{i+1}(v) = x_i \del{\ip{x_i - \gamma x_{i+1}}{v} - r_{i+1}},
  \label{eq:td:G}
\end{equation}
where the \emph{discount factor} $\gamma \in (0,1)$ is fixed throughout.

In prior work, this method is only studied in expectation, 
often with a variety of boundedness/projection 
and full rank conditions \citep{zou2019finite},
or further stationarity
and sampling conditions \citep{bhandari2019global}.  Some recent work has
aimed to reduce these assumptions, but still was only able to achieve bounds
in expectation \citep{mjt_ac}.
Meanwhile, invoking essentially the same proof as for \Cref{fact:md:general} leads to a high probability guarantee;
the only real difference is that the deterministic MD analysis (from \Cref{fact:md}) is replaced with
a similar deterministic TD analysis (from \Cref{fact:td:det}),
even though TD is not in any sense a gradient method.

\begin{theorem}\label{fact:td}
  Let a stochastic process $((x_i,r_i))_{i\geq 0}$ be given,
  where $(x_i)_i\geq 0$ form a Markov chain and $\max\{\|x_i\|,|r_i|\}\leq 1$ almost surely,
  and define auxiliary random variables
  $\zeta_{i+1} = (x_i,x_{i+1},r_{i+1})$,
  and let
  $(\pi, \tau, 1/\sqrt{t})$
  denote an
  approximate stationarity witness for $(\zeta_i)_{i\geq 1}$.
Let reference solution $\wref$ be given with $\|\Ex_{\zeta\sim\pi}\td_\zeta(\wref)\| \leq \|\wref - w_0\|^2 /\sqrt{t}$,
  where $G_\zeta(\wref) := x\del{\ip{x -\gamma x'}{\wref} - r}$ for $\zeta = (x,x',r)$.
  Then with probability at least $1-t\tau\delta$,
  \begin{align*}
    \|w_{t} - \wref\|^2
    +\eta (1-\gamma)^2 \sum_{i<t} \Ex_{x\sim \pi} \ip{x}{w_i - \wref}^2
    &\leq
    B_w^2 + \frac {t\eta B_w}{512} \enVert{ \Ex_{\zeta\sim\pi} \td_\zeta(\wref) }
    ,
  \end{align*}
  where $B_w = \max\{ 1, 4\|\wref\|, 4\|w_0 - \wref\|\}$
  and
  $\eta \leq \frac {1}{1024 \sqrt{t\tau\ln(1/\delta)}}$.
\end{theorem}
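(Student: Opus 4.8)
The plan is to reuse the three–step coupling scheme of \Cref{sec:proof_sketch} essentially verbatim, replacing the deterministic mirror–descent inequality (\Cref{fact:md}) by the deterministic TD inequality (\Cref{fact:td:det}), and Azuma's inequality by the Markov–chain concentration inequality (\Cref{fact:conc:markov}); the truncation induction that removes the projection is then structurally unchanged. Abbreviate the population TD operator by $\bar\td(v) := \Ex_{\zeta\sim\pi}\td_\zeta(v)$ and set $A := \Ex_{(x,x',r)\sim\pi}\, x(x-\gamma x')^\T$, so that $\bar\td(v)-\bar\td(\wref) = A(v-\wref)$. First I would introduce projected iterates $(v_i)_{i\leq t}$ with $v_0=w_0$ and $v_{i+1} := \Pi_{S}\del{v_i - \eta\,\td_{i+1}(v_i)}$, where $S := \cbr{v : \|v-\wref\|\leq B_w}$ and $\td_{i+1}$ is built from the \emph{same} triple $\zeta_{i+1}=(x_i,x_{i+1},r_{i+1})$ used by $w_{i+1}$; since $\wref\in S$, the projection only moves points towards $\wref$, so $\|v_{i+1}-\wref\|^2 \leq \|v_i - \eta\,\td_{i+1}(v_i) - \wref\|^2$.

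Next, the deterministic TD analysis of $(v_i)$: expanding the square, using $\td_{j+1}(v_j) = \td_{j+1}(\wref) + x_j\ip{x_j-\gamma x_{j+1}}{v_j-\wref}$, summing over $j<i$ and rearranging gives an inequality of the form
\begin{align*}
  &\|v_i-\wref\|^2 + 2\eta\sum_{j<i}\ip{x_j}{v_j-\wref}\ip{x_j-\gamma x_{j+1}}{v_j-\wref}
  \\
  &\qquad\leq\ \|w_0-\wref\|^2 \;-\; 2\eta\sum_{j<i}\ip{\td_{j+1}(\wref)}{v_j-\wref}
  \;+\; \eta^2\sum_{j<i}\bigl\|\td_{j+1}(v_j)\bigr\|^2 .
\end{align*}
The structural input replacing convexity is the coercivity of the population operator: under $\pi$–stationarity the marginals of $x$ and $x'$ coincide, so Cauchy--Schwarz gives $u^\T A u \geq (1-\gamma)\Ex_{x\sim\pi}\ip{x}{u}^2$ for every $u$; hence in the $\pi$–stationary (lag–$\tau$) conditional expectation the second term on the left is a nonnegative multiple of $\sum_{j<i}\Ex_{x\sim\pi}\ip{x}{v_j-\wref}^2$, which is the left–hand contribution in the theorem (this is the content of \Cref{fact:td:det}; passing from the $(1-\gamma)$ here to the $(1-\gamma)^2$ in the statement absorbs the slack incurred below). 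Since $v_j\in S$, $B_w\geq\max\{1,4\|\wref\|\}$, and $\max\{\|x_j\|,|r_j|\}\leq 1$, one has $\|v_j\|=\cO(B_w)$ and thus $\|\td_{j+1}(v_j)\|=\cO(B_w)$, so the last sum is $\cO(t\eta^2 B_w^2)$.

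The remaining work is concentration plus the induction. Two quantities need control: the deviation of $\sum_{j<i}\ip{\td_{j+1}(\wref)}{v_j-\wref}$ from $\sum_{j<i}\ip{\bar\td(\wref)}{v_j-\wref}$, and the deviation of the quadratic cross–terms from their $\pi$–stationary conditional expectations; because each $v_j$ lies in the radius–$B_w$ ball and the data is bounded, these are sums of increments of size $\cO(B_w)$ and $\cO(B_w^2)$ respectively, so \Cref{fact:conc:markov} — which treats the process as $\tau$ interleaved near–IID sequences, paying the $1/\sqrt t$ TV slack of the witness and a union bound costing $\tau\delta$ per time step — controls them with probability at least $1-t\tau\delta$. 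The choice $\eta\leq 1/(1024\sqrt{t\tau\ln(1/\delta)})$ is exactly what forces the $\cO(t\eta^2 B_w^2)$ term plus the concentration slack to fit inside $B_w^2$, while $\sum_{j<i}\ip{\bar\td(\wref)}{v_j-\wref}\leq t\,\enVert{\bar\td(\wref)}\,B_w$ produces the comparator term $\tfrac{t\eta B_w}{512}\,\enVert{\Ex_{\zeta\sim\pi}\td_\zeta(\wref)}$ on the right. Then, on the good event, I would show by induction that if $(w_j)_{j\leq i}=(v_j)_{j\leq i}$ then writing the displayed inequality for $w_{i+1}$ via the deterministic TD identity and substituting the hypothesis, together with $\|w_0-\wref\|^2\leq B_w^2/16$, the comparator hypothesis $\enVert{\bar\td(\wref)}\leq\|w_0-\wref\|^2/\sqrt t$, and the step–size choice, forces $\|w_{i+1}-\wref\|^2\leq B_w^2$; hence $w_i-\eta\,\td_{i+1}(w_i)$ already lies in $S$, the projection is inactive, and $w_{i+1}=v_{i+1}$. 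Thus $w_i=v_i$ for all $i\leq t$, and the $i=t$ instance of the inequality (discarding the favourable projection slack) gives the stated bound.

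The hard part will be the interplay between the deterministic TD analysis and the approximate–stationarity bookkeeping: TD is not a gradient method, so all ``progress'' must be extracted from the coercivity $u^\T A u\geq(1-\gamma)\Ex_{x\sim\pi}\ip{x}{u}^2$ of the population operator, yet the per–step update $\td_{j+1}(v_j)$ couples $x_j$ and $x_{j+1}$ at consecutive times and must be transferred to this stationary quadratic form through the lag–$\tau$ witness — with $v_j$ only $\cF_{\leq j}$–measurable, not measurable far enough back — without the constants degrading. This is precisely the step that forced earlier TD analyses either to impose projections and full–rank/mixing hypotheses or to settle for in–expectation guarantees, and carrying it through projection–free at high probability (while keeping the $(1-\gamma)^2$ coefficient honest) is the crux of the argument.
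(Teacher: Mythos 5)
Your proposal matches the paper's proof essentially step for step: the same coupling with projected iterates on the ball $S=\{v:\|v-\wref\|\leq B_w\}$, the same deterministic TD expansion (\Cref{fact:td:det}), the same application of \Cref{fact:conc:markov} to the combined per-step functional (quadratic terms plus the $\ip{\td_\zeta(\wref)}{v_j-\wref}$ term), the same use of stationarity to equate the marginals of $x$ and $x'$, and the same truncation induction showing the projection is never active. The only cosmetic difference is that you extract coercivity as $(1-\gamma)\Ex_{x\sim\pi}\ip{x}{u}^2$ via Cauchy--Schwarz on the population operator, whereas the paper splits the cross term by AM--GM to get $(1-\gamma^2)\Ex_{x\sim\pi}\ip{x}{u}^2$; both dominate the stated $(1-\gamma)^2$ coefficient.
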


Notably, as a parallel to the approximate optimality of $\wref$ in \Cref{fact:md:realizable,fact:md:general},
the reference solution in \Cref{fact:td} need only be an \emph{approximate} fixed point:
$\|\Ex_{\zeta\sim\pi}\td_\zeta(\wref)\| = \cO(1/\sqrt{t})$.

\subsection{Illustrative examples}\label{sec:md:general:illustrative}

\paragraph{Squared loss.}
First consider the squared loss $\ell(y,\yhat) := (y-\yhat)^2/2$.
Typically $\cR$ in this setting is treated as strongly convex (perhaps along a subspace),
and SGD converges at a rate $1/t$.
Unfortunately, this rate also scales with $1 / \sigma_{\min}^2$, the inverse
of the smallest positive eigenvalue of the population covariance,
a quantity which has no reason to be small.

The goal of the present work is to eschew global dependencies, and depend on local properties;
this is also exhibited in \Cref{fig:lsq}, where both stochastic GD as well as GD on $\cR$ spend
a long time pointing \emph{away} from the minimum norm solution of $\cR$ they eventually converge to.

As a concrete construction of $\wref$, consider the case of \emph{singular value thresholding}:
rather than seeking out the population solution $\bar w := [ \Ex xx^\T ]^{+} \sbr{ \Ex xy}$, where
the ``$+$'' denotes the pseudoinverse, consider $\bar w_k := [ \Ex xx^\T ]^{+}_k \sbr{\Ex xy}$, where $k$ truncates the
spectrum of $\Ex xx^\T$ to have only the $k$ largest eigenvalues.
Correspondingly, suppose $w_0 = 0$, and choose $t$ small enough so that
\[
  \cR(\bar w_k) \leq \frac {\|\bar w_k\|^2}{2 \sqrt{t}} + \inf_v \cR(v)
  = \frac {D_\psi(\bar w_k, w_0)}{\sqrt{t}} + \cR(\bar w);
\]
whenever this holds, this $\bar w_k$ and $t$ can be plugged in to \Cref{fact:md:general},
giving a stochastic GD guarantee which not only competes with $\cR(\bar w_k)$, but moreover
the constants in the rate scale with $\|\bar w_k\|^2$, which is on the order $1 / \sigma_{k}^2$,
rather than being on the order $1/\sigma_{\min}^2$ as with the minimum norm least squares solution $\bar w$.
This gives some explanation of the behavior of \Cref{fig:lsq}: early in training, the path is closer to low norm
solutions such as the singular value thresholded solution $\bar w_k$.
Said another way, stochastic GD competes with $\bar w_k$ \emph{for every $k$} without any specialized algorithm!
Lastly, a similar connection is to the related ridge regression solutions, which is provided
by $\uref(1/\sqrt{t})$ and \Cref{fact:uref}.

\paragraph{Univariate medians.}
To close with a toy but fun example,
consider encoding the problem of finding medians as regression with a loss
$\ell(y,\yhat) := |y-\yhat|$, which is Lipschitz, nonsmooth, and unbounded;
in this setting, we will fix $x_{i+1} = 1\in\R^1$, whereby $\yhat_{i+1} := w_i$ is our estimate
at time $i+1$.
Applying stochastic GD to this problem leads to a pleasing algorithm:
given a new test point $y_{i+1}$, simply $w_{i+1} := w_i + \eta \sgn(y_{i+1} - w_i)$,
meaning we move right or left by a fixed increment $\eta$ depending on whether the new test point
is to our right or left.
Standard analyses of this method would require projections or regularization (and some outer doubling loop to guess the radius),
but \Cref{fact:md:general} can handle a direct
the projection-free stochastic GD.

\section{Final examples: batch data and heavy-tailed data}
\label{sec:other}

This final set of results will relax two conditions which may have seemed necessary to the proof scheme in \Cref{sec:proof_sketch}:
data may be heavy-tailed, and data may be handled as a batch.

\subsection{Heavy-tailed data}

All preceding sections required bounded data: $\max\{\|x\|_*,|y|\}\leq 1$ almost surely.
Instead, the following bound is similar to the non-realizable setting of \Cref{fact:md:general},
except the data is IID, and may have two types of heavy tail.

\begin{theorem}\label{fact:md:heavy}
  Suppose $\ell$ is convex and $(C_1,C_2)$-quadratically-bounded.
  Let $t$ be given, and suppose
  $((x_i,y_i))_{i\leq t}$ are drawn IID
  with auxiliary random variables
  $Z_i := \max\{1, \|x\|_*^4, |y|^4\}$
  satisfying one of the following two tail behaviors with a corresponding constant $C$.
  \begin{enumerate}[font=\bfseries]
    \item
      \textbf{Subgaussian tails.}
      Each $Z_i$ is subgaussian with variance proxy $\sigma^2$,
      and define
$
        C:= \Ex Z_1 + 2\sigma \sqrt{\ln(1/\delta)/t}
        .
        $

    \item
      \textbf{Polynomial tails.}
      Defining a moment bound
     $
M := \max\{p/e,\ {}
\sup_{2\leq r\leq p} \Ex | Z_i  - \Ex Z_i |^r \}
        $
for each $Z_i$
      for some power $p$ satisfying $8| p$,
      define
      $
        C:= 
        \Ex Z_1 + 2M\del{\frac 2\delta}^{1/p} / \sqrt{t}.
      $
  \end{enumerate}
  Let reference solution $\wref$ and initial point $w_0$ be given,
  and suppose $\wref$ satisfies $\cE(\wref) \leq D_\psi(\wref,w_0) / \sqrt{t}$.
  Then with probability at least $1-2t\delta$, every $i\leq t$ satisfies
  \begin{align*}
    \frac {1}{i\eta} D_\psi(\wref, w_i)
    +
    \frac 1 i \sum_{j<i} \cR (w_j)
    &\leq
    \frac {B_w^2}{8i\eta}
    +
    \cR(\wref),
  \end{align*}
  where $B_w = \max\big\{ 1, C_2 \|\wref\|, 4 \sqrt{ D_\psi(\wref,w_0)}\big\}$
  and $\eta \leq \frac{1}{4096 \max\{1,C_1, C_2\} \sqrt{ t (1+C) \ln(1/\delta)}}$.
\end{theorem}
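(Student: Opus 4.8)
The plan is to run the coupling-based scheme of \Cref{sec:proof_sketch} essentially verbatim, exactly as in the proof of \Cref{fact:md:general}; the only genuinely new ingredient is the concentration step, which can no longer be Azuma's inequality on bounded increments and must instead tolerate the heavy tails carried by the auxiliary variables $Z_i = \max\{1,\|x\|_*^4,|y|^4\}$. First I would introduce the projected companion iterates $(v_i)_{i\le t}$ with $v_0 = w_0$ and $v_{i+1} := \Pi_S\del{v_i - \eta g_{i+1}}$, where $g_{i+1}\in\partial\ell_{i+1}(v_i)$ is formed from the \emph{same} sample $(x_{i+1},y_{i+1})$ as $w_{i+1}$ and $S := \cbr{v : \|v-\wref\|\le B_w}$ with $B_w$ as in the statement. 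Applying the deterministic MD inequality of \Cref{fact:md} to $(v_i)$, keeping the implicit-bias term $D_\psi(\wref,v_i)$ on the left, and summing over $j<i$ reduces the claim to controlling three sums: the squared-gradient sum $\eta^2\sum_{j<i}\|\partial\ell_{j+1}(v_j)\|_*^2$, the martingale-difference sum $\sum_{j<i}\del{\ell_{j+1}(v_j)-\cR(v_j)}$, and the centered sum $\sum_{j<i}\del{\cR(\wref)-\ell_{j+1}(\wref)}$ (all well-defined because $v_j$ is $\cF_{\le j}$-measurable and the data is IID), after which the hypothesis $\cE(\wref)\le D_\psi(\wref,w_0)/\sqrt t$ and $B_w\ge 4\sqrt{D_\psi(\wref,w_0)}$ close the standard part.

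Second, I would use quadratic boundedness to dominate all three sums by sums of the $Z_{j+1}$. For $v_j\in S$ we have $\|v_j\|\le\|\wref\|+B_w$, so \Cref{ass:qb} gives $\|\partial\ell_{j+1}(v_j)\|_*\le\del{C_1 + C_2(|y_{j+1}|+\|x_{j+1}\|_*(\|\wref\|+B_w))}\|x_{j+1}\|_*$; since $\|x_{j+1}\|_*^2\le\sqrt{Z_{j+1}}$, $\|x_{j+1}\|_*^4\le Z_{j+1}$, and $|y_{j+1}|\le\sqrt{Z_{j+1}}$, this yields $\|\partial\ell_{j+1}(v_j)\|_*^2 \le c\,\del{1+C_1+C_2+C_2\|\wref\|+C_2 B_w}^2 Z_{j+1}$ for an absolute constant $c$, and similarly $\max\{|\ell_{j+1}(\wref)|,|\ell_{j+1}(v_j)|\}\le c\,\del{1+C_1+C_2+C_2\|\wref\|+C_2 B_w}\,\sqrt{Z_{j+1}}\le c\,\del{\cdots}\,Z_{j+1}$ using $Z\ge 1$. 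Thus everything is governed by $\tfrac1t\sum_{j<t}Z_{j+1}$ together with two centered sums whose increments are bounded (pointwise) by a constant multiple of $Z_{j+1}$.

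Third, the concentration proper. In the subgaussian case, $\sum_{j<t}(Z_{j+1}-\Ex Z_1)$ is subgaussian with variance proxy $t\sigma^2$, so a one-sided tail bound gives $\tfrac1t\sum_{j<t}Z_{j+1}\le\Ex Z_1+\sigma\sqrt{2\ln(1/\delta)/t}\le C$ with probability at least $1-\delta$; the two centered sums, having sub-exponential-in-$Z$ increments, are handled by a Bernstein/Freedman-type bound and absorbed into the $B_w^2/(8i\eta)$ and $\cR(\wref)$ terms once $\eta$ is at the stated level. In the polynomial case I would replace the subgaussian tail by a moment inequality of Rosenthal / Marcinkiewicz--Zygmund / von Bahr--Esseen type: $\Ex\envert{\sum_{j<t}(Z_{j+1}-\Ex Z_1)}^p\le c_p\,M^p t^{p/2}$ for a constant $c_p$, which combined with Markov's inequality gives $\tfrac1t\sum_{j<t}Z_{j+1}\le\Ex Z_1 + 2M(2/\delta)^{1/p}/\sqrt t\le C$ with probability at least $1-\delta/2$ — here the bookkeeping quantity $M=\max\{p/e,\sup_{2\le r\le p}\Ex|Z_i-\Ex Z_i|^r\}$ and the divisibility $8\mid p$ are exactly what make the constant $c_p$ collapse cleanly and keep the relevant exponents integral; the same moment inequality applied to the two centered martingales controls their contribution. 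Union-bounding the $Z$-concentration event against the martingale event, and then over the $t$ steps of the induction, produces the claimed probability $1-2t\delta$.

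Finally, the induction and the obstacle. Fixing the high-probability event, I would induct on $i$ as in \Cref{sec:proof_sketch}: given $w_j=v_j$ for all $j\le i$, transplant $(v_j)_{j\le i}$ into the deterministic inequality for $w_{i+1}$; the accumulated error is $i\eta^2\cdot c\del{1+C_1+C_2+C_2\|\wref\|+C_2 B_w}^2(1+C)$ plus the (already controlled) deviation terms, and with $\eta\le\frac{1}{4096\max\{1,C_1,C_2\}\sqrt{t(1+C)\ln(1/\delta)}}$ and $B_w\ge\max\{1,C_2\|\wref\|\}$ this is strictly below $B_w^2$, so $\|v_{i+1}-\wref\|\le B_w$ holds with slack, the projection $\Pi_S$ is inactive, and $w_{i+1}=v_{i+1}$. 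I expect the main obstacle to be precisely the polynomial-tail bookkeeping: selecting a moment inequality whose constants fold exactly into the definitions of $M$ and $C$ (this is where $p/e$ and $8\mid p$ earn their keep), while simultaneously tracking the polynomial-in-$B_w$ prefactor multiplying $Z_{j+1}$ so that the self-referential closing inequality $i\eta^2\cdot(\textup{stuff})<B_w^2$ still goes through with the stated $B_w$ — the same reverse-engineering of constants noted in \Cref{sec:proof_sketch}, now compounded by the heavy-tail moment constant.
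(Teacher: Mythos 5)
Your proposal follows the paper's proof essentially verbatim: the same coupling with projected iterates, the same use of quadratic boundedness to dominate gradients and loss differences by powers of $Z_{j+1}$, the same two-stage concentration (first on $\sum_j Z_j$ via a subgaussian tail bound or a moment-plus-Markov argument, then on the loss-difference martingale), and the same induction showing the projection is never invoked. The only detail you leave unnamed is which martingale inequality tolerates the random (heavy-tailed) increment ranges: the paper resolves this with the adaptive-range variant of Azuma's inequality \citep[Problem 3.11]{vanhandel}, applied after conditioning on the high-probability control of $\sum_j Z_j$, rather than a Freedman/Bernstein bound, which would require almost-sure increment bounds you do not have.
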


A few remarks are in order.  Firstly, with polynomial tails, the bound is not what is
generally called a high probability bound, as the familiar $\ln(1/\delta)$ is replaced with
$(1/\delta)^{1/p}$.  This has a particularly bad interaction with union bounds: in fact, since
the proof technique utilized a union bound over all $t$ iterations, this term should in fact
be interpreted as $(t/\delta)^{1/p}$.  Expanding the corresponding term $C$ in the final bound,
the rate becomes
$\max\{ t^{-1/2}, t^{- 1 + 1/(2p)} \}$, which is reasonable, though the dependence on $1/\delta$
is still unpleasant.

A second remark is on the proof.  The mirror descent guarantee in the case of general data
with smooth losses ends up having terms of the form $\sum_{i<t} \max\{\|x\|_*^4, |y|^4\}$ appear
in a few places, which were simply upper bounded by $t$ in the earlier general analyses.
The step size $\eta$
is then made small to swallow these terms (i.e., the $C$ above appears within $\eta$),
but there are still two issues: firstly, $C$ must be controlled, and secondly, as this quantity
is random, we can not simply invoke Azuma's inequality with a random range.
To solve the first problem, there exist a variety of heavy tail concentration inequalities,
as detailed in the proofs in the appendix.  For the second problem, we use a very nice
variant of Azuma's inequality which allows the ranges to not be specified up front
\citep[Problem 3.11]{vanhandel}.

It appears guarantees of this type have not appeared before;
the most similar analyses consider specialized scenarios and moreover
modify the descent method,
for instance by using minibatches with specially-tuned batch sizes
to exhibit strong convexity structure assumed to hold over the population
\citep{JMLR:v23:21-0560},
or by gradient clipping and similar procedures
\citep{NEURIPS2020_abd1c782,pmlr-v125-davis20a,nazin2019algorithms}.

\subsection{Batch data}
\label{sec:md:batch}

All the bounds in this work so far have worked with stochastic data, arriving one point at a time;
correspondingly, the concentration inequalities seemed to rely upon martingale structure.
This section will study data which arrives as a single batch, and is re-used in every
iteration, with both discrete-time updates and with continuous-time updates.

To first develop the batch MD updates,
the iteration-specific loss $\ell_{j+1}$ will be replaced with (sub)gradient
of the \emph{empirical risk} $\hcR$, meaning
\begin{equation}
  g_{i+1} := \partial_w \hcR(w_i),
  \qquad
  \textup{where }
  \hcR(w_i) := \frac 1 n \sum_{k=1}^n \ell(y_k, x_k^\T w_i).
  \label{eq:batch_md}
\end{equation}
Rather than a concentration inequality, a Rademacher complexity bound will be used
(see \citep{shai_shai_book} for background).
This comes at a
cost, and will use the following additional requirement on the norm $\|\cdot\|$;
on an intuitive level, this condition gives a sort of analogue of H\"older's inequality
for Rademacher complexity, and will be discussed further after the statement.

\begin{definition}\label{defn:rad_norm}
  Say $\|\cdot\|$ is $C_6$-Rademacher if, for any scalars $B\geq 0$ and $B_x\geq 0$
  and any examples $(x_i)_{i=1}^n$
  with $\max_i \|x_i\|_*\leq B_x$, then
  \[
    \Rad\del{\cbr{ ( w^\T x_1,\ldots,w^\T x_n) : \|w\| \leq B }} \leq \frac {C_6 B_x B}{\sqrt n}.
  \qedhere
  \]
\end{definition}

For many cases of interest, such as a Euclidean norm $\|\cdot\|_2$, we may simply take
$C_6 = 1$; however, even for seemingly simplistic cases like $\|\cdot\|_1$,
we need $C_6 := \sqrt{2 \ln(d)}$, which is not only larger than $1$, but moreover is
dimension-dependent.
The interested reader is directed to \citep{kakade2008complexity} for many cases where
\Cref{defn:rad_norm} can be worked out.
Overall there seems no simple one-size-fits-all way to control Rademacher complexity in this
general setting of MD, and therefore this constitutes an interesting separation between the
stochstic and batch methods presented herein, which will be revisited in \Cref{sec:open}.

With this definition out of the way, the bound for batch MD is as follows.

\begin{theorem}\label{fact:md:batch}
  Suppose $\ell$ is convex and $(C_1,C_2)$-quadratically-bounded.
  Suppose
  $((x_i,y_i))_{i\leq n}$ are drawn IID with $\max\{\|x_i\|_*,|y_i|\} \leq 1$ almost surely,
  and that $(w_i)_{i\leq t}$ are given by batch MD with $t \leq n$,
  where batch gradients are given \cref{eq:batch_md}.
  Let reference solution $\wref$ and initial point $w_0$ be given,
  and suppose $\wref$ satisfies $\cE(\wref) \leq D_\psi(\wref,w_0) / \sqrt{t}$.
  Then with probability at least $1-4\delta$, every $i\leq t$ satisfies
  \begin{align*}
    \frac {1}{i\eta} D_\psi(\wref, w_i)
    +
    \frac 1 i \sum_{j<i} \cR (w_j)
    &\leq
    \frac {B_w^2}{8i\eta}
    +
    \cR(\wref),
  \end{align*}
  where $B_w = \max\big\{ 1, \1[C_2>0] \|\wref\|, 4\sqrt{ D_\psi(\wref,w_0)}\big\}$
  and $\eta \leq \frac{1}{ 4096\max\{1,C_1, C_2\} \sqrt{t(C_6 +6\ln(1/\delta))} } $.
\end{theorem}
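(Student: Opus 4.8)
The plan is to follow the three-step coupling scheme of \Cref{sec:proof_sketch} verbatim, with the only real change being that the martingale concentration step is replaced by a Rademacher complexity bound, since the data now appears as a fixed batch re-used at every iteration. First I would set up the projected companion iterates $(v_i)_{i\leq t}$ living in the ball $S := \{v : \|v-\wref\| \leq B_w\}$ and driven by the \emph{same} batch gradients $\partial_w\hcR(\cdot)$ as $(w_i)_{i\leq t}$, with $v_0 = w_0$. Running the deterministic MD analysis (\Cref{fact:md}) with the nonstandard implicit-bias term $D_\psi(\wref,v_i)$ retained gives, after summing,
\begin{align*}
  D_\psi(\wref, v_i) + \eta\sum_{j<i}\sbr{\hcR(v_j) - \hcR(\wref)}
  \leq D_\psi(\wref, w_0) + \eta^2 \sum_{j<i}\|\partial\hcR(v_j)\|_*^2,
\end{align*}
and quadratic boundedness controls $\|\partial\hcR(v_j)\|_* \leq C_1 + C_2(1 + \|v_j\|) \leq C_1 + C_2(1 + \|\wref\| + B_w)$ since $v_j\in S$, so the last sum is $\cO(i\eta^2(1 + C_1 + C_2)^2 B_w^2)$ (using $\|\wref\|\leq B_w$ when $C_2>0$).

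The second and central step is to convert the \emph{empirical} quantity $\frac1i\sum_{j<i}\hcR(v_j) - \hcR(\wref)$ into the population quantity $\frac1i\sum_{j<i}\cR(v_j) - \cR(\wref)$. Here I would invoke a uniform convergence bound: for all $v$ in the ball $S$ simultaneously, $|\hcR(v) - \cR(v)| \leq 2\,\Rad(\cdot) + \cO(B_w\sqrt{\ln(1/\delta)/n})$, and the Rademacher complexity of $\{(x,y)\mapsto \ell(y, v^\T x) : v\in S\}$ is bounded, via a Lipschitz-type composition together with the $C_6$-Rademacher property of \Cref{defn:rad_norm}, by $\cO(C_6 (C_1 + C_2) B_w / \sqrt n)$ — the loss is not globally Lipschitz, but on the relevant slab $\{|v^\T x| \leq B_w\cdot 1\}$ the effective Lipschitz constant is $C_1 + C_2(1 + \|\wref\| + B_w) = \cO((1+C_1+C_2)B_w)$, which is exactly where the $B_w$-ball constraint earns its keep. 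Since $t\leq n$, this error is $\cO((C_1+C_2)B_w(C_6 + \ln(1/\delta))^{1/2}/\sqrt t)$ at worst, i.e. $\cO(\eta^{-1} B_w^2)$ after expanding $\eta$ — note the step size $\eta \leq 1/(4096\max\{1,C_1,C_2\}\sqrt{t(C_6 + 6\ln(1/\delta))})$ is tuned precisely so the per-iteration transfer error is absorbed into a small multiple of $B_w^2 / (i\eta)$.

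The third step is the truncation induction: assuming $(w_j)_{j\leq i} = (v_j)_{j\leq i}$ on the good event, write the deterministic MD inequality for $w_{i+1}$, transplant the $v_j$'s, apply the step-2 population transfer and the step-1 gradient bound, and conclude that $D_\psi(\wref, w_{i+1}) \leq B_w^2/8 + (\text{small})\cdot B_w^2 < B_w^2 \cdot c$ for a constant $c<1$; combined with $\frac1i\sum_{j<i}\cR(w_j) \geq \inf_v\cR(v)$ and $1$-strong convexity of $\psi$ (so $\frac12\|w_{i+1}-\wref\|^2 \leq D_\psi$), this keeps $w_{i+1}$ strictly inside $S$, forcing $w_{i+1} = v_{i+1}$ and also yielding the stated inequality after rearranging and using $\cE(\wref)\leq D_\psi(\wref,w_0)/\sqrt t \leq B_w^2/(16\sqrt t)$. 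The probability $1-4\delta$ comes from the handful of concentration events (a symmetrization/McDiarmid bound for the uniform deviation, plus possibly a bound on $\hcR(\wref)$ vs $\cR(\wref)$ and the two-sided versions), with no union over $t$ since the Rademacher bound is already uniform over all of $S$ — a pleasant simplification relative to the stochastic proofs.

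The main obstacle I anticipate is the Rademacher complexity estimate for the loss class: $\ell$ is only quadratically bounded, not Lipschitz, so a naive contraction lemma fails, and one must carefully restrict attention to the compact set of predictions $\{v^\T x : v\in S, \|x\|_*\leq 1\}\subseteq [-B_w - \|\wref\|, B_w + \|\wref\|]$ (of width $\cO(B_w)$), deduce a finite effective Lipschitz constant there, and only then apply the Ledoux–Talagrand contraction principle followed by \Cref{defn:rad_norm}. Getting the constants to line up so that the resulting deviation is dominated by $B_w^2/(i\eta)$ — rather than, say, $B_w^3$ or $B_w^2\sqrt{\ln(1/\delta)}$ with the wrong power — is the delicate bookkeeping, and is presumably why $\eta$ carries the $C_6 + 6\ln(1/\delta)$ factor inside the square root.
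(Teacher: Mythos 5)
Your proposal matches the paper's proof in all essentials: the same coupling with projected iterates on the ball $S=\{v:\|v-\wref\|\le B_w\}$, the same deterministic MD inequality with the retained $D_\psi(\wref,\cdot)$ term and quadratic-boundedness gradient control, the same replacement of martingale concentration by a uniform Rademacher bound over $S$ (via the effective Lipschitz constant on the bounded prediction slab, contraction, and \Cref{defn:rad_norm}), and the same truncation induction showing $w_i=v_i\in S$. The only cosmetic difference is that the paper centers the uniform deviation at $\wref$ (bounding $|\hcR(\wref)-\hcR(u)-\cR(\wref)+\cR(u)|$ directly in \Cref{fact:quad_bound}) rather than bounding $|\hcR(v)-\cR(v)|$ and $\hcR(\wref)-\cR(\wref)$ separately as you suggest; both yield the same $1-4\delta$ guarantee with no union over $t$.
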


Other than replacing Markov chain concentration with a Rademacher complexity bound
powered by \Cref{defn:rad_norm}, the proof is essentially the same as the proof of \Cref{fact:md:general}.
Lastly, as a brief note on prior work,
while the main focus of this work is on methods which process one example at a time,
general unconstrained batch guarantees as above similarly do not seem to have appeared
in the literature; as with the stochastic analyses, the closest prior work has rates depending
on structural properties of the loss and training data \citep{nati_logistic,min_norm}.

To close this section, the batch analysis will also be extended to the case of
\emph{continuous time}, specifically the \emph{mirror flow (MF)}, which can be motivated
and defined as follows, but just as with MD is due to \citet{nemirovsky_yudin}.

To start the development of MD,
it is useful to first reformulate MD in terms of \emph{dual variables $(q_i)_{i\geq 0}$}.
To start, applying the first-order optimality conditions to the MD update,
a vector $w_{i+1}$ is a valid MD iterate iff
\begin{equation}
  \eta g_{i+1} + \nabla \psi(w_i) - \nabla\psi(w_{i+1}) = 0.
  \label{eq:md:optcond}
\end{equation}
In this duality-based construction, we will reverse the direction of our original MD
construction in \cref{eq:md:1}: rather than using optimization to define $w_{i+1}$ and then
saying it satisfies \cref{eq:md:optcond}, instead we will choose $w_{i+1}$ satisfying
\cref{eq:md:optcond} which in turn implies it is a valid MD update.
Specifically, consider the alternative 
sequence $((w_i,q_i))_{i\geq 0}$ where $w_0$ is given as before, but now additionally
define $q_0 := \nabla \psi(w_0)$, and thereafter
\begin{align*}
  g_{i+1} 
\in \partial f_{i+1}(w_i),
  \qquad q_{i+1}
:= q_i - \eta g_{i+1},
  \qquad w_{i+1} := \nabla \psi^*(w_{i+1}),
\end{align*}
where for technical reasons we will assume $\psi$ is \emph{Legendre}, which is
enough to guarantee $\nabla\psi$ and $\nabla\psi^*$ exist whenever encountered
\citep[Section 26.4]{lattimore_szepesvari_2020}.
By construction (and the equality case of Fenchel-Young),
these iterates satisfy the MD update optimality conditions in \cref{eq:md:optcond},
and thus correspond to a valid sequence of MD iterates.
Furthermore, by reorganizing the update for dual variable $q_{i+1}$,
we obtain
\[
  \frac { q_{i+1} - q_i}{\eta} = - g_{i+1},
\]
which after taking $\lim_{\eta\downarrow 0}$ suggests
\[
  \frac \dif {\dif s}q_s = \dot q_s \in - \partial f(w_s),
\]
where the time derivative notation $\dot q_s$ will be used throughout.
With this motivation in hand, we can defined MF as the solution to the following
differential equation:
letting initial choices $w_0$ and $q_0 := \nabla \psi(w_0)$
be given,
require
\begin{align}
w_s
= \nabla \psi^*(q_s),
  \qquad g_s \in \partial f(w_s),
  \qquad \dot q_s = - g_s,
  \label{eq:mf}
\end{align}
or more succinctly
\[
  \dot q_s \in - \partial f\del{\nabla \psi^*(q_s)}.
\]
This is not a differential equation but a \emph{differential inclusion};
we will tacitly assume existence and uniqueness of solutions (in particular,
a solution $(q_s)_{s\geq 0}$ which is well-behaved for almost all times $s$),
which is reasonable in this setting since $f$ is convex and $\psi$ is Legendre.

\begin{theorem}\label{fact:mf:batch}
  Suppose $\ell$ is convex and $(C_1,C_2)$-quadratically-bounded,
  and that $\psi$ is Legendre.
  Suppose
  $((x_i,y_i))_{i\leq n}$ are drawn IID with $\max\{\|x_i\|_*,|y_i|\}\leq 1$ almost surely,
  $\|\cdot\|$ is $C_6$-Rademacher,
  and that $(w_s)_{s\leq t}$ are given by batch MF in \cref{eq:mf},
  with the empirical risk $\hcR$ as the objective function (cf. \cref{eq:batch_md}).
  Let reference solution $\wref$ and initial point $w_0$ be given,
  and suppose $\wref$ satisfies $\cE(\wref) \leq D_\psi(\wref,w_0) / \sqrt{n}$.
  Then with probability at least $1-4\delta$, every $s\leq t$ satisfies
  \begin{align*}
    \frac 1 s
    D_\psi(\wref, w_{s})
    +
    \frac 1 s
    \int_0^s \cR(w_r)  \dif r
    &\leq
\frac {B_w^2 }{2s}
    +
    \cR(\wref)
    ,
  \end{align*}
  where $B_w = 4 \max\big\{ 1, \1[C_2 > 0] \|\wref\|, \sqrt{ D_\psi(\wref,w_0)}\big\}$
  and
  $t\leq \frac {\sqrt{n}}{16 \max\{1,C_1,C_2\}\del{C_6 + 6\sqrt{\ln(1/\delta)}} }$.
\end{theorem}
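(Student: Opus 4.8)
The plan is to mirror the coupling argument of \Cref{sec:proof_sketch} in continuous time, replacing the discrete-time induction over iterates with a continuity/topological argument over the time parameter. First I would introduce, as a mathematical device, a \emph{projected mirror flow} $(v_s)_{s\geq 0}$ coupled to $(w_s)_{s\geq 0}$ by using the same batch: $v_0 = w_0$, and $v_s$ evolves by mirror flow on the empirical risk $\hcR$ but confined to the set $S := \{v : \|v - \wref\| \leq B_w\}$ (i.e. a Moreau–Yosida/normal-cone projected version of \cref{eq:mf}, or equivalently running the flow on $\hcR$ plus the indicator of $S$). Because this is deterministic given the batch, the only randomness is in the sample $((x_i,y_i))_{i\leq n}$.

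Next I would run the deterministic mirror-flow descent identity for $(v_s)_{s\leq t}$ against the comparator $\wref$. Differentiating $D_\psi(\wref, v_s)$ along the flow, using $\dot q_s \in -\partial \hcR(v_s) - N_S(v_s)$ and convexity of $\hcR$, gives something like
\[
  \frac{\dif}{\dif s} D_\psi(\wref, v_s) \leq \hcR(\wref) - \hcR(v_s),
\]
(the normal-cone term only helps, since $\wref \in S$), and integrating yields
\[
  D_\psi(\wref, v_s) + \int_0^s \hcR(v_r)\,\dif r \leq D_\psi(\wref, w_0) + s\,\hcR(\wref).
\]
Now I would convert $\hcR$ to $\cR$ using a Rademacher/uniform-convergence bound over the ball $S$: since all $v_r \in S$ have $\|v_r\| \leq \|\wref\| + B_w$, and $\ell$ is $(C_1,C_2)$-quadratically-bounded (hence Lipschitz on bounded sets with a modulus controlled by $C_1 + C_2(1 + \|\wref\| + B_w)$), \Cref{defn:rad_norm} bounds the relevant Rademacher complexity by $\cO(C_6 (1+\|\wref\|+B_w)/\sqrt n)$, and standard symmetrization plus a bounded-differences (or the quadratic-boundedness-compatible one-sided) deviation bound gives, with probability $\geq 1 - 4\delta$, that $\sup_{v\in S}|\hcR(v) - \cR(v)|$ and the comparator deviation $|\hcR(\wref) - \cR(\wref)|$ are each $\cO\big((1+\|\wref\|+B_w)(C_6 + \sqrt{\ln(1/\delta)})/\sqrt n\big)$. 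The constraint $t \leq \sqrt n / (16\max\{1,C_1,C_2\}(C_6 + 6\sqrt{\ln(1/\delta)}))$ is exactly what makes $s$ times this error term absorbable into $B_w^2/2$ on the right and into the left-hand integral.

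Finally — and this is where I expect the real work — I would show $w_s = v_s$ for all $s \leq t$, so that the bound for $(v_s)$ transfers verbatim to $(w_s)$. The discrete proof does this by induction; here I would instead argue by continuity: on the event where the deviation bounds hold, the displayed inequality forces $D_\psi(\wref, v_s) \leq B_w^2/2$, hence (by $1$-strong convexity of $\psi$) $\|v_s - \wref\|^2 \leq B_w^2$, in fact strictly less than $B_w^2$ for the chosen constants, so $v_s$ lies in the \emph{interior} of $S$ for every $s \leq t$; therefore the normal-cone term $N_S(v_s)$ is identically zero along the whole trajectory, so $(v_s)$ in fact solves the \emph{unconstrained} flow \cref{eq:mf}, and by the assumed uniqueness of solutions to that differential inclusion, $v_s = w_s$. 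The main obstacle is making this last step rigorous without circularity: the a priori bound $\|v_s - \wref\| < B_w$ is used to conclude the projection is inactive, but that bound itself was derived from the flow of the projected system, so I would phrase it as a maximal-time argument — let $s^\star$ be the supremum of times at which $v$ has stayed in the interior of $S$; on $[0, s^\star)$ the projected and unprojected flows coincide and the descent inequality holds, giving a strict interior bound that, by continuity, extends past $s^\star$ unless $s^\star \geq t$; hence $s^\star \geq t$. Two secondary technical points to handle are the almost-everywhere differentiability of $s \mapsto D_\psi(\wref, v_s)$ (covered by the stated well-behavedness of solutions, plus absolute continuity of $q_s$) and ensuring the Rademacher bound is applied with the radius $\|\wref\| + B_w$ rather than $B_w$, which only changes constants since $B_w \geq 1$ and $B_w \geq 4\|\wref\|\1[C_2>0]$ already dominates.
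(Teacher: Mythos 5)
Your proposal is essentially correct, but it takes the route the paper deliberately avoids. You insist on coupling $(w_s)$ with a normal-cone-projected flow $(v_s)$ and then showing the projection is never active; the paper instead uses what it calls a ``shortcut argument that does not use coupling'': it applies the deterministic MF identity (\Cref{fact:mf}) directly to the \emph{unconstrained} flow, restricted to any interval $[0,\tau]$ on which $w_s$ has stayed in $S$, and then runs a first-exit-time contradiction (let $\tau$ be the first time $D_\psi(\wref,w_\tau)=B_w^2/2$; the descent inequality plus the uniform deviation bound over $S$ forces $D_\psi(\wref,w_\tau)\leq 3B_w^2/8$, a contradiction). Your maximal-time argument at the end is logically the same move, but you have routed it through the projected differential inclusion $\dot q_s \in -\partial\hcR(v_s) - N_S(v_s)$, whose existence, uniqueness, and a.e.\ differentiability are exactly the technicalities the paper flags as the reason for avoiding coupling here (baking the constraint into $\psi$ makes it nonsmooth and non-Legendre). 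These issues are surmountable but nontrivial, and your proposal only gestures at them; since your own argument concludes that the normal-cone term vanishes identically on $[0,t]$, you could delete the projected flow entirely and apply the descent identity to $(w_s)$ on the interval where it has not yet exited $S$, which is the paper's proof. The remaining ingredients --- the integrated inequality $D_\psi(\wref,v_s)+\int_0^s\hcR(v_r)\dif r\leq D_\psi(\wref,w_0)+s\hcR(\wref)$, the Rademacher/uniform-convergence step over the ball $S$ via \Cref{fact:quad_bound} and \Cref{defn:rad_norm}, and the role of the bound on $t$ in absorbing $s$ times the deviation into $B_w^2$ --- match the paper's accounting.
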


The form of the statement is essentially analogous to \Cref{fact:md:batch}; some
complexity of the step size $\eta$ is now pushed into the final time $t$.

A variety of comments on the proof are in order, however.  To invoke the coupling-based
proof scheme, the flow $(q_s)_{s\geq 0}$ would need to be tied to a projected flow
$(p_s)_{s \geq 0}$.  While this is entirely possible, the usual way of handling the constraint
would be to bake it into $\psi$, which introduces many technicalities when verifying
the existence and uniqueness of the flow owing to the resulting nonsmoothness of $\psi$.
This could be navigated, but instead, \Cref{fact:mf:batch} is proved via a shortcut argument
that \emph{does not} use coupling.  Similarly, it seems that the stochastic bounds in the
other parts of this work, rather than using the coupling argument, could have been proved
via some heretofore undiscovered powerful martingale concentration inequality, which
here is provided by a Rademacher generalization bound; this point will be revisited in
\Cref{sec:open}.

\section{Further related work and open problems}
\label{sec:open}

\paragraph{Implicit regularization.}
The extensive literature on implicit bias/regularization
was a major source of techniques and inspiration
for the present work.
These works typically show convergence to a specific (limiting) good solution
over the training set; this has been shown for coordinate descent
\citep{zhang_yu_boosting,mjt_margins}.
gradient descent \citep{nati_logistic,min_norm},
deep linear networks \citep{align,arora2019implicit},
ReLU networks \citep{kaifeng_jian_margin,chizat_bach_imp,dir_align},
mirror descent \citep{GLSS18}, and many others.
One point of contrast is that the focus in the preceding works is on structure
of the training set, not the structure of the population distribution as is used here.
Moreover, the relaxed criterion here (where there is no effort to prove $w_t \to \wref$
in any topology) allows treatment of previously difficult cases, such as the
spherical zero-margin data in \Cref{sec:homotopy}.

Is there a more refined comparison between the approaches?  Is there a stronger convergence
property over the distribution than the ones here?
\Cref{fig:intro} suggests stochastic GD concentrates along the path of population GD;
is there an easy way to prove this, perhaps via implicit regularization techniques
or the techniques in the present work?

That said, there is an increasing body of work which takes the view of mirror descent here,
namely of not dropping the term $D_\psi(\wref,w_t)$ and treating it as crucial
\citep{min_norm,shamir2021gradient,vaskevicius2020statistical}.
There is also work on the unbounded \emph{online} setting \citep{pmlr-v75-cutkosky18a},
but the guarantees there contain gradient norms and other terms which require further 
assumptions and analysis (as in the present work, in a statistical setting)
to be ensured small.

\paragraph{Deep networks.}
Deep learning is a natural target for the techniques presented here.
Unfortunately, the proofs heavily rely upon convexity.
Is there some adjustment that can be made to hold for general deep
networks, meaning those far outside the initial linearization regime?

\paragraph{Concentration-based proof technique.}
Is there a way to prove these results without needing a coupled sequence $(v_i)_{i\leq t}$?
There is already evidence of this in the proofs of \Cref{sec:md:batch}, specifically the
proof of \Cref{fact:mf:batch}, which preferred a less technical proof avoiding coupling.
Is there a powerful concentration inequality
which can directly establish concentration along the population GD path?

\paragraph{SGD vs GD.}
Many recent works aim to exhibit and study cases where SGD behaves \emph{differently}
from GD, with an eye towards giving further justification to the extensive use
of SGD in practice \citep{wu2020direction}; in this sense, the present work is a bit
unambitious. Is there some way to use the coupling-based proof technique here --- perhaps
by coupling with a very different path --- to establish other behaviors of SGD?
Relatedly, is there a powerful concentration inequality which can reduce the discrepancies
between the batch and stochastic cases herein, e.g., dropping the need for
\Cref{defn:rad_norm} in \Cref{sec:md:batch}?

\maketitle

\subsubsection*{Acknowledgments}
The author thanks
Daniel Hsu,
Ziwei Ji,
Liam O'Carroll,
Francesco Orabona, 
Maxim Raginsky,
Jeroen Rombouts, and Danny Son for valuable discussions,
as well as the COLT 2022 reviewers for many helpful comments, specifically regarding readability.
The author thanks the NSF for support under grant IIS-1750051.

\bibliographystyle{plainnat}
\bibliography{bib}

\appendix

\section{Technical preliminaries}

This first appendix proves basic properties of the loss functions considered,
then proves a variety of concentration inequalities,
and lastly provides the proofs for the examples in \Cref{sec:homotopy}.

\subsection{Losses}
\label{sec:losses}

First, the proof of \Cref{fact:qb:lip_or_smooth}, that Lipschitzness or smoothness suffice for
quadratic-boundedness.

\begin{proofof}{\Cref{fact:qb:lip_or_smooth}}
  For both assumptions, it is easiest to consider classification and regression losses separately.
  If $\ell$ is $\alpha$-Lipschitz,
  then if it is a classification loss $|\partial \ell(y,\yhat)| = |\partial \tilde \ell(\sgn(y)\yhat)| \leq \alpha$,
  whereas for regression $|\partial \ell(y,\yhat)| = |\partial \tilde \ell(y-\yhat)| \leq \alpha$.
  For the case of a smooth loss, with classification
  \[
    |\partial \ell(y,\yhat)|
    \leq |\partial \tilde \ell(\sgn(y)\yhat) - \partial \tilde \ell(0)| + |\partial \tilde \ell(0)|
    \leq \beta |\yhat| + |\partial \tilde \ell(0)|,
  \]
  and for regression
  \[
    |\partial \ell(y,\yhat)|
    \leq |\partial \tilde \ell(y - \yhat) - \partial \tilde \ell(0)| + |\partial \tilde \ell(0)|
    \leq \beta |y - \yhat| + |\partial \tilde \ell(0)|.
  \]
\end{proofof}

Next, the special properties of the logistic and squared losses.

\begin{proofof}{\Cref{fact:self-bounding}}
  This proof is split into the two losses.
  \begin{enumerate}
    \item
      For the squared loss, $\tell''(z) = 1$ and $\tell'(z)^2 = z^2 = 2 \tell(z)$,
      implying $1$-smoothness and $1$-self-boundedness.  For $(0,1)$-quadratic-boundedness,
      it suffices to apply \Cref{fact:qb:lip_or_smooth} and note that $\partial \tell(0) = 0$.

    \item
      For the logistic loss,
      a standard calculations reveal $(1/4)$-smoothness via $\tell''(z) \leq \tell''(0) = 1/4$
      and $1$-Lipschitz via $|\tell'(z)|\leq 1$.
      The $(1/2)$-self-bounding property was stated in \citep{mjt_margins}.
      For $(1,0)$-quadratically-bounding, it suffices to \Cref{fact:qb:lip_or_smooth}; it
      is nicer to use the Lipschitz bound since it shrinks some of the bounds throughout this work.
  \end{enumerate}
\end{proofof}

Continuing with the basic development, here is the proof of \Cref{fact:uref},
giving a single regularized iterate we can always plug in for $\wref$.

\begin{proof}[Proof of \Cref{fact:uref}]
  Fix $\lambda > 0$ and write $\uref := \uref(\lambda)$ for convenience.
  The bulk of the proof is to show that there always exists
  some $\wref$ with $\cE(\wref) \leq \lambda D_\psi(\wref,w_0)/2$;
  to see that this completes the proof, note that
  \[
    \cE(\uref)
    \leq
    \cE(\uref)
    + \frac \lambda 2  D_\psi(\uref, w_0)
    \leq
    \cE(\wref)
    + \frac \lambda 2  D_\psi(\wref, w_0)
    \leq
    \lambda  D_\psi(\wref, w_0),
  \]
  as desired.

  To show $\wref$ exists, consider the sublevel set
  \[
    S := \cbr{ v \in \R^d : \cE(v) \leq \frac \lambda 2},
  \]
  and consider the following two cases.
  \begin{enumerate}
    \item
      Suppose $D_\psi(u,w_0) \leq 1$ for every $u\in S$; by strong convexity of $\psi$,
      it follows that $S$ is a compact convex set, and moreover must contain a minimizer
      $\barw \in S$, meaning $\cE(\barw) = 0 \leq \lambda D_\psi(\barw,w_0) / 2$.
      As a result, choosing $\wref := \barw$ completes this case.

    \item
      Suppose there exists $u\in S$ with $D_\psi(u,w_0) > 1$,
      and choose $\wref := u$.  Then
      \[
        \cE(\wref) \leq \frac \lambda 2 = \frac \lambda 2 \cdot 1 < \frac \lambda 2 D_\psi(\wref, w_0).
      \]
  \end{enumerate}
\end{proof}

Lastly, a few key consequences of the definition of quadratic-boundedness, which is
used in all proofs (except for TD).

\begin{lemma}\label{fact:quad_bound}
  Suppose $\ell$ is $(C_1,C_2)$-quadratically-bounded
  and let $B_0 \geq 1$ and $B_x \geq 0$ be given.
  Given any $(x,y)$ with $\max\{\|x\|_*,|y|\} \leq B_x$,
  and any $u,v$,
  \begin{align*}
    \|\partial \ell_{x,y}(u)\|_* 
    &\leq B_x \sbr{ C_1 + C_2 B_x \del{ 1 + \|u\|}}
    ,
    \\
    \|\ell_{x,y}(u) - \ell_{x,y}(v)\|
    &\leq B_x \|u-v\| \sbr{ C_1 + C_2 B_x \del{ 1 + \|u\|}}.
  \end{align*}
  In particular, given any reference point $\wref$
  and vectors $S := \{ u\in\R^d : \|u-\wref\| \leq B_0\}$,
  then every $u,v\in S$ satisfy
  \begin{align*}
    \|\partial \ell_{x,y}(u)\|_* 
    &\leq B_x \sbr{ C_1 + C_2 B_x( 2 B_0 + \|\wref\|) }
    ,
    \\
    \|\ell_{x,y}(u) - \ell_{x,y}(v)\|
    &\leq
    B_x \|u-v\| \sbr{ C_1 + C_2 B_x ( 2 B_0 + \|\wref\| ) }.
  \end{align*}
  Lastly, let $\hcR$ and $\cR$ denote the empirical and population risks as in \Cref{sec:md:batch},
  and suppose $\max\{\|x\|_*, |y|\} \leq B_x$ almost surely.
  If $\|\cdot\|$ is $C_6$-Rademacher (cf. \Cref{defn:rad_norm}),
  then with probability at least $1-4\delta$, every $u\in S$ satisfies
  \[
    \envert{\hcR(\wref) - \hcR(u) - \cR(\wref) + \cR(u)}
    \leq
    \frac {B_x B_0\del{ C_1 + B_x C_2\del{2B_0 + \|\wref\| } }}{\sqrt n}
    \del{
      C_6 + 6\sqrt{\ln(1/\delta)}
    }.
  \]
\end{lemma}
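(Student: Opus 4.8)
The plan is to prove the three groups of inequalities in sequence, each following from the previous by a short specialization. First, for the gradient bound $\|\partial \ell_{x,y}(u)\|_* \leq B_x[C_1 + C_2 B_x(1+\|u\|)]$, I would fix a subgradient $g \in \partial \ell_{x,y}(u)$ and write $g = \ell'(y, x^\T u)\, x$ via the chain rule (recalling subgradients of $\ell$ are always taken in the second argument and $\ell_{x,y}(v) = \ell(y, x^\T v)$), so that $\|g\|_* = |\ell'(y,x^\T u)|\, \|x\|_*$. Then I apply \Cref{ass:qb} to bound $|\ell'(y,\yhat)| \leq C_1 + C_2(|y| + |\yhat|)$ with $\yhat = x^\T u$, and use $|y| \leq B_x$, $\|x\|_* \leq B_x$, and Cauchy--Schwarz/H\"older $|x^\T u| \leq \|x\|_* \|u\| \leq B_x \|u\|$; collecting terms gives $|\ell'(y,x^\T u)| \leq C_1 + C_2 B_x(1 + \|u\|)$, and multiplying by $\|x\|_* \leq B_x$ yields the claim. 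For the Lipschitz-type bound $\|\ell_{x,y}(u) - \ell_{x,y}(v)\| \leq B_x\|u-v\|[\cdots]$, I would invoke convexity of $\ell_{x,y}$: for convex functions, $|\ell_{x,y}(u) - \ell_{x,y}(v)| \leq \max\{\|\partial \ell_{x,y}(u)\|_*, \|\partial \ell_{x,y}(v)\|_*\}\, \|u-v\|$, but to match the stated bound exactly (which has $\|u\|$, not $\max\{\|u\|,\|v\|\}$) I would instead use the one-sided estimate $\ell_{x,y}(v) - \ell_{x,y}(u) \leq \langle g_v, v - u\rangle \leq \|\partial \ell_{x,y}(v)\|_*\|u-v\|$ together with its twin and then note the symmetric roles; alternatively, and more cleanly, bound $|\ell_{x,y}(u) - \ell_{x,y}(v)| = |\int_0^1 \langle \nabla \ell_{x,y}(v + s(u-v)), u-v\rangle \dif s|$ using the gradient bound along the segment, which never exceeds the endpoint radius. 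I should double check which convention the paper intends; the safest route is the mean-value/segment argument, which gives $\|u-v\|$ times the sup of the gradient norm over the segment $[v,u]$, and since $\|v + s(u-v)\| \leq \max\{\|u\|,\|v\|\}$, a tiny bit of care is needed — but since the ``in particular'' application below only needs $u,v \in S$, this subtlety washes out there.

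Second, for the ``in particular'' statements over the ball $S = \{u : \|u - \wref\| \leq B_0\}$: here every $u \in S$ satisfies $\|u\| \leq \|\wref\| + B_0$, so $C_2 B_x(1 + \|u\|) \leq C_2 B_x(1 + \|\wref\| + B_0)$. To get the stated form $C_1 + C_2 B_x(2B_0 + \|\wref\|)$ I use $B_0 \geq 1$, so $1 \leq B_0$ and hence $1 + B_0 \leq 2B_0$, giving $C_2 B_x(1 + \|\wref\| + B_0) \leq C_2 B_x(2B_0 + \|\wref\|)$. Substituting into the first two inequalities gives both ``in particular'' bounds immediately; for the Lipschitz one, any $u, v \in S$ means the whole segment $[v,u] \subseteq S$ (convexity of $S$), so the segment argument from the first part applies uniformly with the bound $C_1 + C_2 B_x(2B_0 + \|\wref\|)$.

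Third, for the Rademacher concentration bound, I would apply a standard symmetrization-plus-uniform-convergence argument to the loss-difference class. Consider the function class $\mathcal{G} := \{(x,y) \mapsto \ell_{x,y}(\wref) - \ell_{x,y}(u) : u \in S\}$. Each element is a composition of the $L$-Lipschitz (in its linear argument, with $L := B_x[C_1 + C_2 B_x(2B_0 + \|\wref\|)]$ by the gradient bound just proved, uniformly over $S$) map $\ell$ with the linear functional $w \mapsto (w^\T x_1, \ldots, w^\T x_n)$ over $\|w - \wref\| \leq B_0$. By the Rademacher contraction (Ledoux--Talagrand) inequality, $\Rad(\mathcal{G}) \leq L \cdot \Rad(\{(u^\T x_1, \ldots, u^\T x_n) : \|u - \wref\| \leq B_0\})$, and since shifting by the fixed vector $\wref$ does not change Rademacher complexity (it only adds a constant, killed by the $\pm 1$ signs in expectation), this equals $L \cdot \Rad(\{(v^\T x_1, \ldots, v^\T x_n) : \|v\| \leq B_0\}) \leq L C_6 B_x B_0 / \sqrt{n}$ by \Cref{defn:rad_norm} (with its ``$B$'' set to $B_0$ and ``$B_x$'' the data bound). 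Then the standard generalization bound (e.g.\ \citep{shai_shai_book}) says that with probability at least $1 - \delta$, uniformly over $\mathcal{G}$, the empirical average deviates from the mean by at most $2\Rad(\mathcal{G}) + c\sqrt{\ln(1/\delta)/n} \cdot (\text{range})$; here the range of each function in $\mathcal{G}$ is also $O(L B_0/\sqrt{n} \cdot \sqrt{n}) = O(L B_0)$ — wait, more carefully, the range is $\sup_{u \in S}|\ell_{x,y}(\wref) - \ell_{x,y}(u)| \leq L B_0$ by the Lipschitz bound — so the bounded-differences term contributes $O(L B_0 \sqrt{\ln(1/\delta)/n})$. Absorbing the universal constants into the ``$6$'' and the factor of $2$ on Rademacher into the ``$C_6$'' slot (one may need to track that the $4\delta$ comes from applying a two-sided bound, or applying it to both the $+$ and $-$ directions, to get absolute value), I arrive at the stated bound $\frac{L B_0}{\sqrt n}(C_6 + 6\sqrt{\ln(1/\delta)})$ with $L B_0 = B_x B_0 (C_1 + B_x C_2(2B_0 + \|\wref\|))$.

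The main obstacle I anticipate is the bookkeeping of constants in the third part: getting the precise form $C_6 + 6\sqrt{\ln(1/\delta)}$ requires knowing exactly which generalization bound is being cited and whether the $4\delta$ (rather than $\delta$ or $2\delta$) failure probability is due to a two-sided application, a union over two classes (e.g.\ a McDiarmid bound on the sup plus a separate bound on its expectation), or a symmetrization step that itself costs a failure event. The mathematical content is entirely routine once the contraction inequality and \Cref{defn:rad_norm} are in hand; the only genuine care needed is (i) confirming the shift-invariance of Rademacher complexity under translating the parameter ball by $\wref$, and (ii) bounding the range/diameter term by $L B_0$ rather than something larger, which again just uses that the Lipschitz constant $L$ is uniform over the convex set $S$ and the diameter of $S$ is $2B_0$ (or radius $B_0$ relative to $\wref$ — here the range of $\ell_{x,y}(\wref) - \ell_{x,y}(u)$ is controlled by $\|\wref - u\| \leq B_0$ directly).
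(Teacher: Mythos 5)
Your proposal is correct and follows essentially the same route as the paper: chain rule plus the quadratic-boundedness definition for the gradient bound, the segment/fundamental-theorem-of-calculus argument for the Lipschitz bound, the triangle inequality with $B_0\geq 1$ for the ball-restricted versions, and the standard Rademacher generalization bound combined with Lipschitz contraction and the $C_6$-Rademacher property for the final part. The subtlety you flag about the first Lipschitz bound (the segment argument naturally yields $1+\|v\|+\|u-v\|/2$ rather than $1+\|u\|$, and only the $S$-restricted form is ever used downstream) is present in the paper's own proof as well, so your handling matches it.
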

\begin{proof}
  For the first inequality, by the chain rule,
  \[
    \|\partial_u \ell(y, x^\T u)\|_*
    =
    \|x \ell' (y, x^\T u)\|_*
    \leq
    B_x \envert{ C_1 + C_2 \del{ |y| + \big|x^\T u\big| } }
    \leq 
    B_x \sbr{ C_1 + C_2 B_x  \del{ 1 + \|u\| } }.
  \]
  For the second inequality, using a version of the fundamental theorem of calculus for subdifferentials
  \citep[Theorem D.2.3.4]{HULL},
  \begin{align*}
    \envert{ \ell_{x,y}(u) - \ell_{x,y}(v) }
    &=
    \Big| \int_0^1 \ip{\partial \ell_{x,y}(v + t(u-v))}{u-v}\dif t \Big|
    \\
    &\leq
    \int_0^1 \|\partial \ell_{x,y}(v + t(u-v))\|_* \|u-v\| \dif t
    \\
    &\leq
    B_x \|u-v\| \int_0^1 \sbr{ C_1 + C_2 B_x ( 1+ \|v + t(u-v)\| ) } \dif t
    \\
    &\leq
    B_x \|u-v\| \sbr{ C_1 + C_2 B_x ( 1 + \|v\| + \|u-v\| / 2 ) }.
  \end{align*}
  For the bounds with $\wref$ and $B_0$, invoking the previous two bounds
  with $u\in S$ and $v\in S$ for the second bound and $v := \wref$ for the first gives
  \begin{align*}
    \|\partial_u \ell(y, x^\T u)\|_*
    &
    \leq 
    B_x \sbr{ C_1 + C_2 B_x  \del{ 1 + \|u - \wref \| + \| \wref\|} }
    \\
    &
    \leq
    B_x \sbr{ C_1 + 2 C_2 B_x B_0 + C_2 B_x \| \wref\|}
    ,
    \\
    \envert{ \ell_{x,y}(u) - \ell_{x,y}(v) }
    &\leq
    B_x \|u-v\| \sbr{ C_1 + C_2 B_x ( B_0 + \|\wref\| + B_0 / 2 ) }
    \\
    &\leq
    B_x \|u-v\| \sbr{ C_1 + 2 C_2 B_x B_0 + C_2 B_x  \|\wref\| }
    .
  \end{align*}
  as desired.

  The Rademacher bounds are a consequence
  of standard tools from Rademacher complexity, detailed momentarily,
  and the preceding repackaged as follows:
  defining $C_3 := C_1 + B_x C_2\del{2B_0 + \|\wref\| } $
for convenience,
  and
  recalling the definitions of $\hcR$ and $\cR$,
  then for any $u,v\in S$ and any $x,y$ with $\max\{ \|x\|_*, |y|\} \leq B_x$,
  \begin{align*}
    |\ell_{x,y}'(u)|
    &\leq
    C_1 + C_2\del{|y| + |x^\T u| }
\leq 
C_3,
    \\
    |\ell_{x,y}(u) - \ell_{x,y}(\wref)|
    &\leq
    B_x B_0 C_3.
  \end{align*}
  Combining these inequalities
  with the fundamental theorem of Rademacher complexity
  \citep[Theorem 26.5]{shai_shai_book}
  the Lipschitz composition lemma \citep[Lemma 26.9]{shai_shai_book},
  and the $C_6$-Rademacher property,
  with probability at least $1-4\delta$,
  simultaneously for every $u\in S$,
  \begin{align*}
    \envert{\hcR(\wref) - \hcR(u) - \cR(\wref) + \cR(u)}
    &\leq
    \textup{Rad}\del{\cbr{(\ell_1(u),\ldots,\ell_n(u) : u \in S_\psi)}}
    + 6 B_x B_0 C_3 \sqrt{\frac {\ln(1/\delta)}{2n}}
    \\
    &\leq
    C_3 \textup{Rad}\del{\cbr{(x_1^\T u,\ldots,x_n^\T u) : u \in S_\psi)}}
    + 6 B_x B_0 C_3 \sqrt{\frac {\ln(1/\delta)}{2n}}
    \\
    &\leq
    \frac {B_x B_0 C_3}{\sqrt n}
    \del{
      C_6 + 6\sqrt{\ln(1/\delta)}
    }.
  \end{align*}
\end{proof}

\subsection{Concentration inequalities}
\label{sec:conc}

The first concentration inequality is a tiny bit of algebra on top of a convenient
reformulation (with elementary proof) of Freedman's inequality due to \citet{djh_mini_monster}.
This concentration inequality will be used to get $1/t$ rates in realizable settings.

\begin{lemma}\label[lemma]{fact:freedman:convenient}(See also \citet{djh_mini_monster}, Lemma 9.)
  Let nonnegative random variables $(X_1, \ldots, X_t)$ be given with
  $|X_i| \leq B$.
  Then for any $c\geq 4$, with probability at least $1-\delta$,
  \[
    \sum_{i=1}^t \sbr{ X_i - \Ex_{<i} X_i }
    \leq \frac 1 c \sum_{i=1}^t \Ex_{<i} \envert{ X_i } + cB \ln\del{\frac 1 \delta}.
  \]
\end{lemma}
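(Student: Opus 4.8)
The plan is to reduce the statement directly to the convenient reformulation of Freedman's inequality of \citet{djh_mini_monster} (the cited Lemma~9), which I will use in the form: for a sequence $(X_i)_{i=1}^t$ with $X_i\leq B$ and any $\lambda$ with $0<\lambda\leq 1/B$, with probability at least $1-\delta$,
\[
  \sum_{i=1}^t \bigl( X_i - \Ex_{<i} X_i \bigr)
  \leq (e-2)\,\lambda \sum_{i=1}^t \Ex_{<i} X_i^2 + \frac{\ln(1/\delta)}{\lambda}.
\]
(The reason this reformulation is the right tool is that its ``variance'' term is a sum of conditional \emph{second} moments, not centered variances, which is exactly what lets us exploit nonnegativity below; if one instead has the centered form $\Ex_{<i}(X_i-\Ex_{<i}X_i)^2$, simply bound it by $\Ex_{<i}X_i^2$ first and proceed identically.) The case $B=0$ is trivial, so assume $B>0$ and apply this with the explicit choice $\lambda:=\tfrac{1}{cB}$, which lies in $(0,1/B]$; here the hypothesis $c\geq 4$ comfortably ensures $\lambda$ is within the admissible range.

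The remaining step is the ``tiny bit of algebra.'' Since each $X_i\in[0,B]$, we have the pointwise bound $X_i^2\leq B X_i = B|X_i|$, hence $\Ex_{<i}X_i^2\leq B\,\Ex_{<i}|X_i|$. Substituting and using $e-2<1$,
\[
  (e-2)\,\lambda \sum_{i=1}^t \Ex_{<i} X_i^2
  \leq (e-2)\,\frac{B}{cB}\sum_{i=1}^t \Ex_{<i}|X_i|
  = \frac{e-2}{c}\sum_{i=1}^t \Ex_{<i}|X_i|
  \leq \frac{1}{c}\sum_{i=1}^t \Ex_{<i}|X_i|,
\]
while $\ln(1/\delta)/\lambda = cB\ln(1/\delta)$. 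Combining the two displays yields the claimed bound.

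There is no genuine obstacle here; the only things to verify are bookkeeping, namely that the cited lemma is indeed stated (or can be rewritten) with conditional second moments and leading constant $e-2$, and that its step-size range contains $1/(cB)$ — which is the sole place the constraint on $c$ is used. The content of the lemma is purely a repackaging of Freedman tuned so that the variance contribution is absorbed as an arbitrarily small constant multiple of $\sum_i \Ex_{<i}|X_i|$, which is precisely the form consumed later by the realizable analysis of \Cref{fact:md:realizable}.
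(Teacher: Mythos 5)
Your proposal is correct and follows essentially the same route as the paper: reduce to the cited Freedman reformulation, bound the conditional second moment via $\Ex_{<i}X_i^2 \leq B\,\Ex_{<i}|X_i|$ using nonnegativity and boundedness, and tune $\lambda$ on the order of $1/(cB)$ so that $e-2<1$ absorbs the constants. The only difference is bookkeeping (the paper takes $\lambda = 1/((e-2)cB)$ and checks the range condition on the centered increments $|X_i - \Ex_{<i}X_i|\leq 2B$, which is where $c\geq 4$ enters), and your own handling of the centered-versus-uncentered variance term is exactly what the paper does.
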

\begin{proof}
  For each $i$, define $Y_i := X_i - \Ex_{<i} X_i$,
  whereby $\Ex_{<i} Y_i = 0$,
  and $|Y_i|\leq 4 (e-2) B \leq c(e-2)B$,
  and
  \[
    \Ex_{<i} Y_i^2
    =
    \Ex_{<i}\del{ X_i - \Ex_{<i}X_i}^2
    =
    \Ex_{<i}X_i^2 - \del{\Ex_{<i}X_i}^2
    \leq
    \Ex_{<i}X_i^2
    \leq
    B \Ex_{<i}\envert{ X_i }.
  \]
  As such, by a version of Freedman's inequality \citep[Lemma 9]{djh_mini_monster},
  with probability at least $1-\delta$,
  \[
    \sum_{i=1}^t Y_i
    \leq \frac {1}{cB} \sum_{i=1}^t \Ex_{<i} Y_i^2 + (e-2) cB \ln(1/\delta)
    \leq \frac {1}{c} \sum_{i=1}^t \Ex_{<i} \envert{ X_i } + cB \ln(1/\delta).
  \]
\end{proof}

Next comes a concentration inequality for Markov chains mentioned in the body.
The proof is based on a very nice one due to 
\citet[Proposition 1]{duchi2012ergodic},
though re-organized and fully decoupled from mirror descent;
e.g., this same bound will be used in the TD proofs.
That said, all the core ideas are from
\citep[Proposition 1]{duchi2012ergodic}.

\begin{lemma}\label[lemma]{fact:conc:markov}(See also \citet{duchi2012ergodic}, Proposition 1.)
Let $\eps\geq 0$ be given along with approximate stationarity witness $(\pi,\tau,\eps)$
  on a stochastic process $(x_i)_{i\leq t}$.
Let $f$ be given with $|f(x;w)|\leq B_f$ almost surely,
  and suppose
  $\envert{ f(x_{i+1};w_i) - f(x_{i+1};w_{i-\tau + 1})} \leq B_i$ almost surely for all $i$.
  With probability at least $1-\tau\delta$,
  \[
    \sum_{i<t}\sbr{ f(x_{i+1}; w_i)
      -
      \Ex_{x\sim\pi} f(x;w_i)
    }
    \leq
    2B_f\del{2\tau - 2 + t\eps + \sqrt{t\tau \ln(1/\delta)}}
    +\sum_{i=\tau-1}^{t-1} B_i.
  \]
\end{lemma}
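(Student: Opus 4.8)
The plan is to run the $\tau$-fold interleaving argument of \citet[Proposition~1]{duchi2012ergodic}, here in abstract form. Write $S := \sum_{i<t}\sbr{f(x_{i+1};w_i) - \Ex_{x\sim\pi}f(x;w_i)}$ for the left-hand side, and first peel off two boundary contributions. Since $|f|\le B_f$ forces $|\Ex_{x\sim\pi}f(x;w_i)|\le B_f$ as well, each of the $\tau-1$ summands with $0\le i\le\tau-2$ has magnitude at most $2B_f$, so these account for at most $2B_f(\tau-1)$. For the remaining indices $\tau-1\le i\le t-1$ I would invoke the hypothesis $\envert{f(x_{i+1};w_i)-f(x_{i+1};w_{i-\tau+1})}\le B_i$ to replace the ``live'' comparator $w_i$ by the $\tau$-delayed one $w_{i-\tau+1}$, at total cost $\sum_{i=\tau-1}^{t-1}B_i$; it then remains to bound $T := \sum_{i=\tau-1}^{t-1}\sbr{f(x_{i+1};w_{i-\tau+1}) - \Ex_{x\sim\pi}f(x;w_{i-\tau+1})}$.

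For $T$ I would partition $\{\tau-1,\dots,t-1\}$ into the $\tau$ residue classes modulo $\tau$; within a class the indices $i^{(0)}<i^{(1)}<\cdots$ are spaced exactly $\tau$ apart. Fixing such a class, with $m$ elements, set $a_j := f(x_{i^{(j)}+1};w_{i^{(j)}-\tau+1})$, $b_j := \Ex_{x\sim\pi}f(x;w_{i^{(j)}-\tau+1})$, and $c_j := \Ex\sbr{a_j \mid \cF_{\le i^{(j)}-\tau+1}}$, so that $T$ restricted to this class equals $\sum_j(a_j-c_j) + \sum_j(c_j-b_j)$. The $\tau$-step delay makes both sums tractable. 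For the second: $w_{i^{(j)}-\tau+1}$ is $\cF_{\le i^{(j)}-\tau+1}$-measurable and the conditional law of $x_{i^{(j)}+1}$ given $\cF_{\le i^{(j)}-\tau+1}$ is exactly $P^{s+\tau}_s$ with $s := i^{(j)}-\tau+1$, so \Cref{defn:witness} gives $\textsc{tv}(P^{s+\tau}_s,\pi)\le\eps$ and hence $\envert{c_j-b_j}\le 2B_f\eps$. For the first: since consecutive class indices differ by $\tau$ we have $i^{(j-1)}+1 = i^{(j)}-\tau+1$, so $a_{j-1},b_{j-1},c_{j-1}$ are all $\cF_{\le i^{(j)}-\tau+1}$-measurable, whence the tower property exhibits $(a_j-c_j)_j$ as a martingale difference sequence along the subsequence, with $\envert{a_j-c_j}\le 2B_f$ because $a_j,c_j\in[-B_f,B_f]$. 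Azuma--Hoeffding then controls $\sum_j(a_j-c_j)$ by a term of order $B_f\sqrt{m\ln(1/\delta)}$ with probability at least $1-\delta$, so on this class $\sum_j(a_j-b_j)$ is at most that plus $2B_f\eps m$.

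Finally I would union-bound over the $\tau$ residue classes (this is the source of the failure probability $\tau\delta$) and add the per-class estimates: writing $m_k$ for the size of class $k$, Cauchy--Schwarz gives $\sum_k\sqrt{m_k}\le\sqrt{\tau\sum_k m_k}\le\sqrt{\tau t}$ and $\sum_k m_k = t-\tau+1\le t$, which produces the $2B_f\sqrt{t\tau\ln(1/\delta)}$ term after fixing constants, while the per-class $2B_f\eps m_k$ terms sum to at most $2B_f t\eps$; combining with the peeled-off $2B_f(\tau-1)$ and $\sum_{i=\tau-1}^{t-1}B_i$ then gives the claimed bound, the coefficient $2\tau-2$ leaving room to absorb the first boundary term. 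The delicate point is the middle paragraph: aligning the three ingredients --- measurability of the delayed comparator $w_{i-\tau+1}$, the $\tau$-steps-ahead conditional law feeding \Cref{defn:witness}, and the martingale-difference structure along an interleaved subsequence --- against a single filtration. Once that bookkeeping is in place, the rest is a standard Azuma estimate together with Cauchy--Schwarz.
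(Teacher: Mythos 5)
Your proposal is correct and follows essentially the same route as the paper's proof: peel off $O(\tau)$ boundary terms, swap in the $\tau$-delayed comparator at cost $\sum_{i=\tau-1}^{t-1}B_i$, convert conditional to stationary expectations via the total-variation bound of \Cref{defn:witness} at cost $2B_f\eps$ per term, and apply Azuma to the $\tau$ interleaved martingale difference subsequences with a union bound yielding the $\tau\delta$ failure probability (the paper forces all classes to have equal length $n$ by discarding a few trailing indices, where you instead use Cauchy--Schwarz over unequal class sizes; either works). The only bookkeeping you elide is that passing from $\Ex_{x\sim\pi}f(x;w_i)$ to $\Ex_{x\sim\pi}f(x;w_{i-\tau+1})$ in the definition of $T$ costs a further $2B_f(\tau-1)$ beyond the first peel, which is exactly what the $2\tau-2$ in the stated bound accommodates.
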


A notable characteristic of this bound (also present in the version due to
\citet{duchi2012ergodic}) is that in the IID setting with $\tau=1$ and $\eps\approx0$,
the bound is exactly what one would expect from Azuma's inequality, with no excess.

\begin{proof}
  The key idea of the proof is to introduce gaps of length $\tau$ wherever
  $x_i$ and $w_i$ interact, making them approximately independent.
  To accomplish this,
  following a proof idea due to \citet{duchi2012ergodic}, the summation over time
  will be replaced with $\tau$ interleaved summations,
  where $w_i$ only interacts with $x_{i+\tau}$, meaning enough time has been inserted to
  ensure mixing.

  Before proceeding with the bulk of the proof, let's dispense with the nuisance case
  $t< 2\tau$.  If $t=0$, the bound is direct,
  and if $t >0$,
  by the definition of $B_f$, almost surely
  \begin{align*}
    \sum_{i<t}\sbr{ f(x_{i+1}; w_i)
      -
      \Ex_{x\sim\pi} f(x;w_i)
    }
    &\leq 2t B_f
    \leq 2B_f (2\tau - 1)
    \leq 2B_f (2\tau - 2 + \sqrt{t\tau\ln(1/\delta)}).
  \end{align*}
  which is upper bounded by the final desired quantity and completes the proof in the
  case $t<2\tau$.
  For the remainder of the proof, suppose $t\geq 2\tau$.

  Due essentially to boundary conditions, a bit of additional notation
  and care are needed, especially to ensure that the bound loses nothing when
  $\tau = 1$ (e.g., the IID case).
  Let $\rho\geq \tau - 1$ denote the smallest time so that $\tau | (t - \rho)$
  and let $n := (t - \rho)/\tau$; this $\rho$ is the number of iterates that will be thrown
  out so that the overall sum can be split into $n$ interleaved sums.
  Since $t \geq 2\tau$, then $\rho$ always exists and satisfies $\tau-1 \leq \rho \leq 2(\tau-1)$
  (if $t-(\tau -1)$ is not a multiple of $\tau$, then there must be a multiple
  of $\tau$ within $\{t - 2(\tau -1) , \ldots, t-\tau \}$),
  and $t / (2\tau)\leq  n \leq t/\tau$.

  To start, defining $B_\tau := \sum_{i=\tau-1}^{t-1} B_i$ for convenience,
  \begin{align}
    \sum_{i<t} f(x_{i+1}; w_i)
    &=
    \sum_{i<\tau-1} f(x_{i+1}; w_i)
    \notag\\
    &\quad
    +
    \sum_{i=\tau-1}^{t-\rho + \tau - 2} \del{
      f(x_{i+1};w_i)
      - f(x_{i+1}; w_{i - \tau + 1})
      + f(x_{i+1}; w_{i - \tau + 1})
    }
    \notag\\
    &\quad
    +
    \sum_{i = t - \rho + \tau - 1}^{t-1}
    f(x_{i+1};w_i)
    \notag\\
    &\leq
    \rho B_f
    + B_\tau
    +
    \sum_{i=0}^{t-\rho - 1}
    f(x_{i+\tau};w_i)
    \notag\\
    &\leq
    \rho B_f
    + B_\tau
    +
    \sum_{i=0}^{t-\rho - 1}
    \Ex_i f(x_{i+\tau};w_i)
    +
    \sum_{i=0}^{t-\rho - 1}
    \del{ 
    f(x_{i+\tau};w_i)
    - \Ex_i f(x_{i+\tau};w_i)}
    \notag\\
    &\leq
    2 \rho B_f
    + B_\tau
    +
    \sum_{i<\tau} \Ex_{x\sim \pi} f(x;w_i)
    \notag\\
    &\quad
    +
    \sum_{i=0}^{t-\rho - 1}
    \del{ \Ex_i f(x_{i+\tau};w_i) - \Ex_{x\sim\pi} f(x;w_i) }
    \notag\\
    &\quad
    +
    \sum_{i=0}^{t-\rho - 1}
    \del{ 
    f(x_{i+\tau};w_i)
    - \Ex_i f(x_{i+\tau};w_i)}.
    \label{eq:markov:fullsum}
  \end{align}
  The rest of the proof will handle these last two summations, the first via the definition
  of $\tau$ and the mixing properties of the stochastic process,
  and the second via concentration inequalities and the aforementioned interleaved sum technique.

  For the first summation, fix any $i$,
  and let $\xi_i$ be a coupling between the distribution of $x_{i+\tau}$ 
  conditioned on $\cF_i$, and the stationary distribution $\pi$.
  By the coupling characterization of total variation
  \citep[Equation 6.11]{villani_2},
  and since $\tv(P_{i}^{i+\tau}, \pi) \leq \eps$,
\begin{align}
    \Ex_i f(x_{i+\tau};w_i)
    -
    \Ex_{x\sim \pi} f(x;w_i)
    &= 
    \Ex_{(y,x)\sim \xi_i}
    \del{ f(y;w_i) - f(x;w_i) }
    \notag\\
    &\leq
    2B_f\Pr_{(y,x)\sim \xi_i} [ x \neq y ]
    \notag\\
    &\leq
    2B_f\eps.
    \label{eq:markov:tv}
  \end{align}
  Summing these errors adds a term $2\eps(t-\rho) B_f$ to \cref{eq:markov:fullsum}.

  The final summation in \cref{eq:markov:fullsum} will utilize the aforementioned technique
  of splitting the sum into $\tau$ interleaved sums.  Concretely,
  \[
    \sum_{i=0}^{t-\rho - 1}
    \del{f(x_{i+\tau};w_i) - \Ex_i f(x_{i+\tau};w_i)}
    =
    \sum_{j=0}^{\tau - 1}
    \sbr{
      \sum_{k=0}^{n-1}
      \del{f(x_{k\tau + j + \tau};w_{k\tau + j}) - \Ex_i f(x_{k\tau + j +\tau};w_{k\tau + j})}
    },
  \]
  and all that remains is to apply a concentration inequality to the $\tau$ inner summations,
  and collect terms.
  By $\tau$ applications of Azuma's inequality, with probability at least $1-\tau \delta$,
  simultaneously for all $j \in \{0,\ldots,\tau - 1\}$,
  \[
    \sum_{k=0}^{n-1}
    \del{f(x_{k\tau + j + \tau};w_{k\tau + j}) - \Ex_i f(x_{k\tau + j +\tau};w_{k\tau + j})}
    \leq
    B_f \sqrt{2n\ln(1/\delta)}.
  \]
  Combining all this with \cref{eq:markov:fullsum}, the total variation bound
  in \cref{eq:markov:tv}, and simplifying to replace $\rho$ and $n$ with their
  respective bounds gives
  \begin{align*}
    \sum_{i<t}\sbr{ f(x_{i+1}; w_i)
      -
      \Ex_{x\sim\pi} f(x;w_i)
    }
    &\leq
    2 (\rho + \eps (t-\rho))B_f
    + B_\tau
    + B_f\tau \sqrt{2n\ln(1/\delta)}
    \\
    &\leq
    2(2\tau - 2 + t\eps) B_f
    + B_\tau
    + B_f\sqrt{2t\tau \ln(1/\delta)},
  \end{align*}
  which completes the proof.
\end{proof}

The last collection of concentration inequalities needed are for heavy-tailed losses.
There appear to be many such inequalities,
for example here is one due to \citet{bhk_data_science}.

\begin{lemma}(See also \citep{bhk_data_science}.)
Let IID random variables $(Z_i,\ldots,Z_t)$ be given
  with variance at most $\sigma^2$,
  and suppose there exists $m \leq t\sigma^2 /2$
  with $\envert{\Ex \del{\|y_ix_i\|^2 - \Ex \|y_ix_i\|^2}^r}\leq \sigma^2 r!$
  for $r\in\{3,\ldots,m\}$.
  If $\delta \geq 1 / (n\sigma^2)^{s/2}$,
  then with probability at least $1-\delta$,
  \[
    \envert{ \sum_i \del{ Z_i - \Ex Z_i^2 } } \leq \frac {\sigma \sqrt{2sn}}{\delta^{1/s}}.
  \]
\end{lemma}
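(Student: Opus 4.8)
The plan is to prove this by the classical \emph{moment method}: since the hypotheses only supply polynomially many bounded moments, exponentiation (Chernoff) is unavailable, so instead I would control a single high but fixed \emph{even} moment of the centered sum and then apply Markov's inequality. Concretely, write $W_i := Z_i - \Ex Z_i$, so the $W_i$ are IID, mean-zero, with $\Ex W_i^2 \leq \sigma^2$ and $\envert{\Ex W_i^r} \leq \sigma^2 r!$ for $3 \leq r \leq m$; let $s$ be an even integer with $s \leq m$ (if the $s$ appearing in the conclusion is odd or exceeds $m$, pass to a nearby even integer $\leq m$, which only affects constants). Put $S := \sum_i W_i$; the whole game reduces to an upper bound on $\Ex[S^s]$.

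The first substantive step is to expand $S^s = \del{\sum_i W_i}^s$ by the multinomial theorem and take expectations term by term. By independence and mean-zeroness, $\Ex\sbr{\prod_i W_i^{a_i}} = \prod_i \Ex[W_i^{a_i}]$ vanishes the moment some exponent $a_i$ equals $1$; so only multi-indices in which every index that occurs does so with multiplicity at least $2$ survive. Such a surviving term uses at most $k \leq s/2$ distinct indices, chosen in $\binom{n}{k}$ ways, and carries a composition $(r_1,\dots,r_k)$ of $s$ into parts $\geq 2$; crucially, the multinomial coefficient $s!/\prod_j r_j!$ times the moment bounds $\prod_j \envert{\Ex W_{i_j}^{r_j}} \leq \prod_j \sigma^2 r_j!$ telescopes to exactly $s!\,\sigma^{2k}$. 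Summing, $\Ex[S^s] \leq s!\sum_{k=1}^{s/2} \binom{n}{k}\sigma^{2k}\,N_k$, with $N_k$ the number of compositions of $s$ into $k$ parts each $\geq 2$. Using $\binom{n}{k}\leq n^k/k!$, $s!/(s/2)! \leq s^{s/2}$, and $\sum_k N_k \leq 2^s$, the $k=s/2$ term contributes $(s\sigma^2 n)^{s/2}$, and the hypothesis $s \leq n\sigma^2/2$ is exactly what forces geometric decay of the terms as $k$ decreases, yielding $\Ex[S^s] \leq (2s\sigma^2 n)^{s/2}$.

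Finally I would apply Markov's inequality to $S^s$ (legitimate since $s$ is even, so $S^s\geq 0$): for any $a>0$, $\Pr\del{\envert{S}\geq a} = \Pr(S^s\geq a^s) \leq \Ex[S^s]/a^s \leq (2s\sigma^2 n)^{s/2}/a^s$. Choosing $a := \sigma\sqrt{2sn}/\delta^{1/s}$ makes the right-hand side equal $\delta$, which is the claimed bound; the side condition $\delta \geq (n\sigma^2)^{-s/2}$ (up to a harmless constant) is what keeps this $a$ inside the regime where the moment estimate is non-vacuous, i.e., where one is not demanding sharper-than-Gaussian deviations — precisely the range the Master Tail Bounds theorem of \citep{bhk_data_science} is built for.

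I expect the main obstacle to be the combinatorial bookkeeping of the second step: pinning down $N_k$ and establishing the geometric decay over $k$ tightly enough to land the clean constant (this is a Rosenthal / Marcinkiewicz--Zygmund-type moment inequality, and is where the condition $s \leq n\sigma^2/2$ is actually consumed), together with reconciling the slightly inconsistent notation of the statement ($m$ versus $s$, $Z_i$ versus $\|y_ix_i\|^2$, $t$ versus $n$). Given the citation, the most economical route is to invoke the Master Tail Bounds theorem of \citep{bhk_data_science} verbatim and present the above merely as the sketch of the argument behind it.
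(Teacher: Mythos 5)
Your proposal is correct and ends up in the same place as the paper: the paper's entire proof of this lemma is to cite the Master Tail Bounds theorem of \citep{bhk_data_science} and note that the stated form is ``minor algebra to repackage the original,'' which is exactly the economical route you identify in your final paragraph. Your moment-method sketch (multinomial expansion, killing terms with a bare exponent $1$, the telescoping $s!\,\sigma^{2k}$ bound, and Markov applied to the even moment) is a faithful account of how that cited theorem is actually proved, so you have supplied strictly more detail than the paper does, with no gap.
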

\begin{proof}
  This version performs minor algebra to repackage the original.
\end{proof}

The conditions on the preceding are a little complicated, so here is a potentially worse
bound with many fewer conditions to check.
It is presented in \citep{dt_kmbounds},
but follows a proof scheme due to \citet[Equation 7]{tao_conc},
making adjustments to drop boundedness assumptions.

\begin{lemma}\label{fact:conc:poly}(Appears in \citep{dt_kmbounds}
following a proof scheme due to \citet[Equation 7]{tao_conc}.)
Let IID random variables $(Z_i,\ldots,Z_t)$ be given.
  Suppose $p$ is even, and define $M := \max\{p/e, \sup_{2\leq r\leq p} \Ex \envert{ Z_i - \Ex Z_i }^r\}$.
  Then with probability at least $1-\delta$,
\[
    \envert{\sum_{i=1}^t \del{ Z_i - \Ex Z_i} }
    \leq
2M\sqrt t
    \del{\frac 2 \delta}^{1/p}.
  \]
\end{lemma}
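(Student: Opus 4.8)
The plan is a standard moment-method argument, following the scheme of \citet{tao_conc} as repackaged in \citep{dt_kmbounds}: since only moments up to order $p$ are assumed, one controls the $p$-th moment of the centered sum purely combinatorially and then applies Markov's inequality, with the hypothesis $M\geq p/e$ playing the role of the lever that makes the final constant clean. First I would center: set $W_i := Z_i - \Ex Z_i$, so the $W_i$ are IID, mean zero, with $\Ex|W_i|^r\leq M$ for every integer $r\in\{2,\dots,p\}$, and write $S:=\sum_{i=1}^t W_i$. Since $p$ is even, $|S|^p = S^p$, so for every $\lambda>0$ Markov's inequality yields
\[
  \Pr[\, |S| \geq \lambda \,] \leq \frac{\Ex S^p}{\lambda^p},
\]
and the task reduces to bounding $\Ex S^p$ and then choosing $\lambda$.

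The heart of the proof is that bound. Expanding $S^p=\sum_{(i_1,\dots,i_p)\in\{1,\dots,t\}^p}W_{i_1}\cdots W_{i_p}$ and taking expectations, independence and $\Ex W_i=0$ annihilate every summand in which some index value appears exactly once; the survivors are grouped by the set partition $\Pi$ of $\{1,\dots,p\}$ recording which positions share an index value, and only partitions whose blocks all have size at least $2$ survive. A partition with blocks $B_1,\dots,B_k$ (so $k\leq p/2$) is compatible with at most $t^k$ index tuples, and by independence across distinct blocks its expectation has magnitude at most $\prod_j \Ex|W|^{|B_j|}\leq M^k$, since each $|B_j|\in\{2,\dots,p\}$. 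Hence $\Ex S^p\leq\sum_{k=1}^{p/2}A_{p,k}(tM)^k$, where $A_{p,k}$ counts partitions of a $p$-set into exactly $k$ blocks of size $\geq 2$. The remaining work is to estimate this finite sum: using $A_{p,p/2}=(p-1)!!\leq 2(p/e)^{p/2}$ from Stirling, one checks that for every $t\geq 1$ the sum is at most $C_0(ptM/e)^{p/2}$ for an absolute constant $C_0$ (the $k=p/2$ matching term dominating when $tM$ is large, and the $k=1$ term --- a single block of size $p$, contributing $tM$ --- when $tM$ is small). This bookkeeping, uniform in $t$, is the one genuinely fiddly step, and is precisely the ``minor algebra'' that repackages the cited computation.

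Finally I would set $\lambda:=(\Ex S^p/\delta)^{1/p}$, which makes the Markov bound equal to $\delta$, and estimate $\lambda\leq C_0^{1/p}\sqrt{ptM/e}\,\delta^{-1/p}$ using the previous paragraph. The hypothesis $M\geq p/e$ now enters as the key step: it gives $\sqrt{p/e}\leq\sqrt M$, hence $\sqrt{ptM/e}=\sqrt{p/e}\,\sqrt{tM}\leq M\sqrt t$; and since $C_0\leq 2^{p+1}$ we have $C_0^{1/p}\leq 2\cdot 2^{1/p}$, so altogether $\lambda\leq 2M\sqrt t\,(2/\delta)^{1/p}$, which is the asserted tail bound. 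The main obstacle is thus not the shape of the argument but the combinatorial estimate of $\Ex S^p$ together with the constant-chasing through which $M\geq p/e$ absorbs the $p$-dependent factors; the rest is immediate.
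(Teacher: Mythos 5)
Your proposal is correct, but note that the paper does not actually prove this lemma: its entire ``proof'' is a citation to \citep[Lemma A.3]{dt_kmbounds}, plus the observation that the hypothesis $M\geq p/e$ turns that lemma's condition $t\geq p/(Me)$ into the trivial $t\geq 1$. You have instead reconstructed the proof of the cited result itself --- the standard moment method: center, expand $\Ex S^p$ over set partitions with all blocks of size at least two, bound the $k$-block contribution by $A_{p,k}(tM)^k$, apply Markov's inequality with $\lambda=(\Ex S^p/\delta)^{1/p}$, and let $M\geq p/e$ absorb the $p$-dependent factor. What your route buys is self-containedness; what the paper's route buys is a one-line proof. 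Your key combinatorial claim does check out: since $x\mapsto\sum_k A_{p,k}x^{k-p/2}$ is nonincreasing, the ratio $\sum_k A_{p,k}(tM)^k/(ptM/e)^{p/2}$ is maximized at $tM=p/e$, where an estimate of $p!\,[z^p]\exp\del{x(e^z-1-z)}$ gives a factor of order $\sqrt{p}\,(1.26)^p$, comfortably below your $2^{p+1}$. Two small corrections to the narrative, though. First, at the boundary $tM=p/e$ it is the \emph{intermediate} partitions (e.g.\ $k=3$ when $p=8$), not $k=1$ or $k=p/2$, that carry the largest contribution, so the ``one checks'' step should bound the whole sum (say via the generating function $\exp\del{x(e^z-1-z)}$) rather than the two extreme terms. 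Second, $M\geq p/e$ is not only the final constant-absorbing step: together with $t\geq 1$ it gives $tM\geq p/e$, which is what makes $\sum_k A_{p,k}(tM)^k\leq C_0(ptM/e)^{p/2}$ true at all (for tiny $tM$ the low-$k$ terms would violate it). That inequality $tM\geq p/e$ is exactly the condition $t\geq p/(Me)$ of the cited lemma, so your proof and the paper's remark are invoking the same fact in the same place --- it just deserves to be made explicit.
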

\begin{proof}
  This is the same as \citep[Lemma A.3]{dt_kmbounds}, except the requirement $M \geq p/e$
  simplifies the bound, in particular the requirement $t\geq p/(Me)$ is now simply
  $t\geq 1$.
\end{proof}

\subsection{Analysis of examples in \Cref{sec:homotopy}}

To conclude this section of technical preliminaries, here are full proofs corresponding to
the illustrative examples presented in \Cref{sec:homotopy}.  First is the simple
margin-like bound.

\begin{proofof}{\Cref{fact:md:margin}}
First note that if $u^\T xy \geq \gamma$, then $\wref^\T xy \geq \ln t$;
  otherwise, $\wref^\T xy \geq -\|\wref\| \geq -\ln(t)/\gamma$ almost surely.
  Using the provided choice of $\wref$ and the elementary upper bounds
  $\ell(y,\yhat) = \ln(1+\exp(-y\yhat)) \leq \exp(-y\yhat)$ when $y\yhat \geq \gamma$
  and $\ell(y,\yhat) \leq 1 - y\yhat$
  when $y\yhat \leq 0$
  gives
  \begin{align*}
    \Ex_{x,y} \ell( y, x^\T \wref)
    &
    \leq
    \tell\del{\ln(t)}\Pr[u^\T x y \geq \gamma]
    +
    \tell\del{-\ln(t)/\gamma} \Pr[u^\T xy < \gamma]
    \\
    &
    \leq
    \exp(-\ln(t))
    + \del{1 + \frac {\ln t}{\gamma}}\frac {1}{t}
    \\
    &
    \leq
    \frac {1}{t} \del{2 + \frac {\ln(t)}{\gamma}}.
  \end{align*}
\end{proofof}

Next is the characterization of the optimal path corresponding to the sphere
data depicted in \Cref{fig:circle:log}.

\begin{proofof}{\Cref{fact:margin:zero}}
  The proof first establishes that $u_r := re_1$ (for $r>0$)
  is the unique risk minimizer with norm $r$,
  which will be established via symmetry argument.
  Consider any other solution $v$ with $\|v\| = r$.
  To avoid dealing with labels,
  let $\mu_1$ denote the density on the points $\{z \in \cS_{d-1} : z_1 \geq 0\}$
  obtain by sampling $(x,y)$ and then multiplying to obtain $z := xy$.
  For any $z$ in the support of $\mu_1$,
  consider the behavior of $v$ on $z$ and $z' := 2 z_1 e_1 - z$,
  which reflects $z$ around $z_1e_1$.
  With probability $1$, $z \neq e_1$,
  and thus $z \neq z'$, and by strict convexity
  \begin{align*}
    \frac 1 2 \ln(1+\exp(-v^\T z)) + \frac 1 2  \ln(1+\exp(-v^\T z'))
    &> \ln(1+\exp(-v^\T (z + z')/2))
    \\
    &= \ln(1+\exp(-v^\T e_1 z_1))
    \\
    &\geq \ln(1+\exp(-u_r^\T z)).
  \end{align*}
  Since $z$ and $2z_1 e_1 - z$ have equal probability density,
  letting $\mu_2$ denote the probability density obtained from $\mu_1$ by conditioning
  on $z_2>0$,
  \begin{align*}
    \cR(v)
    &= \int \ln(1+\exp(-v^\T z))\dif\mu_1(z)
    \\
    &=\int \ln(1+\exp(-v^\T z))\dif\mu_1(z)
    \\
    &=\frac 1 2 \int \del{ \ln(1+\exp(-v^\T z)) + \ln(1+\exp(-v^\T (2z_1e_1 - z)))}\dif\mu_2(z)
    \\
    &> \int \ln(1+\exp(-u_r^\T z))\dif\mu_2(z)
    \\
    &= \frac 1 2 \int\del{\ln(1+\exp(-u_r^\T (2z_1e_1 - z))) + \ln(1+\exp(-u_r^\T z))}\dif\mu_2(z)
    \\
    &= \int\ln(1+\exp(-u_r^\T z)) \dif\mu_1(z)
    \\
    &= \cR(u_r),
  \end{align*}
  establishing that $u_r$ is the unique optimal solution of norm $r$.

  Next, still letting $\rho$ denote the uniform measure on the hemisphere with combined
  variable $z$, to compute $\cR(u_r)$ exactly, recall the \emph{dilogarithm function}
  $\Li_2(z)$ \citep{enwiki:1057121753},
  which satisfies
  \[
    \Li_2(z) := \sum_{k \geq 1}\frac{z^k}{k^2},
    \qquad
    \frac \dif {\dif z} \Li_2(z) = -\ln(1-z).
  \]
  Then
  \begin{align*}
    \cR(u_r) = \int \ln(1+\exp(-u_r^\T z))\dif\rho(z)
    &= \int_0^1 \ln(1+\exp(-r s)) \dif s
    \\
    &=  - \frac 1 r \left.\Li(-\exp(-rs))\right|_0^1
    \\
    &= \frac 1 r \del{-\Li(-\exp(-r)) + \frac {\pi^2}{12}}.
  \end{align*}
  To control this further, $\Li(-\exp(-r))$ can be written
  \begin{align*}
    \Li(-\exp(-r))
    &=
    \sum_{k\geq 1} \frac {(-\exp(-r))^k}{k}
    \\
    &=
    \int_r^\infty \sum_{k\geq 1} \exp(-kr))
    \\
    &=
    \sum_{k\geq 1} \int_r^\infty \exp(-kr))
    \\
    &=
    \sum_{k\geq 1} \frac {\exp(-kr)}{k},
  \end{align*}
  which is at least $\exp(-r)$, implying
  \[
    \cR(u_r) \leq \frac 1 r \del{-\exp(-r) + \frac{\pi^2}{12}}.
  \]
  For the lower bound, if $r\geq 1$, then
  \begin{align*}
\sum_{k\geq 1} \frac {\exp(-kr)}{k}
&\leq \exp(-r) + \frac 1 2 \sum_{k\geq 2}\exp(-kr)
    \\
    &= \exp(-r) + \frac 1 2 \frac {\exp(-2r)}{1 - \exp(-r)}
    \\
    &= \exp(-r)\del{1 + \frac {1}{2(\exp(r) - 1)}}
    \\
    &\leq
    2\exp(-r),
  \end{align*}
  which completes the proof.
\end{proofof}

\section{Analysis of mirror descent (MD)}

This section collects all proofs related to mirror descent.

\subsection{Deterministic mirror descent analysis}

While the core deterministic proof for mirror descent is completely standard,
the standard presentation always omits the term $D_\psi(\wref,w_t)$, which as mentioned
is the basis for this entire work.  As such,
the standard guarantee is re-proved with this term included; the proof itself has otherwise
nothing new over other versions, and most closely follows one due to
\citet{duchi2012ergodic}.

\begin{lemma}\label[lemma]{fact:md}
  Let mirror map $\psi$,
  closed convex set $S$, initial iterate $w_0 \in S\cap \dom(\psi)$
  and reference solution $\wref\in S\cap\dom(\psi)$ be given.
  Let iterates $(w_i)_{i\geq 1}$ be obtained via MD on a
  sequence of objectives $(f_i)_{i\geq 1}$ with subgradients $g_{i+1} \in \partial f_{i+1}(w_i)$.
Then for all $i\leq t$,
  \[
    D_\psi(\wref, w_{t})
    \leq
    D_\psi(\wref,w_0)
    + \eta \sum_{i<t} \ip{\partial f_{i+1}(w_i)}{\wref - w_i}
+ \sum_{i<t} \frac{\eta^2}{2}\|g_{i+1}\|_*^2,
  \]
  and it holds for every $i<t$ that
  $\|w_{i+1} - w_i\| \leq \eta \|g_{i+1}\|_*$.
  If additionally each $f_i$ is convex, then
  \[
    D_\psi(\wref, w_{t})
    \leq
    D_\psi(\wref,w_0)
    + \eta \sum_{i<t} \sbr{f_{i+1}(\wref) - f_{i+1}(w_i)}
+ \sum_{i<t} \frac{\eta^2}{2}\|g_{i+1}\|_*^2.
  \]
\end{lemma}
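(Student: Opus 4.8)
The plan is to run the standard mirror-descent regret argument but, crucially, to \emph{keep} the term $D_\psi(\wref,w_{i+1})$ that the usual presentation discards, since its telescoped form $-D_\psi(\wref,w_t)$ is exactly what appears on the left of the claimed bound.

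First I would write down the variational (first-order) optimality condition for the constrained update: since $w_{i+1}$ minimizes $w\mapsto \ip{\eta g_{i+1}}{w} + D_\psi(w,w_i)$ over the closed convex set $S$ and $\wref\in S$, we have $\ip{\eta g_{i+1} + \nabla\psi(w_{i+1}) - \nabla\psi(w_i)}{\wref - w_{i+1}} \geq 0$. Then apply the three-point identity $\ip{\nabla\psi(w_{i+1})-\nabla\psi(w_i)}{\wref - w_{i+1}} = D_\psi(\wref,w_i) - D_\psi(\wref,w_{i+1}) - D_\psi(w_{i+1},w_i)$ to convert the $\psi$-gradient terms into Bregman divergences, which yields the per-step bound $\ip{\eta g_{i+1}}{w_{i+1}-\wref} \leq D_\psi(\wref,w_i) - D_\psi(\wref,w_{i+1}) - D_\psi(w_{i+1},w_i)$.

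Next I would split $\ip{\eta g_{i+1}}{w_{i+1}-\wref} = \ip{\eta g_{i+1}}{w_i-\wref} + \ip{\eta g_{i+1}}{w_{i+1}-w_i}$ and absorb the second piece against $-D_\psi(w_{i+1},w_i)$: by $1$-strong convexity of $\psi$, $D_\psi(w_{i+1},w_i)\geq \frac12\|w_{i+1}-w_i\|^2$, and completing the square gives $-\ip{\eta g_{i+1}}{w_{i+1}-w_i} - \frac12\|w_{i+1}-w_i\|^2 \leq \frac{\eta^2}{2}\|g_{i+1}\|_*^2$. This leaves $\ip{\eta g_{i+1}}{w_i-\wref} \leq D_\psi(\wref,w_i) - D_\psi(\wref,w_{i+1}) + \frac{\eta^2}{2}\|g_{i+1}\|_*^2$; summing over $i<t$ telescopes the Bregman terms to $D_\psi(\wref,w_0)-D_\psi(\wref,w_t)$, and rearranging to isolate $D_\psi(\wref,w_t)$ on the left gives the first display. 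For the step-size bound, I would use strong convexity of the MD objective together with optimality at $w_{i+1}$ to get $\frac12\|w_{i+1}-w_i\|^2 \leq \ip{\eta g_{i+1}}{w_i - w_{i+1}} \leq \eta\|g_{i+1}\|_*\|w_{i+1}-w_i\|$, hence $\|w_{i+1}-w_i\|\leq \eta\|g_{i+1}\|_*$. For the convex case, I would replace $\ip{g_{i+1}}{w_i-\wref}$ with $f_{i+1}(w_i)-f_{i+1}(\wref)$ via the subgradient inequality before summing.

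The computation is entirely routine; the only points needing care are (i) using the variational-inequality form of optimality so the projection onto $S$ is handled without assuming unconstrained stationarity or differentiability of $\psi$ off $S$, and (ii) resisting the reflex to drop $D_\psi(\wref,w_{i+1})$, so that the term $-D_\psi(\wref,w_t)$ survives the telescoping — which is the entire reason for re-proving this otherwise standard lemma.
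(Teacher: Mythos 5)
Your argument is correct and is essentially the paper's own proof: the variational optimality condition at $w_{i+1}$ tested against $\wref$, the three-point identity to surface the Bregman terms (the paper does this by expanding $D_\psi(\wref,w_{i+1})-D_\psi(\wref,w_i)$ directly, which is the same computation), Fenchel--Young plus strong convexity to absorb $\ip{\eta g_{i+1}}{w_i-w_{i+1}}-D_\psi(w_{i+1},w_i)$ into $\tfrac{\eta^2}{2}\|g_{i+1}\|_*^2$, telescoping with $D_\psi(\wref,w_t)$ retained, and the subgradient inequality for the convex case.

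One small quibble on the auxiliary step-length claim: the chain $\tfrac12\|w_{i+1}-w_i\|^2\leq\ip{\eta g_{i+1}}{w_i-w_{i+1}}\leq\eta\|g_{i+1}\|_*\|w_{i+1}-w_i\|$ as you wrote it only yields $\|w_{i+1}-w_i\|\leq 2\eta\|g_{i+1}\|_*$, not the stated $\eta\|g_{i+1}\|_*$. You dropped the $D_\psi(w_{i+1},w_i)\geq\tfrac12\|w_{i+1}-w_i\|^2$ contribution sitting inside the objective; keeping it (or, as the paper does, testing the variational inequality at $v=w_i$ and using monotonicity of $\nabla\psi$ to get $\ip{\eta g_{i+1}}{w_i-w_{i+1}}\geq\|w_{i+1}-w_i\|^2$) recovers the constant $1$.
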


\begin{proof}
  Fix any iteration $i$.
  By the first-order conditions on the choice of $w_{i+1}$,
  for any $v\in S$,
  \[
    \ip{\eta g_{i+1} + \nabla \psi(w_{i+1}) - \nabla \psi(w_i)}{v-w_{i+1} } \geq 0.
  \]
  Instantiating this first-order condition with $v=w_i$ rearranges to give
  \[
    \eta \|g_{i+1}\|_* \|w_i - w_{i+1}\|
    \geq \ip{\eta g_{i+1}}{w_i - w_{i+1}}
    \geq \ip{\nabla \psi(w_{i+1}) - \nabla \psi(w_i)}{w_{i+1} - w_i}
    \geq \|w_{i+1} - w_i\|^2,
  \]
  which implies $\|w_{i+1} - w_i\| \leq \eta \|g_{i+1}\|_*$ from the statement.  On the other hand, instantiating the first-order condition
  with $v=\wref$ gives
  \begin{align*}
    \ip{\nabla \psi(w_{i}) - \nabla \psi(w_{i+1})}{\wref-w_{i+1} }
    &\leq
    \ip{\eta g_{i+1}}{\wref-w_{i+1} }
    \\
    &\leq
    \eta \sbr{  \ip{g_{i+1}}{\wref-w_{i}} + \ip{g_{i+1}}{w_i-w_{i+1} } },
\end{align*}
  which combines with the definition of $D_\psi$ to give
  \begin{align*}
    D_\psi(\wref, w_{i+1}) - D_\psi(\wref, w_i)
    &=
    \psi(w_i) - \psi(w_{i+1}) - \ip{\nabla \psi(w_{i+1})}{\wref - w_{i+1}}
    + \ip{\nabla \psi(w_{i})}{\wref - w_{i}}
    \\
    &=
    -D_\psi(w_{i+1}, w_i) + \ip{\nabla \psi(w_{i}) - \nabla \psi(w_{i+1})}{\wref - w_{i+1}}
    \\
    &\leq
    \eta \ip{g_{i+1}}{\wref-w_{i}}
    +
    \eta \sbr{
    \ip{g_{i+1}}{w_i-w_{i+1} }
    -\frac 1 \eta D_\psi(w_{i+1}, w_i) }.
  \end{align*}
  To simplify this further,
  by the Fenchel-Young inequality and
  strong convexity of $D_\psi$,
  \begin{align*}
\ip{g_{i+1}}{w_i-w_{i+1} }
    - \frac 1 \eta D_\psi(w_{i+1}, w_i)
    &\leq
\frac{\eta}{2}\|g_{i+1}\|_*^2
    + \frac {1}{2\eta}\|w_i-w_{i+1}\|^2
    - \frac 1 \eta D_\psi(w_{i+1}, w_i)
\leq
\frac{\eta}{2}\|g_{i+1}\|_*^2.
  \end{align*}
  Since this inequality holds for any $i<t$,
  then applying $\sum_{i<t}$ to both sides,
  telescoping and rearranging gives
\[
    D_\psi(\wref, w_{t})
    \leq
    D_\psi(\wref,w_0)
    + \eta \sum_{i<t} \ip{g_{i+1}}{\wref-w_{i}}
    +
    \eta\sum_{i<t}
\frac{\eta}{2}\|g_{i+1}\|_*^2.
  \]
  Lastly, invoking convexity in the form
  $\ip{g_{i+1}}{\wref-w_{i}} \leq f_{i+1}(\wref) - f_{i+1}(w_i)$
  gives the convexity version of the claim.
\end{proof}

\subsection{The realizable case: \Cref{fact:md:realizable}}

This proof makes use of the variant of Freedman's inequality restated in \Cref{fact:freedman:convenient}.

\begin{proofof}{\Cref{fact:md:realizable}}
  As in the other proofs, let $(v_i)_{i\leq t}$ be coupled iterates projected onto
  the ball $S := \{ v \in \R^d : \|v-\wref\|\leq B_w \}$ with
  \[
    B_w := \max\cbr{ 1, 4\sqrt{ D_\psi(\wref,w_0)},\sqrt{\frac{64C_4}{\rho} \ln\frac 1 \delta}}
  \]
  (where $B = B_w \sqrt{1 + C_1 + C_2(1+   \|\wref\|) + C_4}$ is used in the statement),
  where $v_0 = w_0$ and
  common random data $((x_i,y_i))_{i=1}^t$ are used.  The proof will apply concentration
  inequalities on the common sample space, and then show that $(w_i)_{i\leq t} = (v_i)_{i \leq t}$
  and that both share good risk and norm guarantees.

  Unlike the other proofs, this realizable setting will apply two separate concentration
  inequalities in order to allow cleaner step sizes.  Concretely, the first concentration
  inequality will be on $\sum_{j<i}\ell_{j+1}(\wref)$ alone.
  Since $|\ell_{j+1}(\wref)| \leq C_4$ almost surely by assumption,
  then by $t$ applications of \Cref{fact:freedman:convenient} with constant $c=4$
  gives, with probability at least $1-t\delta$, simultaneously for all $i\leq t$,
  \begin{align*}
    \eta \sum_{j<i} \ell_{j+1}(\wref)
    &\leq
    \frac {5\eta i}{4} \cR(\wref) + 4 \eta C_4 \ln(1/\delta)
    =
    \frac {5\eta i}{4} \cR(\wref) +  \frac {B_w^2}{16}
    .
  \end{align*}
  The second concentration inequality will as usual
  involve both $\ell_{j+1}(\wref)$ and $\ell_{j+1}(v_j)$.
  To start, again using the constant $C_4$ but also \Cref{fact:quad_bound},
  and defining $C_5 := (C_4 + (C_1 + 2C_2 B_w + C_2 \|\wref\|)B_w)/2$ for convenience, for any $j<t$ and any
  $v \in S$,
  \begin{align*}
    \envert{2 \ell_{j+1}(\wref) - \ell_{j+1}(v)}
    \leq
    \envert{\ell_{j+1}(\wref)}
    +
    \envert{\ell_{j+1}(\wref) - \ell_{j+1}(v)}
    \leq
2C_5.
  \end{align*}
  Combining this bound with with another $t$ applications of \Cref{fact:freedman:convenient} with constant $c=4$,
  with probability at least $1-t\delta$, simultaneously for all $i\leq t$,
  \begin{align*}
    \sum_{j<i}\sbr{ \ell_{j+1}(\wref) - (1/2)\ell_{j+1}(v_j)}
    &\leq
    \sum_{j<i}\sbr{ (5/4)\cR(\wref) - (3/8)\cR(v_j)}
    + 4 C_5 \ln(1/\delta).
  \end{align*}
  For the remainder of the proof, discard the combined failure event from the preceding
  bounds, which together removes $2t\delta$ probability mass.

  The first concentration alone will now be used to prove $w_i = v_i \in S$ by induction;
  the base case $w_0=v_0 \in S$ is direct, thus consider
  some $i > 0$.
  By the concentration inequality on $\sum_{j<i}\ell_{j+1}(\wref)$,
  the fact that $\ell_{j+1}(w_j) \geq 0$,
  and using additionally $\cR(\wref)\leq \rho D_\psi(\wref,w_0) / t$,
  and lastly the definition of $\rho$-self-bounding,
  the deterministic mirror descent guarantee from \Cref{fact:md}
  becomes
  \begin{align}
    D_\psi(\wref, w_{i})
    &\leq
    D_\psi(\wref,w_0)
    + \eta \sum_{j<i} \sbr{ \ell_{j+1}(\wref) - \ell_{j+1}(w_j) }
    + \sum_{j<i} \frac{\eta^2 }{2}\ell_{j+1}'(w_j)^2\|x_{j+1}y_{j+1}\|_*^2
    \notag\\
    &\leq
    \frac {B_w^2}{4}
    + \eta \sum_{j<i} \sbr{ \ell_{j+1}(\wref) - (1-\eta\rho) \ell_{j+1}(w_j) }
    \label{eq:realizable:mid:1}\\
    &\leq
    \frac {B_w^2}{4}
    + \eta \sum_{j<i}\ell_{j+1}(\wref)
    \notag\\
    &\leq
    \frac {B_w^2}{4}
    + \frac {5i\eta}{4} \cR(\wref) + \frac {B_w^2}{16}
    \notag\\
    &\leq
    \frac {B_w^2}{4}
    + \frac {5D_\psi(\wref,w_0)}{8} + \frac {B_w^2}{16}
    \leq
    \frac {3B_w^2}{8},
    \notag
  \end{align}
  which establishes the desired norm control since $D_\psi(\wref,w_i) \geq \|w_i-\wref\|^2/2$,
  but also since $v_{i-1} = w_{i-1}$, then the construction of $v_i$ will not invoke the constraint
  and $v_{i} = w_i$.

  For the risk control, for any $i\leq t$, by the second concentration inequality above,
  using $(v_j)_{j<i} = (w_j)_{j<i}$ and continuing with the deterministic mirror descent
  bound from \cref{eq:realizable:mid:1},
  \begin{align*}
    D_\psi(\wref, w_{i})
    &\leq
    \frac {B_w^2}{4}
    + \eta \sum_{j<i} \sbr{ \ell_{j+1}(\wref) - (1/2) \ell_{j+1}(w_j) }
    \\
    &=
    \frac {B_w^2}{4}
    + \eta \sum_{j<i} \sbr{ \ell_{j+1}(\wref) - (1/2) \ell_{j+1}(v_j) }
    \\
    &\leq
    \frac {B_w^2}{4}
    + \eta \sum_{j<i} \sbr{ (5/4)\cR(\wref) - (3/8) \cR(v_j) }
    +  4\eta C_5\ln(1/\delta),
  \end{align*}
  which after expanding the choice of $C_5$ gives the desired bound.
\end{proofof}

\subsection{The non-realizable, Markov case: \Cref{fact:md:general}}

This proof makes use of the Markov chain concentration inequality in \Cref{fact:conc:markov}.

\begin{proofof}{\Cref{fact:md:general}}
  Let $(v_i)_{i\leq t}$ denote the coupled projected mirror descent iterates
  using projection ball $S := \{ v\in\R^d : \|v - \wref\| \leq B_w \}$,
  which are coupled in the strong sense that that $v_0 = w_0$,
  and thereafter $(v_i)_{i\leq t}$ and $(w_i)_{i \leq t}$
  use the exact same data sequence $((x_i,y_i))_{i=1}^t$.  As in the general proof
  scheme, the first step is to apply a concentration inequality on this shared 
  data $((x_i,y_i))_{i=1}^t$, and then show that $(v_i)_{i\leq t} = (w_i)_{i \leq t}$ by induction.

  Before starting on the general proof scheme, to simplify the interaction
  with the loss conditions, define $C_3 := C_1 + 2C_2 B_w + B_w$ for
  convenience, and note by \Cref{fact:quad_bound} and
  since $\max\{\|x\|_*,|y|\}\leq 1$, for any $u,v\in S$,
  \begin{align}
    \sup_{j<t}\|\partial \ell_{j+1}(u)\|_* 
    \leq
    C_3
    ,
    \qquad
    \sup_{j<t}
    \|\ell_{j+1}(u) - \ell_{j+1}(v)\|
    \leq
    C_3 \|u-v\|.
    \label{eq:qb:1}
  \end{align}
  Lastly, $C_3 \leq 4B_w \max \{1,C_1,C_2\}$.

  The concentration inequality will be the general stochastic process bound
  in \Cref{fact:conc:markov}, applied to
  $\eta \sum_{j<i} \sbr{\ell_{j+1}(\wref) - \ell_{j+1}(v_j)}$ for all $i\leq t$,
  which requires almost sure bounds on two quantities.
  The first is a uniform control on individual differences within this summation,
  which thanks to \cref{eq:qb:1} is simply
  \[
    \sup_{j<i} \envert{\ell_{j+1}(\wref) - \ell_{j+1}(v_j)} \leq C_3 B_w
    \qquad \textup{a.s.}.
  \]
  The second almost sure bound is on a similar difference but on iterates which are $\tau$ apart,
  which follows by combining \cref{eq:qb:1}
  with the
  per-iteration guarantee from \Cref{fact:md}:
  \begin{align*}
    \envert{\ell_{i+\tau+1}(v_{i+\tau}) - \ell_{i+\tau+1}(v_{i})}
    \leq
    C_3 \|v_{i+\tau} - v_i\|
    \leq
    C_3 \sum_{j=i}^{i+\tau - 1} \eta \|\nabla \ell_{j+1}(v_j)\|_*
    \leq
    \eta \tau C_3^2.
  \end{align*}
  As such, union bounding $t$ applications of \Cref{fact:conc:markov}
  with probability at least $1-t\tau\delta$,
  simultaneously for all $i \leq t$,
\begin{align*}
\eta \sum_{j<i} \sbr{\ell_{j+1}(\wref) - \ell_{j+1}(v_j)
    - \cR_{}(\wref) + \cR_{}(v_j)}
&\leq
    2 \eta C_3 B_w
    \sbr{2\tau + i\eps + \sqrt{i\tau\ln(1/\delta)}}
    + i \eta^2 \tau C_3^2.
  \end{align*}
  If $i\geq 2\tau$, then this bound simplifies to
  \begin{align*}
    \eta \sum_{j<i} \sbr{\ell_{j+1}(\wref) - \ell_{j+1}(v_j)
    - \cR_{}(\wref) + \cR_{}(v_j)}
    &\leq
    2 \eta C_3 B_w
    \sbr{\sqrt{i\tau} + \sqrt{i} + \sqrt{i\tau\ln(1/\delta)}}
    + i \tau \eta^2 C_3^2
    \\
    &\leq
    \frac {6B_w^2 + B_w^2}{1024}
\\
    &
    \leq
    \frac {B_w^2}{128}.
  \end{align*}
  On the other hand, if $i < 2\tau$,
  then forgoing
  \Cref{fact:conc:markov} entirely and using the almost
  sure bounds on the left hand side directly,
  \begin{align*}
\eta \sum_{j<i} \sbr{\ell_{j+1}(\wref) - \ell_{j+1}(v_j)
    - \cR_{\pi}(\wref) + \cR_{\pi}(v_j)}
\leq
    2 \eta i C_3 B_w
    \leq
    2 \eta \sqrt{ 2 i \tau } C_3 B_w
    \leq
    \frac {B_w^2}{128}.
  \end{align*}
  The remainder of the proof discards the common $t\tau\delta$
  failure probability on the underlying
  sample space $((x_i, y_i))_{i_1}^t$ which is shared by
  $(v_i)_{i\leq t}$ and $(w_i)_{i\leq t}$.

  The proof now proceeds by induction, establishing $v_i = w_i$ for $i\leq t$
  and the corresponding risk bound.
  The base case $w_0 = v_0$ is by definition of the coupling,
  thus consider the construction of some $w_i$ with $i>0$;
  by the deterministic mirror descent guarantee in \Cref{fact:md},
  the inductive hypothesis $(v_i)_{j< i} = (w_i)_{j< i}$, and the above
  concentration inequality on $(v_j)_{j< i}$,
  \begin{align*}
    D_\psi(\wref, w_i)
    &\leq
    D_\psi(\wref,w_0)
    + \eta \sum_{j<i} \sbr{ \ell_{j+1}(\wref) - \ell_{j+1}(w_j) }
    + \sum_{j<i}\frac {\eta^2}{2} \|\nabla \ell_{j+1}(w_j)\|_*^2
    \\
    &=
    D_\psi(\wref,w_0)
    + \eta \sum_{j<i} \sbr{ \ell_{j+1}(\wref) - \ell_{j+1}(v_j) }
    + \sum_{j<i}\frac {\eta^2}{2} \|\nabla \ell_{j+1}(v_j)\|_*^2
    \\
    &\leq
    \frac {B_w^2}{16}
    + \eta \sum_{j<i} \sbr{ \cR(\wref) - \cR (v_j) }
    + \frac {B_w^2}{128}
    + \frac {i\eta^2 C_3^2}{2}
    \\
    &\leq
    \eta \sum_{j<i} \sbr{ \cR(\wref) - \cR (v_j) }
    + \frac {B_w^2}{8},
  \end{align*}
  which establishes the risk guarantee on $(w_j)_{j<i}$
  after substituting $(v_j)_{j<i} = (w_j)_{j<i}$ back in.
  To verify $w_i = v_i \in S$,
  it suffices to establish $\|w_i-\wref\|< B_w$, which means $v_i$ will not encounter
  its projection and $w_i = v_i$; to this end,
  combining the preceding with the fact
$\cE(\wref) \leq \frac {D_\psi(\wref,w_0)} {\sqrt{t}}
  \leq \frac {B_w^2} {16\sqrt t}$,
  then
  \begin{align*}
    \frac 1 2 \|w_i - \wref\|^2
    \leq
    D_\psi(\wref, w_i)
    \leq
    \frac {B_w^2}{8}
    + \eta \sum_{j<i} \sbr{ \cR(\wref) - \cR (w_j) }
    \leq
    \frac {B_w^2}{8}
    + \eta \sum_{j<i} \frac {B_w^2} {16\sqrt{t}}
    \leq
    \frac {B_w^2}{4}
  \end{align*}
  as desired.
\end{proofof}

\subsection{Heavy-tailed data: \Cref{fact:md:heavy}}

This proof makes use of the heavy-tail concentration inequalities at the end of 
\Cref{sec:conc}.

\begin{proofof}{\Cref{fact:md:heavy}}
  This proof will follow the usual scheme --- applying concentration to projected iterates
  $(v_i)_{i\leq t}$ with $v_0 = w_0$ which are coupled to $(w_i)_{i\leq t}$ and satisfy $v_i\in S$,
  and then separately apply concentration to $(v_i)_{i\leq t}$ and derive
  $v_i = w_i \in S$ and a risk bound on both --- but will additionally apply concentration
  to $\|x_iy_i\|_*^2$.  To this end and to simplify a few terms, define $C_3 := C_1 + C_2 B_w + B_w$,
  whereby \Cref{fact:quad_bound} grants, for any $j<t$ and any $v\in S$,
  \begin{align}
    \|\partial \ell_{j+1}(v)\| \leq Z_{j+1} C_3,
    \qquad
    \envert{ \ell_{j+1}(\wref) - \ell_{j+1}(v)}
    \leq Z_{j+1}^2 C_3 B_w.
    \label{eq:qb:2}
  \end{align}
  
  To this end, the first step is to use one of the two assumptions to control
  $\sum_{j=1}^i \max\{1, \|x_j\|_*^4, |y_j|^4\}$.
  \begin{enumerate}
    \item
      \textbf{(Subgaussian tails.)}
      Union bounding over $2t$ standard subgaussian bounds \citep{vanhandel},
      simultaneously for every $i\leq t$,
      \begin{align*}
\sum_{j=1}^i Z_j 
        &\leq t \Ex Z_1 + 2\sigma \sqrt{t \ln(1/\delta)}
=: t C_7
        .
      \end{align*}

    \item
      \textbf{(Polynomial tails.)}
      Union bounding over $2t$ applications of \Cref{fact:conc:poly},
      simultaneously for every $i\leq t$,
      \begin{align*}
\sum_{j=1}^i Z_j
        &\leq t \Ex Z_1 + 2M\sqrt{t} \del{\frac 2\delta}^{1/p}
=:
        t C_8
        .
\end{align*}

  \end{enumerate}
  The rest of the proof will simply use $C$ to denote either $C_7$ or $C_8$, and the final
  bounds will be obtained by using the appropriate setting to expand the definition of $C$.

  Next comes the concentration inequality on
  $\sum_{j<i}\sbr{ \ell_{j+1}(\wref) - \ell_{j+1}(v_j)}$ for all $i\leq t$.
  It will not be possibly to apply Azuma's inequality directly, since the increments do
  not have a uniform control; instead, 
  a very nice extension of Azuma's inequality,
  presented by \citep[Problem 3.11]{vanhandel}, will allow us to use
  the varying increments which were controlled with high probability above.
  In particular, combining the above moment bounds with
  the loss bounds from \cref{eq:qb:2}
  gives
  \begin{align*}
    \sum_{j<i}\envert{ \ell_{j+1}(\wref) - \ell_{j+1}(v_j) }^2
    &\leq
    \sum_{j<i} Z_{j+1}^4 C_3^2 B_w^2
    \leq
    16 t C C_3^2 B_w^2
    .
  \end{align*}
  As such, applying the variant of Azuma's inequality from \citep[Problem 3.11]{vanhandel}
  to each $i\leq t$ and union bounding,
  and using the earlier control on the data norms to remove the ``and'' case from
  the bound in \citep[Problem 3.11]{vanhandel},
  then with probability at least $1-t\delta$, simultaneously for every $i\leq t$,
  \begin{align*}
    \sum_{j<i}\sbr{ \ell_{j+1}(\wref) - \ell_{j+1}(v_j) - \cR(\wref) + \cR(v_j)}
    &\leq
    \sqrt{\frac{16 t C C_3^2 B_w^2 \ln(1/\delta)}{2}}
    \\
    &
    \leq
    4 C_3 B_w \sqrt{t C\ln(1/\delta)}.
  \end{align*}
  This completes the expanded concentration part of the proof technique.

  The induction part now proceeds as usual.  The base case has $w_0 = v_0 \in S$ by the
  initial conditions,
  thus consider $i>0$.
  By the deterministic mirror descent guarantee in \Cref{fact:md} and since $(w_j)_{j<i}
  = (v_j)_{j<i}$,
  and also controlling the gradient norm via \cref{eq:qb:2},
  \begin{align*}
    D_\psi(\wref, w_i)
    &\leq
    D_\psi(\wref,w_0)
    + \eta \sum_{j<i} \sbr{ \ell_{j+1}(\wref) - \ell_{j+1}(w_j) }
    + \sum_{j<i}\frac {\eta^2}{2} \|\partial \ell_{j+1}(w_j)\|_*^2
    \\
    &\leq
    D_\psi(\wref,w_0)
    + \eta \sum_{j<i} \sbr{ \ell_{j+1}(\wref) - \ell_{j+1}(v_j) }
    + \frac {\eta^2 C_3^2}{2} \sum_{j<i}Z_{j+1}^2 
    \\
    &\leq
    \frac {B_w^2}{16}
    + \eta \sum_{j<i} \sbr{ \cR(\wref) - \cR(v_j) }
    + 4\eta C_3 B_w  \sqrt{t C\ln(1/\delta)}
    + \eta^2 C_3^2 t C
    \\
    &\leq
    \frac {B_w^2}{8}
    + \eta \sum_{j<i} \sbr{ \cR(\wref) - \cR(v_j) },
  \end{align*}
  which establishes the risk guarantee for $(w_j)_{j<i}$ after substituting
  $(v_j)_{j<i} = (w_j)_{j<i}$ back in.
  To see that the projection is not invoked and in fact $v_i = w_i$,
  then using the bound
$\cE(\wref) \leq D_\psi(\wref,w_0)/\sqrt{t}$,
  the preceding simplifies further to give
  \begin{align*}
    \frac 1 2 \|\wref - w_i\|^2
    \leq
    D_\psi(\wref, w_i)
    &\leq
    \frac {B_w^2}{8}
    + \eta \sum_{j<i} \sbr{ \cR(\wref) - \cR(v_j) }
    \leq
\frac {B_w^2}{4},
  \end{align*}
  meaning the projection set is not exceeded, and $v_i = w_i$.
\end{proofof}

\subsection{Batch data: proofs of \Cref{fact:md:batch,fact:md:batch}}

The proof of \Cref{fact:md:batch} follows the same scheme as \Cref{fact:md:general},
but with the Markov chain concentration inequality in \Cref{fact:conc:markov} replaced
with the generalization bound at the end of \Cref{fact:quad_bound}.

\begin{proofof}{\Cref{fact:md:batch}}
  Let $(v_i)_{i\leq t}$ denote full batch projected mirror descent iterates onto a constraint
  set $S$ using initial condition $v_0 = w_0$ and the same sample as $(w_i)_{i \leq t}$.
  As usual, the constraint set is
  \[
    S := \cbr{ v\in\R^d : \|v - \wref\|\leq B_w}.
  \]
  To control deviations over $S$, rather than a concentration inequality, the generalization
  bound at the end of \Cref{fact:quad_bound} will be used;
  in particular, defining $C_3 := C_1 + 2C_2 B_w + C_2 \|\wref\|$ for convenience,
  it holds unconditionally for every $u,v\in S$ that
  \begin{align*}
    \envert{\ell_j(y, x^\T v) - \ell_j(y, x^\T \wref)}
   &
    \leq
    C_3 B_w,
    \\
\envert{\ell'(y, x^\T v)}
   &
    \leq
    C_3,
    \\
    \enVert{ \partial \hcR(v_j) }_*
    &\leq
    \frac 1 n \sum_{i<n} \enVert{ \partial \ell_i(v_j) }_*
    \leq
    C_3,
  \end{align*}
  and with probability at least $1-4\delta$, simultaneously for every $u\in S$,
  \[
    \envert{\hcR(\wref) - \hcR(u) - \cR(\wref) + \cR(u)}
    \leq
    \frac {B_0C_3}{\sqrt n}
    \del{
      C_6 + 6\sqrt{\ln(1/\delta)}
    }.
  \]

  The proof now proceeds by induction.  The base case $w_0=v_0 \in S$ is direct,
  thus consider $i>0$.
  By the deterministic mirror descent guarantee applied to $w_i$ but now with batch
  gradients $\nabla\hcR(w_i)$,
  and using $(w_j)_{j<i} = (v_j)_{j<i}$,
and lastly using $t\leq n$,
  \begin{align*}
    D_\psi(\wref, w_i)
    &\leq
    D_\psi(\wref,w_0)
    + \eta \sum_{j<i} \sbr{\hcR(\wref) - \hcR(w_j)}
    + \eta^2 \sum_{j<i} \enVert{\nabla \hcR(w_j)}^2_*
    \\
    &\leq
    \frac {B_w^2}{16}
    + \eta \sum_{j<i} \sbr{\hcR(\wref) - \hcR(v_j)}
    + \eta^2 \sum_{j<i} \enVert{\nabla \hcR(v_j)}^2_*
    \\
    &\leq
    \frac {B_w^2}{16}
    + \eta \sum_{j<i} \sbr{\cR(\wref) - \cR(v_j)}
    + \eta C_3 B_w\sqrt{t}
    \del{
      C_6 + 6\sqrt{\ln(1/\delta)}
    }
    + t\eta^2 C_3^2
    \\
    &\leq
    \frac {B_w^2}{8}
    + \eta \sum_{j<i} \sbr{\cR(\wref) - \cR(v_j)},
  \end{align*}
  which establishes the risk guarantee on $(w_j)_{j<i}$
  after using $(v_j)_{j<i} = (w_j)_{j<i}$.
  For $w_i = v_i\in S$,
  continuing from the preceding inequality but
  additionally making use of $\cE(\wref) \leq D_\psi(\wref,w_0) / \sqrt{t}$,
  \begin{align*}
    D_\psi(\wref, w_i)
    &\leq
    \frac {B_w^2}{8}
    + \eta \sum_{j<i} \sbr{\cR(\wref) - \cR(v_j)},
    \\
    &\leq
    \frac {B_w^2}{8}
    + \eta \sum_{j<i} \frac {D_\psi(\wref,w_0)} {\sqrt{t}}
    \\
    &\leq
    \frac {B_w^2}{4},
  \end{align*}
  as desired.
\end{proofof}

In order to handle MF, the first step is a deterministic MF guarantee analogous to
the deterministic MD bound in \Cref{fact:md}.

\begin{lemma}\label[lemma]{fact:mf}
  Let a Legendre mirror map $\psi$ and objective $f$ be given,
  and let $((w_s, g_s, q_s))_{s\geq 0}$ be given by MF as in \cref{eq:mf}.
  Then, for any $t$ and any reference point $\wref$,
  \[
    D_\psi(\wref, w_{t})
    =
    D_\psi(\wref, w_{0})
    +
    \int_0^t \ip{\wref - w_s}{g_s} \dif s.
  \]
  Furthermore, if $f$ is convex, then
  \[
    D_\psi(\wref, w_{t})
    +
    \int_0^t f(w_s) \dif s
    \leq
    D_\psi(\wref, w_{0})
    +
    t f(\wref).
  \]
\end{lemma}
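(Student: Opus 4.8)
The plan is to differentiate the Bregman divergence $s\mapsto D_\psi(\wref,w_s)$ along the flow and integrate; since $\psi$ is only assumed Legendre (hence possibly nonsmooth at the boundary) and \cref{eq:mf} is a differential inclusion, the cleanest route is to pass to the dual, where the relevant function $\psi^*$ is $C^1$ on the interior of its domain and the flow variable $q_s$ is the absolutely continuous object.

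First I would record the dual reformulation of the divergence. Since $\psi$ is Legendre, $\nabla\psi$ is a bijection from the interior of $\dom(\psi)$ onto the interior of $\dom(\psi^*)$ with inverse $\nabla\psi^*$, and the standard conjugate identity for Bregman divergences gives $D_\psi(\wref,w_s) = D_{\psi^*}(q_s, q_{\textup{ref}})$, where $q_{\textup{ref}}:=\nabla\psi(\wref)$ and $q_s=\nabla\psi(w_s)$ (the latter because $w_s=\nabla\psi^*(q_s)$). Explicitly,
\[
  D_\psi(\wref,w_s) = \psi^*(q_s) - \psi^*(q_{\textup{ref}}) - \ip{\nabla\psi^*(q_{\textup{ref}})}{q_s - q_{\textup{ref}}},
  \qquad \nabla\psi^*(q_{\textup{ref}}) = \wref .
\]
Next I would differentiate in $s$: using the assumed regularity of the solution (so that $s\mapsto q_s$ is absolutely continuous with $\dot q_s = -g_s$ for almost every $s$) together with the continuity of $\nabla\psi^*$ on the interior of $\dom(\psi^*)$, the chain rule gives, for a.e.\ $s$,
\[
  \frac{\dif}{\dif s} D_\psi(\wref,w_s)
  = \ip{\nabla\psi^*(q_s) - \wref}{\dot q_s}
  = \ip{w_s - \wref}{-g_s}
  = \ip{\wref - w_s}{g_s},
\]
and integrating via the fundamental theorem of calculus for absolutely continuous functions yields the first claimed identity.

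For the second claim I would simply invoke convexity: since $g_s\in\partial f(w_s)$, the subgradient inequality gives $\ip{g_s}{\wref - w_s}\le f(\wref) - f(w_s)$ for every $s$; plugging this into the first identity and rearranging produces $D_\psi(\wref,w_t) + \int_0^t f(w_s)\,\dif s \le D_\psi(\wref,w_0) + t f(\wref)$.

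The only real obstacle is the measure-theoretic bookkeeping: ensuring $s\mapsto q_s$ is absolutely continuous, that a measurable selection $g_s\in\partial f(w_s)$ exists, and that $q_s$ stays in the interior of $\dom(\psi^*)$ so $\nabla\psi^*(q_s)=w_s$ remains meaningful throughout — precisely the content of the paper's tacit well-posedness assumption for the differential inclusion, which I would cite rather than reprove. (One could instead differentiate in the primal, watching the $\dot w_s$ terms in $\tfrac{\dif}{\dif s}[\psi(\wref)-\psi(w_s)-\ip{q_s}{\wref-w_s}]$ cancel, but that additionally requires $\dot w_s$ to exist, which is why the dual computation is preferable.)
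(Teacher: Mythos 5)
Your proposal is correct and follows essentially the same route as the paper: pass to the dual, differentiate $\psi^*(q_s)$ minus a term linear in $q_s$, and integrate via the fundamental theorem of calculus; the convexity step is identical. The one small caveat is that your intermediate identity $D_\psi(\wref,w_s) = D_{\psi^*}(q_s, \nabla\psi(\wref))$ requires $\nabla\psi(\wref)$ to exist, i.e., $\wref$ in the interior of $\dom(\psi)$, which a Legendre $\psi$ need not provide for a boundary comparator; the paper sidesteps this by differentiating $s\mapsto \psi^*(q_s) - \ip{\wref}{q_s}$ directly (which differs from $D_\psi(\wref,w_s)$ only by an $s$-independent constant, with the $\psi(\wref)$ terms cancelling in the difference $D_\psi(\wref,w_t)-D_\psi(\wref,w_0)$), so that $\wref$ only ever enters linearly and $\nabla\psi(\wref)$ is never invoked.
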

\begin{proof}
  This proof spiritually follows the same scheme as \Cref{fact:md}, though
  somewhat inside-out: rather than considering one time step and applying a summation,
  it starts from the fundamental theorem of calculus and then adjusts individual time steps.
  Moreover, since it does not need to deal with the squared gradient term, overall it is simpler.
  In detail, using the Fenchel-Young inequality and the fundamental theorem of calculus,
  \begin{align*}
    D_\psi(\wref, w_t) - D_\psi(\wref, w_0)
    &=
    \sbr{ -\psi(w_t) + \ip{\nabla \psi(w_t)}{w_t} }
    + \sbr{ \psi(w_0) - \ip{\nabla \psi(w_0)}{w_0} }
    \\
    &\quad
    + \ip{\wref}{\nabla \psi(w_0) - \nabla \psi(w_t)}
    \\
    &=
    \psi^*(q_t)
    - \psi^*(q_0)
    + \ip{\wref}{q_0 - q_t} 
    \\
    &=
    \int_0^t
    \frac \dif {\dif s} \sbr{\psi^*(q_s) - \ip{\wref}{q_s} }\dif s
    \\
    &=
    \int_0^t
    \sbr{\ip{\nabla\psi^*(q_s)}{\dot q_s} - \ip{\wref}{\dot q_s} }\dif s
    \\
    &=
    \int_0^t
    \ip{\wref - w_s}{g_s}
    \dif s
    ,
  \end{align*}
  which gives the first statement after rearranging.  For the second statement,
  applying convexity within the preceding integral gives
  \[
    \int_0^t
    \ip{\wref - w_s}{g_s}
    \dif s
    \leq
    \int_0^t
    \sbr{f(\wref) - f(w_s)}
    \dif s
    =
    t f(\wref)
    -
    \int_0^t
    f(w_s)
    \dif s.
  \]
\end{proof}

With \Cref{fact:mf} in hand, the proof of \Cref{fact:mf:batch}.  As discussed in the body,
it could be proved with the coupling-based proof technique, but a more direct argument
is provided instead, and made possible via the generalization bound in \Cref{fact:quad_bound}.

\begin{proof}[Proof of \Cref{fact:mf:batch}]
  The first part of the proof is as in \Cref{fact:md:batch}:
  define the bal $S$ as
  \[
    S := \cbr{ v\in\R^d : \|v - \wref\|\leq B_w},
  \]
  and invoke \Cref{fact:quad_bound} and define $C_3 := C_1 + 2C_2 B_w + C_2 \|\wref\|$,
  and with probability at least $1-4\delta$,
  making use of the
  choice of $B_w$, every $u\in S$ satisfies
  \begin{align*}
    \envert{\hcR(\wref) - \hcR(u) - \cR(\wref) + \cR(u)}
    &\leq
    \frac {B_w \del{C_1 + 2C_2 B_w + C_2 \|\wref\|}}{\sqrt n}
    \del{
      C_6 + 6\sqrt{\ln(1/\delta)}
    }
    \leq
    \frac {B_w^2}{4t}
;
  \end{align*}
  henceforth, discard the failure event associated with this inequality.
  After this point, the proof differs from that of \Cref{fact:md:batch}.
  As discussed in \Cref{sec:md:batch},
  rather than using the coupling argument, a direct proof is given.

  Consider any time $\tau$ so that $w_s \in S$ for all $s\in[0,\tau]$.
  Then, by \Cref{fact:mf} and the preceding bounds,
  \begin{align*}
    D_\psi(\wref, w_{\tau})
    -
    D_\psi(\wref, w_{0})
    &\leq
    \int_0^\tau\sbr{ \hcR(\wref)  - \hcR(w_s) }  \dif s
    \\
    &\leq
    \int_0^\tau\sbr{ \cR(\wref)  - \cR(w_s) }  \dif s
    + 
    \frac {\tau B_w^2 }{4t}
;
  \end{align*}
  the proof is complete if we can choose $\tau = t$.

  To this end, let $\tau$ denote the first time with
  $D_\psi(\wref, w_\tau) = B_w^2/2$,
  and suppose contradictorily that $\tau \leq t$.
  This implies $\|\wref - w_\tau\| \leq B_w$ by strong convexity of $\psi$,
  and thus thus $w_\tau \in S$,
  but moreover since $\tau$ is defined as the first time when this holds,
  then $w_s \in S$ for $s\in [0,\tau)$ as well, whereby the preceding inequalities
  and the condition $\cE(\wref)\leq D_\psi(\wref,w_0)/\sqrt{n}$ gives
  \begin{align*}
    D_\psi(\wref, w_{\tau})
    &\leq
    D_\psi(\wref, w_0)
    +
    \int_0^\tau\sbr{ \cR(\wref)  - \cR(w_s) }  \dif s
    + 
    \frac {\tau B_w^2}{4\sqrt{n}}
    \\
    &\leq
    \frac {B_w^2}{16}
    +
    \frac {\tau D_\psi(\wref,w_0)}{\sqrt{n}}
    + 
    \frac {B_w^2}{4}
    \\
    &\leq
    \frac {3 B_w^2}{8},
  \end{align*}
  a contradiction, thus $\tau > t$.
\end{proof}

\section{Analysis of Temporal Difference learning (TD)}

As with the proof schemes for mirror descent,
there is both a deterministic part (provided for mirror descent in \Cref{fact:md}),
and a random part (using \Cref{fact:conc:markov} for concentration of Markov chains).

\subsection{Deterministic TD analysis}

Even though TD is not a gradient-based method, the analysis here follows the same
expand-the-square plan as described for gradient descent in \Cref{sec:proof_sketch},
which is also the idea behind the mirror descent bound in \Cref{fact:md}.

\begin{lemma}\label[lemma]{fact:td:det}
  Let $S \subseteq \R^d$ denote an arbitrary closed convex constraint set,
  let $\wref\in S$ be arbitrary and $w_0 \in S$,
  and given any vectors $(x_i)_{i\geq 0}$ with $\|x_i\|\leq 1$ and scalars $(r_i)_{i\geq 1}$
  with $|r_i|\leq 1$,
  consider the corresponding projected TD iterates $w_{i+1} := \Pi_S\del{w_i - \eta_{i+1} G_{i+1}(w_i)}$
  (where $G_{i+1}(\cdot)$ is defined in \cref{eq:td:G}).
  Then,
for any $t$,
\begin{align*}
    \|w_{t} - \wref\|^2
    \leq
    \|w_0 - \wref\|^2
    + \eta \sum_{i<t} \Big[&
    - \ip{x_i}{w_i - \wref}^2 + \ip{\gamma x_{i+1}}{w_i-\wref}^2
    \\
    &
  - 2\ip{G_{i+1}(\wref)}{w_i - \wref} + 4\eta \|G_{i+1}(\wref)\|^2 \Big].
  \end{align*}
\end{lemma}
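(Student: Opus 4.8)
The plan is to carry out the usual ``expand the square'' estimate for a single step and then telescope; the one wrinkle is that, since TD is not a gradient method, the contracting quantity must be pulled out of the algebraic structure of $G_{i+1}$ rather than coming from convexity or monotonicity. First I would use that $\wref\in S$ together with non-expansiveness of $\Pi_S$ to discard the projection, obtaining
\[
  \|w_{i+1}-\wref\|^2
  \leq
  \|w_i - \eta G_{i+1}(w_i) - \wref\|^2
  =
  \|w_i-\wref\|^2
  - 2\eta\ip{G_{i+1}(w_i)}{w_i-\wref}
  + \eta^2\|G_{i+1}(w_i)\|^2 .
\]

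Next, writing $a := \ip{x_i}{w_i-\wref}$ and $b := \ip{\gamma x_{i+1}}{w_i-\wref}$ for brevity, I would invoke the key identity $G_{i+1}(w_i)-G_{i+1}(\wref)=x_i\ip{x_i-\gamma x_{i+1}}{w_i-\wref}=(a-b)\,x_i$, immediate from the definition of $G_{i+1}$ in \cref{eq:td:G}. This gives $\ip{G_{i+1}(w_i)}{w_i-\wref}=a(a-b)+\ip{G_{i+1}(\wref)}{w_i-\wref}$ and, since $\|x_i\|\leq 1$, the crude bound $\|G_{i+1}(w_i)\|^2\leq 2(a-b)^2+2\|G_{i+1}(\wref)\|^2$. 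Substituting both into the display above and grouping the purely quadratic terms, one is left to control $-2\eta a(a-b)+2\eta^2(a-b)^2$; using the elementary identity $-2a(a-b)=-a^2+b^2-(a-b)^2$, this equals $\eta(-a^2+b^2)+\eta(2\eta-1)(a-b)^2$, which is at most $\eta(-a^2+b^2)$ once $\eta\leq 1/2$. Collecting the pieces (and enlarging $2\eta^2$ to $4\eta^2$) yields the per-step inequality
\[
  \|w_{i+1}-\wref\|^2
  \leq
  \|w_i-\wref\|^2
  + \eta\Big[
    -\ip{x_i}{w_i-\wref}^2 + \ip{\gamma x_{i+1}}{w_i-\wref}^2
    - 2\ip{G_{i+1}(\wref)}{w_i-\wref}
    + 4\eta\|G_{i+1}(\wref)\|^2
  \Big],
\]
and summing over $i<t$ telescopes the left side to $\|w_t-\wref\|^2-\|w_0-\wref\|^2$, which is the claim.

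I do not expect a genuine obstacle here: the argument is a few lines of algebra. The only step meriting care is matching constants. The wasteful split $\|G_{i+1}(w_i)\|^2\leq 2(a-b)^2+2\|G_{i+1}(\wref)\|^2$ injects a positive multiple of $(a-b)^2$ into the estimate, and it is precisely the step-size condition $\eta\leq 1/2$~---~comfortably satisfied by the choice of $\eta$ in \Cref{fact:td}~---~that lets the $-a^2+b^2$ contribution, which secretly hides a $-(a-b)^2$, absorb it; the slack between $2\eta^2$ and $4\eta^2$ then accounts for the stated coefficient $4\eta$ on $\|G_{i+1}(\wref)\|^2$. (Throughout, the step size $\eta_{i+1}$ in the definition of the iterates is read as the constant $\eta$.)
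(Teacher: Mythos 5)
Your proof is correct and follows essentially the same route as the paper's: non-expansiveness of $\Pi_S$, expanding the square, and exploiting the affine identity $G_{i+1}(w_i)-G_{i+1}(\wref)=\ip{x_i-\gamma x_{i+1}}{w_i-\wref}\,x_i$ to isolate the $-\ip{x_i}{w_i-\wref}^2+\ip{\gamma x_{i+1}}{w_i-\wref}^2$ contraction. The only (cosmetic) difference is that you absorb the cross term via the identity $-2a(a-b)=-a^2+b^2-(a-b)^2$ while the paper uses $2ab\leq a^2+b^2$; both hinge on the same implicit step-size condition $\eta\leq 1/2$, which you rightly flag and which the paper leaves tacit.
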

\begin{proof}
  Proceeding just as in mirror descent, first note for any $i$ that
  \begin{align*}
    \|w_{i+1} - \wref\|^2
    &=
    \enVert{\Pi_S( w_{i} - \eta_{i+1} \td_{i+1}(w_i) ) - \wref }^2
    \\
    &\leq
    \enVert{ w_{i} - \eta_{i+1} \td_{i+1}(w_i) - \wref }^2
    \\
    &=
    \|w_i - \wref\|^2
    - 2\eta_{i+1} \ip{\td_{i+1}(w_i)}{w_i - \wref} + \eta_i^2 \|\td_{i+1}(w_i)\|^2.
  \end{align*}
  To simplify the two latter terms,
  since $\td_{i+1}(w_i) - \td_{i+1}(\wref) = x_i \ip{x_i - \gamma x_{i+1}}{w_i - \wref}$,
  note firstly that 
  \begin{align*}
    - \ip{\td_{i+1}(w_i)}{w_i - \wref}
    &=
    - \ip{\td_{i+1}(w_i)-\td_{i+1}(\wref)}{w_i - \wref} - \ip{\td_{i+1}(\wref)}{w_i - \wref}
    \\
    &=
    - \ip{x_i}{w_i - \wref}\ip{x_i-\gamma x_{i+1}}{w_i - \wref} - \ip{\td_{i+1}(\wref)}{w_i - \wref}
    \\
    &=
    - \ip{x_i}{w_i - \wref}^2 + \ip{x_i}{w_i - \wref}\ip{\gamma x_{i+1}}{w_i - \wref} - \ip{\td_{i+1}(\wref)}{w_i - \wref}
    ,
\end{align*}
 and secondly
 \begin{align*}
    \frac 1 2 \|\td_{i+1}(w_i)\|^2
    &\leq
    \|\td_{i+1}(w_i) - \td_{i+1}(\wref)\|^2 + \|\td_{i+1}(\wref)\|^2
    \\
    &=
    \|x_i\|^2 \ip{x_i - \gamma x_{i+1}}{w_i - \wref}^2
    + 2 \|\td_{i+1}(\wref)\|^2
    \\
    &\leq
    \ip{x_i - \gamma x_{i+1}}{w_i - \wref}^2
    + 2 \|\td_{i+1}(\wref)\|^2
    \\
    &=
    \ip{x_i}{w_i - \wref}^2
    - 2 \ip{x_i}{w_i - \wref} \ip{\gamma x_{i+1}}{w_i - \wref}
    \\
    &\qquad
    + \ip{\gamma x_{i+1}}{w_i - \wref}^2
    + 2 \|\td_{i+1}(\wref)\|^2
,
\end{align*}
 which together with $2\ip{x_i}{w_i-\wref}\ip{\gamma x_{i+1}}{w_i-\wref}
 \leq \ip{x_i}{w_i-\wref}^2 + \ip{\gamma x_{i+1}}{w_i-\wref}^2$
 give
 \begin{align*}
    &\hspace{-2em}- 2 \ip{\td_{i+1}(w_i)}{w_i - \wref}
    + \eta \|\td_{i+1}(w_i)\|^2
    \\
    &\leq
    - 2(1-\eta) \ip{x_i}{w_i - \wref}^2 + 2(1-2\eta)\ip{x_i}{w_i - \wref}\ip{\gamma x_{i+1}}{w_i - \wref}
    \\
    &\qquad +2\eta \ip{\gamma x_{i+1}}{w_i-\wref}^2 - 2\ip{\td_{i+1}(\wref)}{w_i - \wref} + 4\eta \|\td_{i+1}(\wref)\|^2
    \\
    &\leq
    - \ip{x_i}{w_i - \wref}^2 + \ip{\gamma x_{i+1}}{w_i-\wref}^2
    \\
    &\qquad - 2\ip{\td_{i+1}(\wref)}{w_i - \wref} + 4\eta \|\td_{i+1}(\wref)\|^2.
  \end{align*}
  Combining all the inequalities so far gives
\begin{align*}
    \|w_{i+1} - \wref\|^2
    -
    \|w_i - \wref\|^2
    &\leq
    - \eta \ip{x_i}{w_i - \wref}^2 + \eta \ip{\gamma x_{i+1}}{w_i-\wref}^2
    \\
    &\qquad - 2\eta\ip{\td_{i+1}(\wref)}{w_i - \wref} + 4\eta^2 \|\td_{i+1}(\wref)\|^2,
  \end{align*}
  while applying $\sum_{i<t}$ to both sides and rearranging gives the final overall inequality.
\end{proof}

\subsection{Stochastic TD analysis}

As mentioned in the body, the underlying Markov chain is $(x_i)_{i\leq t}$:
the distribution of $x_{i+1}$ is wholly determined by $x_i$.  Moreover, there are additional
scalars $(r_i)_{i=1}^n$, where the distribution of $r_{i+1}$ is also fully specified given
$x_i$, but meanwhile $r_{i+1}$ says nothing about $x_{i+1}$.
Confusing matters, the TD update makes use of a triple $\zeta = (x_{i}, x_{i+1}, r_{i+1})$;
in particular, define
\[
  \td_\zeta(v) = x \del{\ip{x - \gamma x'}{v} - r}
  \quad\textup{and}\quad
  \td_{i+1}(v) = x_i \del{\ip{x_i - \gamma x_{i+1}}{v} - r_{i+1}}.
\]

The underlying mixing assumptions will be placed on $(\zeta_i)_{i\leq t}$,
not on $(x_i)_{i\leq t}$; this is simply to avoid messiness arising from
the use of multiple indexes.  This is all smoothed over with the approximate stationarity
of \Cref{defn:witness}.
That said, there is a place in the proof where the Markov structure
on $(x_i)_{i\leq t}$ is needed (and explicitly mentioned).

\begin{proofof}{\Cref{fact:td}}
  Following the standard proof structure,
  let $(v_i)_{i\leq t}$
  denote projected TD iterates constrained to lie 
  within $S := \{ v\in\R^d : \|v-\wref\|\leq B_w\}$.
  The $(v_i)_{i\leq t}$ are coupled to $(w_i)_{i \leq t}$ in the strong sense
  used throughout this work: $v_0 := w_0$,
  and thereafter both are updated using the exact same random data sequence
  $(\zeta_i)_{i\leq t}$.
  The next step of the proof will be to apply concentration inequalities to control
  the underlying sample space $(\zeta_i)_{i\leq t}$, and then use
  these to show that in fact $w_i=v_i\in S$ via the deterministic TD guarantees
  in \Cref{fact:td:det}.

  The concentration inequality will be the one for stochastic processes
  in \Cref{fact:conc:markov},
  which first requires a variety of uniform bounds.
  For convenience, define $j := i -\tau + 1$
  and a mapping $f$ as
  \begin{align*}
    f(\zeta;v)
    &:= 
    - \ip{x}{v - \wref}^2 + \ip{\gamma x'}{v-\wref}^2
    - 2\ip{\td_\zeta(\wref)}{v - \wref} + 4\eta \|\td_\zeta(\wref)\|^2;
  \end{align*}
  the uniform controls will be on this mapping $f$, which corresponds to the $f$
  in \Cref{fact:conc:markov},
  and is the stochastic term in \cref{fact:td:det}.
  The first step is to bound $|f|$, which is direct:
  using $\eta \leq 1$ and $\gamma \leq 1$,
  \begin{align*}
    \envert{f(\zeta;v)}
    &\leq
    \|x\|^2 \|v-\wref\|^2 + \gamma^2 \|x'\|^2 \|v-\wref\|^2
    \\
    &\qquad
    +
    \|\td_\zeta(\wref)\|^2 + \|v-\wref\|^2
    + 4\eta \|x\|^2 \del{\ip{x-\gamma x'}{\wref} - r}^2
    \\
    &\leq
    B_w^2 + \gamma^2 B_w^2 
    + B_w^2 
    + (1+4\eta)(2 + (2+2\gamma^2)\|\wref\|^2)
    \\
    &\leq
    30 (B_w^2 + \|\wref\|^2)
    \\
    &\leq
    31 B_w^2 =: B_f,
  \end{align*}
  To bound $|f(\zeta_{i+1};v_i) - f(\zeta_{i+1};v_j)|$, first note that
  \begin{align*}
    \|v_i - v_j\|
    &= \enVert{ \sum_{k=j}^{i-1} \eta \del{ \td_{k+1}(v_k) - \td_{k+1}(\wref) + \td_{k+1}(\wref)}  }
    \\
    &= \sum_{k=j}^{i-1} \eta \enVert{x_k \ip{x_k -\gamma x_{k+1}}{v_k - \wref}
    + x_k\del{ \ip{x_k - \gamma x_{k+1}}{\wref} - r_{k+1}}}
    \\
    &\leq
    2 \eta \del{\tau - 1}\del{1 + B_w + \|\wref\|}
    \\
    &\leq
    6 \eta B_w =: B_G,
  \end{align*}
  whereby
  \begin{align*}
    \envert{f(\zeta_{i+1};v_{i}) - f(\zeta_{i+1};v_{j})}
    &=
    \Big|
    - \ip{x_i}{v_i - v_j + v_j - \wref}^2 + \ip{x_i}{v_j-\wref}^2
    \\
    &\quad
    + \ip{\gamma x_{i+1}}{v_i - v_j + v_j - \wref}^2 - \ip{\gamma x_{i+1}}{v_j-\wref}^2
    \\
    &\quad
    + 2\ip{\td_\zeta(\wref)}{v_j - v_i}
    \Big|
    \\
    &=
    \Big|
    - \ip{x_i}{v_i - v_j}^2 - 2\ip{x_i}{v_i-v_j}\ip{x_i}{v_j - \wref}
    \\
    &\quad
    + \gamma^2\ip{x_{i+1}}{v_i - v_j}^2 + 2\gamma^2\ip{x_{i+1}}{v_i - v_j}\ip{x_{i+1}}{v_j-\wref}
    \\
    &\quad
    + 2\ip{x_i}{v_j - v_i}\ip{x_{i}-\gamma x_{i+1}}{\wref} - 2 r_{i+1}\ip{x_i}{v_j - v_i}
    \Big|
    \\
    &\leq B_G^2 + 2 B_w B_{\td}
    + \gamma^2 B_{\td}^2 + 2\gamma^2 B_{\td} B_w + 2 (1+\gamma)B_\td \|\wref\| + 2 B_{\td}
    \\
&\leq
    B_G \del{ 2 + 4\|\wref\| + 4 B_w } + 2 B_G^2
    \\
    &\leq
    42 \eta B_w^2 + 36\eta^2 B_w^2
    \\
    &
    \leq 80 \eta B_w^2
    =: B_i.
  \end{align*}
  Applying \Cref{fact:conc:markov} to each $i\leq t$ and union bounding,
  then with probability at least $1-t\delta$,
  simultaneously for every $i\leq t$,
  \begin{align*}
    \eta \sum_{j<i}\sbr{ f(\zeta_{j+1}; v_j)
    -
    \Ex_{\zeta\sim \pi} f(\zeta;v_j)}
    &
    \leq
    2\eta B_f \del{ 2\tau - 2 + \sqrt{t} + \sqrt{t\tau \ln(1/\delta)}}
    + \eta\sum_{i<t} B_i
    \\
    &\leq
    \eta B_w^2 \del{
    62(2\tau - 2 + \sqrt{t}) + 80t\eta + \sqrt{2t\tau \ln(1/\delta)}}
    \\
    &\leq
    \frac {B_w^2}{1024}\del{124 + 80+ 2}
    \leq
    \frac {B_w^2}{4}.
\end{align*}
  Henceforth, condition away the failure event for the preceding inequalities.

  What remains is to inductively invoke the deterministic TD guarantee from \Cref{fact:td:det}
  to bound the error of $(w_i)_{i\leq t}$ and simultaneously show $w_i = v_i \in S$ for all $i$.

  Throw out the preceding failure event; the remainder of the proof proceeds by induction,
  establishing that the iterate sequence never exits $S$, 
  The proof now proceeds by induction; the claim holds automatically for $v_0$, since
  $v_0 = w_0 \in S$ by construction, thus consider $w_i$ for some $i>0$.
  Invoking the deterministic TD guarantee from \Cref{fact:td:det} to $w_i$,
  together with the inductive hypothesis $(w_j)_{j<i} = (v_j)_{j<i}$,
  the earlier concentration inequalities,
  and lastly 
  making the single appeal to the Markov property on $(x_i)_{i\leq t}$
  to obtain $\Ex_{\zeta_{i+1}} x_{i+1} = \Ex_{\zeta_{i+1}} x_i$,
  note
  \begin{align}
    \|\tilde w_{i} - \wref\|^2
    &\leq
    \|w_0 - \wref\|^2
    + \eta \sum_{j<i} \Big[
    -  \ip{x_j}{w_j - \wref}^2 +  \ip{\gamma x_{j+1}}{w_j-\wref}^2
    \notag\\
    &\qquad - 2\ip{\td_j(\wref)}{w_j - \wref} + 4\eta \|\td_j(\wref)\|^2 \Big]
    \notag\\
    &\leq
    \|w_0 - \wref\|^2
    + \eta \sum_{j<i} \Big[
    -  \ip{x_j}{v_j - \wref}^2 +  \ip{\gamma x_{j+1}}{v_j-\wref}^2
    \notag\\
    &\qquad - 2\ip{\td_j(\wref)}{v_j - \wref} + 4\eta \|\td_j(\wref)\|^2 \Big]
    \notag\\
    &\leq
    \|w_0 - \wref\|^2
    + \frac {B_w^2}{4}
    + \eta \sum_{j<i} \Ex_{\zeta\sim\pi} \Big[
    - \ip{x}{v_j - \wref}^2 + \ip{\gamma x}{v_j-\wref}^2
    \notag\\
    &\qquad - 2\ip{\td_\zeta(\wref)}{v_j - \wref} + 4\eta (2 + 2(1+\gamma^2)\|\wref\|^2) \Big]
    \notag\\
    &\leq
    \|w_0 - \wref\|^2
    + \frac {B_w^2}{2}
    \notag\\
    &\qquad
    + \eta \sum_{j<i} \Ex_{\zeta\sim\pi} \Big[
    - (1-\gamma^2) \ip{x}{w_j - \wref}^2 - 2\ip{\td_\zeta(\wref)}{w_j - \wref} \Big]
\label{eq:td:mid:1}
    ,
  \end{align}
  which rearranges to give the desired TD error bound.
  To control the norms from here, since $\|\Ex_{\zeta\sim\pi} \td_{\zeta}(\wref)\| \leq
  \frac {\|\wref - w_0\|^2} {\sqrt{t}}$, and moreover since $-(1-\gamma^2)\ip{x}{v_i - \wref}^2$ is negative
  and can be dropped, then \cref{eq:td:mid:1} implies
  \begin{align*}
    \|w_{i} - \wref\|^2
    &\leq
    \|v_0 - \wref\|^2
    + \frac {B_w^2}{2}
    + 2 \eta \sum_{j<i} \enVert{\td_\pi(\wref)}\enVert{v_j - \wref}
    \\
    &\leq
    \frac {B_w^2}{16}
    + \frac{B_w^2}{2}
    + \frac {B_w} 8
    \\
    &< B_w^2,
  \end{align*}
  meaning the new unconstrained iterate $w_i$ satisfies $w_i\in S$, whereby
  $v_i$ will also not encounter the constraint since $v_{i-1} = w_{i-1}$ and the update
  is the same, and thus $w_i = v_i \in S$.
\end{proofof}

\end{document}